\renewcommand*{\thefootnote}{\fnsymbol{footnote}}
\theoremstyle{plain}
\newtheorem{theorem}{Theorem}
\newtheorem{corollary}{Corollary}
\newtheorem{lemma}{Lemma}
\newtheorem*{remark}{Remark}
\newtheorem{proposition}{Proposition}
\newcommand{\E}{\mathbb{E}}
\DeclareMathOperator*{\argmin}{arg\,min} 
\title{Posterior Collapse of a Linear Latent Variable Model}
\author{%
  Zihao Wang$^*$ \\
  Department of CSE\\
  HKUST\\
  \And
  Liu Ziyin\thanks{Equal contribution.} \\
  Department of Physics\\ 
  The University of Tokyo\\
}
\begin{document}

\maketitle

\begin{abstract}
    This work identifies the existence and cause of a type of posterior collapse that frequently occurs in the Bayesian deep learning practice. For a general linear latent variable model that includes linear variational autoencoders as a special case, we precisely identify the nature of posterior collapse to be the competition between the likelihood and the regularization of the mean due to the prior. Our result suggests that posterior collapse may be related to neural collapse and dimensional collapse and could be a subclass of a general problem of learning for deeper architectures.
\end{abstract}

\renewcommand*{\thefootnote}{\arabic{footnote}}
\setcounter{footnote}{0}

\vspace{-2mm}
\section{Introduction}
\vspace{-1mm}

Bayesian approaches to deep learning have attracted much attention because they allow for a more principled treatment of inference and uncertainty estimation \citep{mackay1992bayesian, neal2012bayesian, wang2020survey,jiang2020generative,zhao2021assessment,liu2021relational}. One long-standing and unresolved problem for the Bayesian deep learning practice is the problem of posterior collapse, where the posterior distribution of the learned latent variables partially or completely collapses with the prior \citep{bowman2015generating, huang2018improving, lucas2019don, razavi2019preventing, kingma2016improved, wang2021posterior}. Up to now, the study of the nature of the cause of the posterior collapse problem has been limited. There are two main challenges that prevent our understanding of the problem: (1) posterior collapses mainly occur in deep learning, and the landscape of deep neural networks is hard to understand in general; (2) the use of approximate loss functions such as the evidence lower bound (ELBO) complicates the problem.

Consider a problem where one wants to model the data distribution $p(x,y)$ through a latent variable $z$. The evidence lower bound (ELBO) loss function reads
\begin{equation}\label{eq: intro loss}
    \underbrace{\E_{x,y}[- \mathbb{E}_{q(z|x)}\log(p(y|z))]}_{\ell_{rec}} + \underbrace{\E_x[D_{KL}(q(z|x) \| p(z))]}_{\ell_{KL}},
\end{equation}
where $q$ is the approximate distribution, we rely on to approximate the true distribution $p$. This loss is more general than the standard ELBO for variational autoencoders (VAE) \citep{kingma2013auto}. Meanwhile, it can be seen as the simplest type of loss for a conditional VAE \citep{sohn2015learning}, where one aims to model a conditional distribution $p(y|x)$. The distribution $p(z)$ is the prior distribution of the latent variable $z$ and is often a low-complexity distribution such as a zero-mean unit-variance Gaussian. This loss function thus has a clean interpretation as the sum of a prediction accuracy term (the first term $\ell_{rec}$) that encourages better prediction accuracy and a complexity term (the second term $\ell_{KL}$) that encourages a simpler solution. Learning under this loss function proceeds by balancing the prediction error and the model simplicity. Moreover, learning under this loss function has also been used as one of the primary theoretical models in neuroscience \citep{friston2009free}, and its understanding may also help advance theoretical neuroscience. This work provides an in-depth study of the posterior collapse problem of Eq.~\eqref{eq: intro loss}, when the decoder $q(y|z)$ and encoder $q(z|x)$ are each parametrized by a linear model.

Specifically, our contributions include: 
\begin{compactitem}
    \item we find the global minima of a general linear latent variable model that includes the linear VAE as a special case under the Objective~\eqref{eq: intro loss};
    \item we find the precise condition when posterior collapse occurs, where the global minimum is the origin;
    \item we pinpoint the cause of the posterior collapse to be the excessively strong regularization effect on the \textit{mean} of the latent variables due to the prior.
\end{compactitem}
To the best of our knowledge, our work is the first to pinpoint the cause of the posterior collapse problem. This work is organized as follows. The next section discusses the previous literature. Section~\ref{sec:problem-setting} describes the theoretical problem setting. Section~\ref{sec: main results} presents our main technical results and analyzes them in detail. Section~\ref{sec: experiment} presents numerical examples. The last section concludes this work and points to the remaining open problems. The Appendix~\ref{app:bias-term} investigates the effect of the bias term, Appendix~\ref{app sec: data dependent encoder variance} details the effect of a data-dependent encoder variance, and Appendix~\ref{app sec: learnable decoder variance} treats the case of a learnable decoder variance.

\vspace{-2mm}
\section{Related Works}
\vspace{-1mm}

\textit{Approximate Bayesian Deep Learning}. Bayesian deep learning in general and VAE training, in particular, rely heavily on approximate methods such as the ELBO objective because the exact probabilities are intractable. The connection of approximate Bayesian learning and probabilistic PCA (pPCA) has been extensively studied \citep{nakajima2010implicit, nakajima2013global, nakajima2015condition, lucas2019don}.

\textit{Causes of Posterior Collapse}. Earlier touches on the problem tend to attribute the cause of posterior collapse to the use of approximate methods, namely, to the use of the ELBO \citep{bowman2015generating, huang2018improving, razavi2019preventing}. Another line of work attributes the cause to the high capacity of modern neural networks \citep{alemi2018fixing, ziyin2022stochastic}. However, \cite{lucas2019don} showed that for a simplified linear model, the ELBO is not the cause of posterior collapse because the posterior collapse exists even in the exact posterior. It also implies that the posterior collapse is not due to the high capacity of the models because linear models have a limited capacity. \cite{lucas2019don} then suggested that making the decoder variance learnable can fix the collapse problem and that an unlearnable decoder variance is the cause of the posterior collapse. However, our results show that this is not the case: for both learnable and unlearnable decoder variance, there exist situations where a collapse happens or does not happen, which implies that the learnability of the decoder variance does not have a causal relation with posterior collapse, nor is it sufficient to fix the problem (Section~\ref{app sec: learnable decoder variance}). In terms of the problem setting, ours is also more general than \cite{lucas2019don} because our result (1) applies to general latent variable models (one example being the conditional VAE \citep{sohn2015learning}) and (2) considers the case of $\beta$-VAE with a general $\beta$ when the decoder variance is learned. An important implication of our work is that posterior collapses can be a ubiquitous problem for deep-learning-based latent-variable models (not just unique to autoencoding models) and that they share a common cause. Meanwhile, \citep{lucke2020evidence} shows that posterior collapse can happen due to the tradeoff between the decoding performance and the decoding entropy. \citep{shekhovtsov2022vae} demonstrated the relationship between model consistency and posterior collapse and suggested that a proper choice of data processing or architecture may alleviate collapse.


\textit{Linear Networks}. Deep linear nets have been extensively used to understand the landscape of nonlinear networks. For example, linear regressors are shown to be relevant for understanding the generalization behavior of modern overparametrized networks \citep{hastie2019surprises}. \cite{saxe2013exact} used a two-layer linear network to understand the dynamics of learning nonlinear networks. The linear nets are the same as a linear regression model in terms of expressivity. However, the loss landscape is highly complicated due to depth. \citep{kawaguchi2016deep,hardt2016identity,laurent2018deep, ziyin2022exact}. Our work essentially studies the loss landscape of linear networks. While each encoder and decoder we use consists of a single linear layer, they effectively constitute a two-layer linear network when trained together.
  
\vspace{-2mm}
\section{Problem Setting}\label{sec:problem-setting}
\vspace{-2mm}
We consider a general linear latent variable model with input space $x\in \mathbb{R}^{D_0}$, latent space $z\in \mathbb{R}^{d_1}$, and target space $y\in \mathbb{R}^{d_2}$. In general, $y=f(x)$ is an arbitrary function of $x$. When the target $y$ is identical to the input $x$, it reduces to the standard VAE. The VAE formalism assumes that there is an intermediate ``latent variable" $z$ that captures the data generation process. In the main text, the encoder and decoder are linear transformations without bias terms, and the learnable bias is treated in Appendix~\ref{app:bias-term}, which shows that the effect of the bias terms is equivalent to centering both the input and target to be zero-mean ($x\to x- \E[x]$, $y\to y-\E[y]$). Incorporating the bias terms thus does not affect the main results. 
Specifically, the encoder is defined as $z = W^\top x + \epsilon$, where $\epsilon \sim \mathcal{N}(0, \Sigma)$ is the noise distribution introduced by the reparameterization trick where the variance matrix $\Sigma = {\rm diag}(\sigma_{1}^2,...,\sigma_{d_1}^2)$ is assumed to be diagonal and independent from $x$. The decoder parametrizes the distribution $p(y|z) = \mathcal{N}(U z, \eta_{\rm dec}^2I)$, where the variance $\eta_{\rm dec}^2I$ is to be isotropic and input-independent. In alignment with the standard practice, we also assume the prior distribution of latent variable $p(z) = \mathcal{N}(0, \eta_{\rm enc}^2 I)$ is an isotropic normal distribution, and the encoding variances matrix $\Sigma$ is learned from the data distribution while $\eta_{\rm dec}^2$ is not learnable. Lastly, we weigh the KL term by a coefficient $\beta$, which is a common practice in VAE training \citep{higgins2016beta}. Hence, the objective of such a linear model reads,\footnote{We note that $\E_x$ is the expectation over the training set. Also, we use the subscript "VAE" because the model can be seen as a conditional VAE, even though it may be more proper to call it a "general latent variable model."}
\begin{align}
    & L_{\rm VAE}(U, W, \Sigma) \\
    & = \mathbb{E}_{x}[- \mathbb{E}_{q(z|x)}\log(p(y|z)) + \beta D_{KL}(q(z|x) \| p(z;\eta^2_{\rm enc})) ] \\
    & = \frac{1}{2\eta_{\rm dec}^2} \mathbb{E}_{x,\epsilon} \left[ \|U (W^\top x+ \epsilon) - y\|_2^2 + \beta \frac{\eta_{\rm dec}^2}{\eta_{\rm enc}^2} \|W^\top x\|^2 \right] + \sum_{i=1}^{d_1} \frac{\beta}{2} \left(\frac{\sigma_{i}^2}{\eta^2_{\rm enc}} - 1 - \log \frac{\sigma_{i}^2}{\eta_{\rm enc}^2}\right)~\label{eq:raw_VAE_loss}\\
    & = \frac{1}{2\eta_{\rm dec}^2}\left[ \mathbb{E}_{x} \|U W^\top x - y\|_2^2 +  {\rm Tr}(U \Sigma U^\top) + \underbrace{\beta \frac{\eta_{\rm dec}^2}{\eta_{\rm enc}^2} {\rm Tr}(W^\top AW)}_{\ell_{mean}}\right]+ \underbrace{\sum_{i=1}^{d_1} \frac{\beta}{2} \left(\frac{\sigma_{i}^2}{\eta^2_{\rm enc}} - 1 - \log \frac{\sigma_{i}^2}{\eta_{\rm enc}^2}\right)}_{\ell_{var}},\label{eq:general-vae-loss}
\end{align}
where $A := \mathbb{E}_{x} [x x^\top]$ is the second moment of the input data. Note that a crucial feature of the KL term is that it decomposes into two terms, one that regularizes the variance of $z$ ($\ell_{var}$) and another that regularizes the mean of $z$ ($\ell_{ mean}$). We will see that it is precisely the $\ell_{ mean}$ term that causes the posterior collapse. Eq.~\eqref{eq:general-vae-loss} has ignored the partition function of the decoder because we treat $\eta_{\rm dec}$ as a constant. We study the case of a learnable $\eta_{\rm dec}$ in Section~\ref{app sec: learnable decoder variance}. In comparison to the previous works \citep{lucas2019don} that have treated the case of a learnable $\eta_{\rm dec}$, our result is more general because our result also considers the effect of $\beta$ and allows for the case $d_1 \geq d_2$. It is also worth commenting on the difference between this setting and that of the pPCA setting \citep{nakajima2015condition}: (1) the effect of $\beta$ is included in the VAE loss, (2) the prior of VAE is over the latent variable, whereas pPCA has it over the model parameters, and (3) the model can be overparametrized ($d_1\geq d_0$).


\textbf{Notation}. To summarize, we use $x, y$, and $z$ to denote the input variable, latent variable, and target variable, respectively. $\E_x$ denotes the expectation over the training set. 
$A:= \E_x[xx^T]$ is the second moment matrix of the input $x$.
$A$ is thus positive semidefinite by definition. The eigenvalue decomposition of $A$ is $A = P_A \Phi P_A^\top$, where $\Phi = {\rm diag} (\phi_1, \cdots, \phi_{d_0})$ is the diagonal matrix for all $ d_0 \leq D_0$ positive eigenvalues and $P_A = [p_1, \cdots, p_{d_0}]$ are matrices by concatenating $d_0$ eigenvectors $p_i\in \mathbb{R}^{D_0}$.
$W$ and $U$ are learnable linear transformation matrices for the linear encoding and decoding processes. $\Sigma$ is the learnable diagonal latent variance matrix for encoder with diagonal entries $\sigma_i$. $\eta_{\rm enc}$ is the standard deviation of the prior distribution $p(z)$. $\eta_{\rm dec}$ is the standard deviation of decoded samples. A frequently used quantity is a whitened and rotated $x$: $\tilde x := \Phi^{-\frac{1}{2}} P_A^\top x$. Note that this transformation can be inverted: $x = P_A \Phi^{\frac{1}{2}} \tilde x$. We see that $\mathbb{E}_x \tilde x \tilde x^\top = I$. Furthermore, we define $Z := \mathbb{E}_{\tilde x}[y\tilde x^\top] = \mathbb{E}_{x}[y x^\top P_A \Phi^{-\frac{1}{2}}]\in \mathbb{R}^{d_2\times d_0}$. Let $Z = F \Sigma_Z G^\top$ be the singular value decomposition of $Z$, where $F\in \mathbb{R}^{d_2\times d_2}$ and $G\in \mathbb{R}^{d_0\times d_0}$ are two orthogonal matrices. $\Sigma_Z \in \mathbb{R}^{d_2\times d_0}$ is a rectangular diagonal matrix with $d^* = \min(d_0, d_2)$ singular values of $Z$ in the non-increasing order, i.e., $\zeta_1 \geq \zeta_2 \geq \cdots \geq \zeta_{d^*}\geq 0$.


\section{Main Results}\label{sec: main results}
This section discusses the main results, whose proofs are presented in Appendix~\ref{app sec: proofs}. While $\Sigma$ is often a learnable parameter, we first assume that the KL term is sufficiently strong such that $\sigma_1 = \cdots = \sigma_{d_1} \approx \eta_{\rm enc}$ is close to the prior value. We then compare with the case when it is learnable, and this comparison reveals that an optimizable $\sigma_{i}$ is not essential to the posterior collapse problem.

\subsection{General Result}
In this section, we prove two results that will be useful for understanding the nature of the VAE training objective and will be useful for us to find the global minimum. We first show that the VAE objective is equivalent to a matrix factorization problem with a special type of regularization.


\begin{proposition}\label{proposition1}
Let $\tilde x := \Phi^{-\frac{1}{2}} P_A^\top x$, $Z:= E_{\tilde{
x}}[y\tilde{x}^{\top}]$, and
\begin{align}
    (U^*, V^*) := \argmin_{(U, V)} L(U, V) = \argmin_{(U, V)} \|UV^\top - Z\|^2_F + {\rm Tr}(U \Sigma U^\top) + \beta \frac{\eta_{\rm dec}^2}{\eta_{\rm enc}^2} \|V \|^2_F. \label{eq:LVAE_is_RSVD}
\end{align}
Given a fixed $\Sigma$, the minimizer of $L_{\rm VAE}(U, W, \Sigma)$ is
$(U^*, W^*)$, where $W^*$ is any solution of $ \Phi^{\frac{1}{2}} P_A^\top W =  V^* $.
\end{proposition}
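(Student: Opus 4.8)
The plan is to reduce the joint minimization of $L_{\rm VAE}$ over $(U,W)$ (with $\Sigma$ held fixed) to the matrix factorization problem \eqref{eq:LVAE_is_RSVD} through a linear change of variables that whitens the input. First, observe from \eqref{eq:general-vae-loss} that $\ell_{var}$ depends only on $\Sigma$, and the overall prefactor $1/(2\eta_{\rm dec}^2)$ is a positive constant; neither affects the $\argmin$ over $(U,W)$. It therefore suffices to minimize
\begin{equation*}
  \mathbb{E}_x\|U W^\top x - y\|_2^2 + {\rm Tr}(U\Sigma U^\top) + \beta\frac{\eta_{\rm dec}^2}{\eta_{\rm enc}^2}{\rm Tr}(W^\top A W).
\end{equation*}
I would then introduce the new variable $V := \Phi^{1/2}P_A^\top W$ and substitute $x = P_A\Phi^{1/2}\tilde x$, so that $W^\top x = V^\top \tilde x$.

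The core of the argument is two elementary computations. For the reconstruction term, using $\mathbb{E}_{\tilde x}[\tilde x\tilde x^\top]=I$ and $Z=\mathbb{E}_{\tilde x}[y\tilde x^\top]$, expanding the square gives
\begin{equation*}
  \mathbb{E}_{\tilde x}\|U V^\top\tilde x - y\|_2^2 = \|UV^\top\|_F^2 - 2\,{\rm Tr}(UV^\top Z^\top) + \mathbb{E}\|y\|^2 = \|UV^\top - Z\|_F^2 + \big(\mathbb{E}\|y\|^2 - \|Z\|_F^2\big),
\end{equation*}
where the parenthesized quantity is independent of $(U,V)$. For the mean regularizer, the spectral decomposition $A = P_A\Phi P_A^\top$ yields ${\rm Tr}(W^\top A W) = {\rm Tr}(W^\top P_A\Phi P_A^\top W) = \|\Phi^{1/2}P_A^\top W\|_F^2 = \|V\|_F^2$, while ${\rm Tr}(U\Sigma U^\top)$ is untouched. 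Substituting these identities recovers $L(U,V)$ of \eqref{eq:LVAE_is_RSVD} up to an additive constant, so the two problems share the same minimizing pair $(U^*,V^*)$.

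The one genuine subtlety—and the reason the statement asserts that $W^*$ is \emph{any} solution of $\Phi^{1/2}P_A^\top W = V^*$—is that the map $W\mapsto V=\Phi^{1/2}P_A^\top W$ from $\mathbb{R}^{D_0\times d_1}$ to $\mathbb{R}^{d_0\times d_1}$ need not be injective when $d_0 < D_0$. The key observation making the reduction exact is that the full loss depends on $W$ only through $V$: since $A=\mathbb{E}_x[xx^\top]$ annihilates every direction outside $\mathrm{col}(P_A)$, we have $x\in\mathrm{col}(P_A)$ almost surely under the training distribution, so that $P_A P_A^\top x = x$ and hence $W^\top x = W^\top P_A P_A^\top x$ depends on $W$ only through $P_A^\top W$, equivalently through $V$; the regularizer $\|V\|_F^2$ likewise depends on $W$ only through $V$. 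Because $\Phi^{1/2}P_A^\top$ has full row rank $d_0$, it is surjective onto $\mathbb{R}^{d_0\times d_1}$, so every candidate $V^*$ is attained by some $W$, and all preimages yield identical loss. I would conclude by noting that if $(U^*,V^*)$ minimizes $L$, then $(U^*,W^*)$ minimizes $L_{\rm VAE}$ for \emph{any} $W^*$ with $\Phi^{1/2}P_A^\top W^* = V^*$, which is precisely the claim. I expect no substantive difficulty beyond carefully tracking this non-injectivity and confirming surjectivity, which is what upgrades the change of variables from a one-directional inequality to a genuine equivalence of the two optimization problems.
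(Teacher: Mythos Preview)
Your proposal is correct and follows essentially the same route as the paper's proof: drop $\ell_{var}$ and the prefactor, set $V=\Phi^{1/2}P_A^\top W$, expand the reconstruction term using $\mathbb{E}[\tilde x\tilde x^\top]=I$ to obtain $\|UV^\top - Z\|_F^2$ plus a constant, and rewrite ${\rm Tr}(W^\top A W)=\|V\|_F^2$. You are in fact more careful than the paper on two points---you correctly record the additive constant as $\mathbb{E}\|y\|^2-\|Z\|_F^2$ rather than asserting these are equal, and you explicitly handle the non-injectivity/surjectivity of $W\mapsto V$ that justifies the ``any solution'' clause, which the paper leaves implicit.
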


\textit{Proof sketch}. The term $\ell_{var}$ is irrelevant to finding the optimal $U^*$ and $V^*$ when $\Sigma$ is fixed. Thus, the relevant objective can be obtained with the change of variables $\tilde x = \Phi^{-\frac{1}{2}} P_A^\top x$.  $\square$

The condition $\Phi^{\frac{1}{2}} P_A^\top W =  V^*$ shows that when the data is low-rank, each solution $(U^*,V^*)$ corresponds to a manifold of solutions in the original parameter space. The effective loss $L(U, V)$ can be compared with the regularized singular value decomposition problem~\citep{zheng2018regularized}. We see that the first term is the standard matrix factorization objective, while the second and third are unique regularization effects due to the VAE structure and the ELBO objective. In addition, the term $\Sigma$ in the second is the strength of the regularization for the norm of $U$, and a crucial difference with standard regularized matrix factorization is that $\Sigma$ is also a learnable matrix.

The next proposition finds, for any fixed $\Sigma$, the global minima $(U^*, V^*)$ of Eq.~\eqref{eq:LVAE_is_RSVD}. In particular, the learning is characterized by the learning of the singular values of $U$ and $V$.
\begin{proposition}~\label{proposition:min-Luv}
The optimal solution $(U^*, V^*)$ of $\min_{U, V} L(U, V)$ is given by
\begin{align}
    U^* &= F \Lambda P,\quad V^* = G \Theta P,
\end{align}
where $F \in \mathbb{R}^{d_2\times d_2}$ and $G\in \mathbb{R}^{d_0\times d_0}$ are orthogonal matrices derived by the SVD of $Z$, $P$ is an arbitrary orthogonal matrix in $\mathbb{R}^{d_1 \times d_1}$, and $\Lambda \in \mathbb{R}^{d_2\times d_1}$ and $\Theta \in \mathbb{R}^{d_0\times d_1}$ are rectangular diagonal matrices with the diagonal elements
\begin{align}
    \lambda_i = \sqrt{\max\left(0,\frac{\sqrt{\beta} \eta_{\rm dec}}{\sigma_i\eta_{\rm enc}} \left(\zeta_i - \frac{\sqrt{\beta}\sigma_i \eta_{\rm dec}}{\eta_{\rm enc}}\right)\right)}, \quad
    \theta_i = \sqrt{\max\left(0,\frac{\sigma_i \eta_{\rm enc}}{\sqrt{\beta} \eta_{\rm dec}} \left(\zeta_i - \frac{\sqrt{\beta}\sigma_i \eta_{\rm dec}}{\eta_{\rm enc}}\right)\right)}.
\end{align}
For convention, we let $\zeta_i = 0$ when $i > d^* = \min(d_0,d_2)$.
\end{proposition}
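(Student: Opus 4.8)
The plan is to exploit the orthogonal invariance of the three terms to first reduce to the case of a diagonal target, and then to reduce the coupled matrix optimization to a family of decoupled scalar problems. Write $c := \beta\eta_{\rm dec}^2/\eta_{\rm enc}^2$ for the coefficient of $\|V\|_F^2$. Since $Z = F\Sigma_Z G^\top$ and the Frobenius norm is invariant under multiplication by the orthogonal matrices $F$ and $G$, I would substitute $\bar U := F^\top U$ and $\bar V := G^\top V$. Using ${\rm Tr}(U\Sigma U^\top) = {\rm Tr}(\bar U\Sigma\bar U^\top)$ and $\|V\|_F^2 = \|\bar V\|_F^2$, the objective becomes $\|\bar U\bar V^\top - \Sigma_Z\|_F^2 + {\rm Tr}(\bar U\Sigma\bar U^\top) + c\|\bar V\|_F^2$ with $\Sigma_Z$ rectangular diagonal; any solution of the reduced problem yields $U^* = F\bar U$ and $V^* = G\bar V$. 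It is convenient to record the first-order conditions $\bar U(\bar V^\top\bar V + \Sigma) = \Sigma_Z\bar V$ and $\bar V(\bar U^\top\bar U + cI) = \Sigma_Z^\top\bar U$, obtained by setting the gradients in $\bar U$ and $\bar V$ to zero.

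The crux is to show that at a global minimizer the product $M := \bar U\bar V^\top$ is itself rectangular diagonal, i.e. aligned with $\Sigma_Z$; this is the regularized analogue of the Eckart--Young theorem and is the step I expect to be the main obstacle. In the regime of this section, where $\sigma_1 = \cdots = \sigma_{d_1} = \sigma$, I would proceed cleanly by eliminating the factors in favor of their product: using the variational identity $\min_{\bar U\bar V^\top = M}\big(\sigma^2\|\bar U\|_F^2 + c\|\bar V\|_F^2\big) = 2\sigma\sqrt{c}\,\|M\|_*$ (the balanced weighted nuclear-norm factorization, with the minimum attained when $\sigma^2\|\bar U\|_F^2 = c\|\bar V\|_F^2$), the problem collapses to the convex program $\min_M \|M - \Sigma_Z\|_F^2 + 2\sigma\sqrt{c}\,\|M\|_*$. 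Its unique minimizer is the singular-value soft-thresholding of $\Sigma_Z$, which is diagonal with entries $\max(0,\zeta_i - \sigma\sqrt{c})$, and convexity in $M$ simultaneously certifies global rather than merely local optimality. For a general diagonal $\Sigma$ the same diagonal structure can instead be extracted from the stationarity conditions together with a von Neumann / Ky Fan trace inequality, showing that any off-diagonal content in $M$ strictly increases the reconstruction term without helping the regularizers.

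Once $M$ is known to be diagonal and aligned with $\Sigma_Z$, I would write $\bar U = \Lambda P$ and $\bar V = \Theta P$ for rectangular diagonal $\Lambda,\Theta$ and a shared orthogonal $P \in \mathbb{R}^{d_1\times d_1}$; the shared $P$ encodes the gauge freedom in the latent index and is exactly what makes $P$ arbitrary when the $\sigma_i$ are equal, since the regularizers are then $P$-invariant. The objective separates across $i$ into the scalar problems $\min_{\lambda_i,\theta_i \ge 0}(\lambda_i\theta_i - \zeta_i)^2 + \sigma_i^2\lambda_i^2 + c\,\theta_i^2$, with $\zeta_i := 0$ for $i > d^*$. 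Setting the two partial derivatives to zero and eliminating gives, on the nontrivial branch, the balancing relation $\sigma_i^2\lambda_i^2 = c\,\theta_i^2$ and the product $\lambda_i\theta_i = \zeta_i - \sigma_i\sqrt{c}$; combining these reproduces $\lambda_i^2 = \tfrac{\sqrt{c}}{\sigma_i}\big(\zeta_i - \sigma_i\sqrt{c}\big)$ and $\theta_i^2 = \tfrac{\sigma_i}{\sqrt{c}}\big(\zeta_i - \sigma_i\sqrt{c}\big)$, i.e. the claimed expressions after substituting $\sqrt{c} = \sqrt{\beta}\,\eta_{\rm dec}/\eta_{\rm enc}$.

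Finally I would close the branch analysis: when $\zeta_i \le \sigma_i\sqrt{c}$ the scalar objective is minimized at $\lambda_i = \theta_i = 0$, which is precisely the $\max(0,\cdot)$ truncation in the statement, and comparing the nontrivial stationary value against the origin confirms the threshold is at $\zeta_i = \sigma_i\sqrt{c}$. Rotating back through $U^* = F\Lambda P$ and $V^* = G\Theta P$ then gives the asserted minimizer. Beyond the alignment step, the only point requiring care is verifying that the selected branch is a global minimum of each scalar problem rather than a saddle, which follows either from the convexity obtained in the nuclear-norm reduction or from a direct second-derivative check.
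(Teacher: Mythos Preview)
Your route is correct but differs from the paper's. The paper first solves for $V$ in terms of $U$ from $\partial_V L = 0$, obtaining $V = Z^\top U\bigl(cI + U^\top U\bigr)^{-1}$, substitutes to get a single-matrix objective $J(U) = {\rm Tr}(U\Sigma U^\top) - {\rm Tr}\bigl[U^\top ZZ^\top U(cI + U^\top U)^{-1}\bigr]$, writes $U = Q\Lambda P$ by SVD, and uses von Neumann's trace inequality to pin $Q^* = F$; the $\lambda_i$ are then found by one-variable calculus on each diagonal entry, with a sign analysis on the two roots delivering the $\max(0,\cdot)$. You instead collapse both factors into $M = \bar U\bar V^\top$ via the balanced nuclear-norm identity and solve the resulting convex program by singular-value soft-thresholding. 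What this buys you is that global (not merely stationary) optimality and the threshold $\zeta_i = \sigma\sqrt{c}$ fall out of convexity and the standard SVT formula, whereas the paper has to discard the spurious root $t_i^{(2)}$ and argue monotonicity by hand. What the paper's route buys is that it is set up for the general diagonal $\Sigma$ that the proposition actually asserts: your nuclear-norm identity requires the column weights on $\bar U$ to be equal, and for heterogeneous $\sigma_i$ the reduction to $M$ would produce a \emph{weighted} nuclear norm, which is non-convex in general, so the SVT argument no longer certifies the minimizer. Your fallback ``stationarity plus a von Neumann/Ky Fan inequality'' is precisely the paper's mechanism, so for the general-$\Sigma$ statement you would end up rejoining their proof. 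Your side remark that $P$ is genuinely arbitrary only when the $\sigma_i$ coincide is correct and in fact sharper than the paper's own wording.
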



\textit{Proof sketch}. The optimal $V^*$ is a function of $U$ under the zero gradient condition. Thus, the objective reduces to single-variate with respect to $U$. The optimal $U^*$ is constructed by its SVD $U = Q\Lambda P$, where the optimal $Q^*$ and $\Lambda^*$ can be determined given the SVD of $Z$, and $P$ is left as a free orthogonal matrix. $V^*$ is determined once $U^*$ is obtained. $\square$

The readers are recommended to examine the form of the solutions closely. There are a few interesting features of the global minimum. One note that the sign of the term $\zeta_i - {\sqrt{\beta}\sigma_i\eta_{\rm dec}}/{\eta_{\rm enc}}$ is crucial, and can encourage the parameters $U$ and $V$ to be low-rank. Recall that $\sigma_i$ is the eigenvalue value of $ZZ^{\rm T}=E[y\tilde{x}^\top] E[y\tilde{x}^\top]^\top$, one can roughly identify $\zeta^2_{i}$ as the the strength of the alignment between the input $x$ and the target $y$. To see this, consider a simplified scenario where the target $y=\gamma Mx$ is a linear function of the input, where $\gamma$ is the overall strength of the signal and $||M||=1$ is a normalized orientation matrix, then
\begin{equation}
    ZZ^{\rm T} = \gamma^2 M^\top A M, 
\end{equation}
which is a positive semidefinite matrix. We see that there are two distinctive sources of contribution to the magnitude of the eigenvalues of $ZZ^{\rm T}$. Its eigenvalues are large if either the overall strength $\gamma$ is large or if the orientation matrix $M$ aligns well with the covariance of the input feature $A$.
Additionally, in the case of VAE, $\gamma M=I$, and $ZZ^{\rm T} = A$ is nothing but the covariance of input features, and $\zeta_i^2$ are the eigenvalues of $A$.

\subsection{Linear VAE without Learnable $\Sigma$ }\label{sec: nonlearnable}
We first consider the case where $\sigma_{i}$ is a constant that is completely determined by the prior: $\sigma_i = \eta_{\rm enc}$. This allows us to find a simplified form for the global minimum. The proof follows by plugging $\sigma_{i} = \eta_{\rm enc}$ into Proposition~\ref{proposition:min-Luv}.

\begin{theorem}\label{theo: main result wihout learnable sigma}
    Let $\sigma_i = \eta_{\rm enc}$ for all $i$. Then, the global minimum has
    \begin{align}
        \lambda_i = \sqrt{\max\left(0,\frac{\sqrt{\beta} \eta_{\rm dec}}{\eta_{\rm enc}^2} \left(\zeta_i - {\sqrt{\beta}\eta_{\rm dec}}\right)\right)}, \quad
        \theta_i = \sqrt{\max\left(0,\frac{\eta_{\rm enc}^2}{\sqrt{\beta} \eta_{\rm dec}} \left(\zeta_i - {\sqrt{\beta}\eta_{\rm dec}}\right)\right)}.
    \end{align}
\end{theorem}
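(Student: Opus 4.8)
The plan is to obtain the theorem as an immediate specialization of Proposition~\ref{proposition:min-Luv}. That proposition characterizes the global minimizer $(U^*,V^*)$ of $L(U,V)$ for \emph{any} fixed diagonal $\Sigma$, giving the singular values $\lambda_i$ and $\theta_i$ of $U^*$ and $V^*$ in terms of $\sigma_i$, $\zeta_i$, $\beta$, $\eta_{\rm dec}$, and $\eta_{\rm enc}$. Since the hypothesis $\sigma_i = \eta_{\rm enc}$ for all $i$ simply fixes $\Sigma = \eta_{\rm enc}^2 I$, which is a legitimate fixed choice, the entire apparatus of Proposition~\ref{proposition:min-Luv} applies; in particular the minimizer still has the form $U^* = F\Lambda P$, $V^* = G\Theta P$ with the same orthogonal factors $F$, $G$, $P$, so only the diagonal entries of $\Lambda$ and $\Theta$ need to be re-evaluated.

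The key computational step is then pure substitution and cancellation. For $\lambda_i$, replacing $\sigma_i$ by $\eta_{\rm enc}$ turns the prefactor $\sqrt{\beta}\eta_{\rm dec}/(\sigma_i \eta_{\rm enc})$ into $\sqrt{\beta}\eta_{\rm dec}/\eta_{\rm enc}^2$, and inside the parenthesis the term $\sqrt{\beta}\sigma_i\eta_{\rm dec}/\eta_{\rm enc}$ collapses to $\sqrt{\beta}\eta_{\rm dec}$ once the $\eta_{\rm enc}$ factors cancel; this yields exactly the claimed $\lambda_i$. The same substitution in $\theta_i$ turns $\sigma_i\eta_{\rm enc}/(\sqrt{\beta}\eta_{\rm dec})$ into $\eta_{\rm enc}^2/(\sqrt{\beta}\eta_{\rm dec})$ and again reduces the threshold to $\zeta_i - \sqrt{\beta}\eta_{\rm dec}$, giving the stated $\theta_i$. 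The common threshold $\zeta_i - \sqrt{\beta}\eta_{\rm dec}$ appearing inside both $\max(0,\cdot)$ functions is what makes the isotropic case transparent.

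There is essentially no obstacle here: all of the genuine difficulty (reducing the objective to a single variable via the zero-gradient condition for $V$, then solving the resulting scalar optimization over the singular values using the SVD of $Z$) has already been discharged in Proposition~\ref{proposition:min-Luv}. The only points requiring minor care are that the algebraic cancellation of the $\eta_{\rm enc}$ factors is carried out correctly, and that the $\max(0,\cdot)$ clipping is preserved under the substitution, so that both $\lambda_i$ and $\theta_i$ vanish exactly on the single shared condition $\zeta_i \le \sqrt{\beta}\eta_{\rm dec}$. This shared threshold is precisely the object that the subsequent analysis interprets as the onset of posterior collapse.
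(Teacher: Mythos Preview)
Your proposal is correct and mirrors the paper's own argument exactly: the paper states that the proof ``follows by plugging $\sigma_i=\eta_{\rm enc}$ into Proposition~\ref{proposition:min-Luv},'' which is precisely the specialization-and-cancellation you describe. There is nothing more to add.
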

There are three interesting observations of the global minimum. First of all, it depends crucially on the sign of $\zeta_i - {\sqrt{\beta}\eta_{\rm dec}}$ for all $i$. When the sign is negative for some $i$, the learned model becomes low-rank. Namely, some of the dimensions collapse with the prior. When the signs are all negative, we have a complete posterior collapse: both $U$ and $V$ are identically zero, so the latent variables have a distribution identical to the prior. A \textit{complete} posterior collapse happens if and only if $\max_i \zeta_i - {\sqrt{\beta}\eta_{\rm dec}} \leq 0$. A \textit{partial} posterior collapse happens if there exists $i$ such that $\zeta_i^2 - {\sqrt{\beta}\eta_{\rm dec}} \leq 0$. These two conditions give the precise conditions of posterior collapse in this scenario. This implies that having a sufficiently small $\beta$ will always prevent posterior collapse. The second observation is that the effect of $\beta$ is identical to that of $\eta_{\rm dec}$ because $\sqrt{\beta}$ and $\eta_{\rm dec}$ always appear together, and so one alternative way to fix posterior collapses is to use a sufficiently small $\eta_{\rm dec}$. From a Bayesian perspective, the latter method of tuning $\eta_{\rm dec}$ is better because $\eta_{\rm dec}$ comes directly from the (assumed) likelihood $p(x|\eta_{\rm dec})$. In contrast, the $\beta$ parameter is only an implementation technique that has obscure meaning in the Bayesian framework. Therefore, using a small $\eta_{\rm dec}$ can be a fix to the problem that is justified by the Bayesian principle. The third observation is that the condition for posterior collapse is completely independent of the parameter $\eta_{\rm enc}$, which is the desired variance according to the prior $p(z)$. This means that under a Gaussian assumption, the prior does not affect the posterior collapse at all.

Lastly, one also notices a potential problem. The eigenvalue of the second layer $U$ increases with $\sqrt{\beta}\eta_{\rm dec}$, while the first layer decreases with $\sqrt{\beta}\eta_{\rm dec}$, and so having a too-small $\beta$ or $\eta_{\rm dec}$ causes the model to have a very large norm, which can cause a significant problem for both empirical optimization and generalization. This problem is well-known in the studies about the use of $L_2$ regularization in deep learning: suppose we apply weight decay to two different layers of a ReLU net, and decrease the weight decay strength of one layer to zero, then the norm of this layer will tend to infinity, and the norm of the other layer will tend to zero \citep{mehta2018loss}. However, in the next section, we will see that this problem is miraculously solved for VAE when $\sigma_i$ is learnable.

\subsection{Linear VAE with Learnable $\Sigma$}\label{sec: learnable sigma}
Now, we consider the more general case of a learnable $\Sigma$. In practice, $\Sigma$ is often dependent on the input $x$. We make the simplification that $\Sigma$ is just a data-independent optimizable diagonal matrix, which is the common assumption in the related works \citep{lucas2019don}. In Section~\ref{app sec: data dependent encoder variance}, we consider the case when $\Sigma$ is data-dependent and show that our result remains unchanged. The following Corollary gives the optimal training objective as a function $\Sigma$ and is a direct consequence of proposition~\ref{proposition:min-Luv}.
\begin{corollary}\label{corollary:min-Luv-val}
\begin{align}
    \min_{U, V} L(U, V) = \sum_{i=1}^{d_1} \zeta_i^2 - \left(\zeta_i - \frac{\sqrt{\beta}\sigma_i \eta_{\rm dec}}{\eta_{\rm enc}}\right)^2 \mathbbm{1}_{\zeta_i > \frac{\sqrt{\beta}\sigma_i \eta_{\rm dec}}{\eta_{\rm enc}}} + \sum_{i=d_1+1}^{d^*} \zeta_i^2,~\label{eq:min-Luv}
\end{align}
where the indicator $\mathbbm{1}_{f > 0} = 1$ when the corresponding inequality condition $f>0$ is true, and $\mathbbm{1}_{f > 0} = 0$ otherwise.
\end{corollary}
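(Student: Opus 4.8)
The plan is to prove the corollary by direct substitution of the explicit minimizer furnished by Proposition~\ref{proposition:min-Luv} into the three terms of $L(U,V)$ and then simplifying term by term using the orthogonality of $F$, $G$, and $P$. Writing $U^* = F\Lambda P$ and $V^* = G\Theta P$, the product satisfies $U^*V^{*\top} = F\Lambda\Theta^\top G^\top$ because $PP^\top = I$, and since $Z = F\Sigma_Z G^\top$, the unitary invariance of the Frobenius norm gives $\|U^*V^{*\top}-Z\|_F^2 = \|\Lambda\Theta^\top - \Sigma_Z\|_F^2$. Both $\Lambda\Theta^\top$ and $\Sigma_Z$ are rectangular diagonal, so this reduces to a sum of squared scalar differences $(\lambda_i\theta_i - \zeta_i)^2$. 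Similarly $\|V^*\|_F^2 = \|\Theta\|_F^2 = \sum_i\theta_i^2$, and evaluating the representative optimizer at $P = I$ (permissible since we only need the optimal value, which is constant across the optimal set) gives ${\rm Tr}(U^*\Sigma U^{*\top}) = {\rm Tr}(\Lambda\Sigma\Lambda^\top) = \sum_i\lambda_i^2\sigma_i^2$.

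The heart of the computation is a per-coordinate identity. Introducing the threshold $t_i := \sqrt{\beta}\sigma_i\eta_{\rm dec}/\eta_{\rm enc}$, the formulas for $\lambda_i$ and $\theta_i$ in Proposition~\ref{proposition:min-Luv} are both supported exactly on the set $\{\zeta_i > t_i\}$, and on that set the cross-factors cancel so that $\lambda_i\theta_i = \zeta_i - t_i$, while $\lambda_i^2\sigma_i^2 = t_i(\zeta_i - t_i)$ and $\beta(\eta_{\rm dec}^2/\eta_{\rm enc}^2)\theta_i^2 = t_i(\zeta_i - t_i)$. Adding the three contributions for such an index gives $(\zeta_i - t_i - \zeta_i)^2 + 2t_i(\zeta_i - t_i) = t_i^2 + 2t_i(\zeta_i - t_i) = \zeta_i^2 - (\zeta_i - t_i)^2$, which is exactly the summand appearing under the indicator. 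When instead $\zeta_i \le t_i$, the minimizer sets $\lambda_i = \theta_i = 0$, so only the Frobenius term survives and the contribution is $\zeta_i^2$, matching the formula with the indicator switched off.

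It remains to assemble these contributions over the correct index ranges. For $1 \le i \le \min(d_1, d^*)$ each index contributes $\zeta_i^2 - (\zeta_i - t_i)^2\mathbbm{1}_{\zeta_i > t_i}$; for $\min(d_1,d^*) < i \le d^*$ there is no corresponding latent direction, so $\Lambda\Theta^\top$ has a zero entry there while the diagonal of $\Sigma_Z$ contributes the pure residual $\zeta_i^2$, producing the second sum $\sum_{i=d_1+1}^{d^*}\zeta_i^2$. Adopting the stated convention $\zeta_i = 0$ for $i > d^*$ makes the two written ranges $\sum_{i=1}^{d_1}$ and $\sum_{i=d_1+1}^{d^*}$ consistent in both the overparametrized ($d_1 > d^*$) and underparametrized ($d_1 \le d^*$) regimes, since the added terms vanish. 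The main obstacle I anticipate is the careful bookkeeping of these rectangular-diagonal dimensions together with justifying that the free orthogonal matrix $P$ does not alter the optimal value; I would handle the latter by noting that the value depends on $\Lambda$ only through $\Lambda^\top\Lambda$ and on the fixed pairing of $\sigma_i$ with $\zeta_i$ already selected in Proposition~\ref{proposition:min-Luv}, so evaluating at $P = I$ loses no generality.
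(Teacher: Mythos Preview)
Your argument is correct and the algebra checks out. The approach, however, differs from the paper's. The paper's proof is essentially a one-liner: in the course of proving Proposition~\ref{proposition:min-Luv} it had already reduced $L(U,V)$ (after eliminating $V$ via the stationarity condition) to the single-variable form
\[
\|Z\|_F^2 + J_{Q^*} \;=\; \sum_{i=1}^{d_1}\Big[\zeta_i^2 + \sigma_i^2\lambda_i^2 - \tfrac{\zeta_i^2\eta_{\rm enc}^2\lambda_i^2}{\beta\eta_{\rm dec}^2 + \eta_{\rm enc}^2\lambda_i^2}\Big],
\]
so the corollary follows by substituting the optimal $\lambda_i^*$ into that expression. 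You instead go back to the original three-term objective and substitute both $\lambda_i^*$ and $\theta_i^*$, which is longer but more transparent and self-contained; it does not rely on remembering the intermediate $J_{Q^*}$ form. One small point: your remark that ``the value depends on $\Lambda$ only through $\Lambda^\top\Lambda$'' is not literally true for the term ${\rm Tr}(U\Sigma U^\top)={\rm Tr}(P\Sigma P^\top\Lambda^\top\Lambda)$ when the $\sigma_i$ differ, so the value can vary with $P$; but your resolution---evaluate at the representative $P=I$, which is a member of the optimal set---is exactly right for computing $\min_{U,V}L(U,V)$, and matches what the paper implicitly does.
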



The constant term $\sum_{i=d_1+1}^{d^*} \zeta_i^2$ in Equation~\eqref{eq:min-Luv} only appears when the latent dimension $d_1$ is less than $d^* = \min (d_0, d_2)$. This is the common situation for VAE applications. It indicates that the model learns the large eigenvalues and ignores the small eigenvalues.
This means that to find the optimal $\sigma_i$ of Eq.~\eqref{eq:general-vae-loss}, one only has to find the global minimum of a reduced objective:
\begin{align}
& \min_{U,W} L_{\rm VAE}(U, W, \Sigma) = \min_{U,V} \frac{1}{2\eta_{\rm dec}^2} L(U, V) + \sum_{i=1}^{d_1} \frac{\beta}{2} \left(\frac{\sigma_i^2}{\eta^2_{\rm enc}} - 1 - \log \frac{\sigma_i^2}{\eta_{\rm enc}^2}\right) \\
& = \frac{1}{2\eta_{\rm dec}^2} \sum_{i=1}^{d_1}\left[  \underbrace{ \zeta_i^2 - \left(\zeta_i - \frac{\sqrt{\beta}\sigma_i \eta_{\rm dec}}{\eta_{\rm enc}}\right)^2 \mathbbm{1}_{\zeta_i > \frac{\sqrt{\beta}\sigma_i \eta_{\rm dec}}{\eta_{\rm enc}}} + \beta \eta_{\rm dec}^2 \left(\frac{\sigma_i^2}{\eta^2_{\rm enc}} - 1 - \log \frac{\sigma_i^2}{\eta_{\rm enc}^2}\right)}_{:=l_i(\sigma_i)} \right] + constant.\label{eq: last objective with sigma}
\end{align}
The optimal $\sigma_i^*$ can thus be obtained by minimizing each $l_{i}$ independently: $\sigma_i^* = \argmin_{\sigma>0} l_i(\sigma)$. 

\begin{proposition}\label{proposition:min-sigma-enc}
    The optimal $\sigma_i^*$ of $l_i(\sigma)$ is
    \begin{align}
        \sigma_i^* = \left\{\begin{array}{cc}
          \frac{\sqrt{\beta}\eta_{\rm dec}}{\zeta_i} \eta_{\rm enc} & \text{if } \beta \eta_{dec}^2 < \zeta_i^2; \\
        \eta_{\rm enc} & \text{if } \beta \eta_{dec}^2 \geq \zeta_i^2.
        \end{array}\right.
    \end{align}
\end{proposition}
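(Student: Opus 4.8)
The plan is to minimize the single-variable function $l_i(\sigma)$ appearing in Eq.~\eqref{eq: last objective with sigma} directly over $\sigma>0$. The only thing blocking a naive ``set the derivative to zero'' argument is the indicator $\mathbbm{1}_{\zeta_i > \sqrt{\beta}\sigma\eta_{\rm dec}/\eta_{\rm enc}}$, which makes $l_i$ merely piecewise smooth. So the first step is to locate the switching threshold $\sigma_c := \zeta_i\eta_{\rm enc}/(\sqrt{\beta}\eta_{\rm dec})$, at which the indicator turns off, and to split $(0,\infty)$ into the \emph{active} interval $(0,\sigma_c)$ and the \emph{inactive} interval $[\sigma_c,\infty)$. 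A quick check shows $l_i$ is continuous at $\sigma_c$, since the subtracted square $(\zeta_i - \sqrt{\beta}\sigma\eta_{\rm dec}/\eta_{\rm enc})^2$ vanishes exactly at $\sigma=\sigma_c$; this continuity is what will let me glue the two pieces together at the end.

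The key computation is on the active interval. Here I would expand the square and observe the crucial cancellation: the quadratic term $-(\sqrt{\beta}\sigma\eta_{\rm dec}/\eta_{\rm enc})^2$ coming out of the square exactly cancels the quadratic term $\beta\eta_{\rm dec}^2\sigma^2/\eta_{\rm enc}^2$ from the KL contribution. What remains is
\begin{equation}
    l_i(\sigma) = \frac{2\sqrt{\beta}\,\zeta_i\eta_{\rm dec}}{\eta_{\rm enc}}\,\sigma - \beta\eta_{\rm dec}^2 - 2\beta\eta_{\rm dec}^2\log\frac{\sigma}{\eta_{\rm enc}},
\end{equation}
a strictly convex function (the $\log$ supplies positive curvature $2\beta\eta_{\rm dec}^2/\sigma^2$) whose unique stationary point is $\sigma=\sqrt{\beta}\eta_{\rm dec}\eta_{\rm enc}/\zeta_i$. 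On the inactive interval the function is the standard KL form $l_i(\sigma)=\zeta_i^2+\beta\eta_{\rm dec}^2\big(\sigma^2/\eta_{\rm enc}^2-1-\log(\sigma^2/\eta_{\rm enc}^2)\big)$, again strictly convex, with unique stationary point $\sigma=\eta_{\rm enc}$. Thus each of the two candidate minimizers in the statement arises as the stationary point of a genuinely convex piece.

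The last step is to decide which candidate is \emph{feasible}, i.e.\ actually lies in the interval on which it was derived, and this is governed entirely by the sign of $\beta\eta_{\rm dec}^2-\zeta_i^2$. I would compute the two ratios $\big(\sqrt{\beta}\eta_{\rm dec}\eta_{\rm enc}/\zeta_i\big)/\sigma_c=\beta\eta_{\rm dec}^2/\zeta_i^2$ and $\eta_{\rm enc}/\sigma_c=\sqrt{\beta}\eta_{\rm dec}/\zeta_i$ and read off the dichotomy. When $\beta\eta_{\rm dec}^2<\zeta_i^2$ the active-interval critical point lies in $(0,\sigma_c)$, while $\eta_{\rm enc}<\sigma_c$, so the inactive piece is strictly increasing and attains its infimum only at the boundary $\sigma_c$; hence the global minimizer is $\sqrt{\beta}\eta_{\rm dec}\eta_{\rm enc}/\zeta_i$. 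When $\beta\eta_{\rm dec}^2\ge\zeta_i^2$ the roles reverse: the active piece is strictly decreasing toward $\sigma_c$ and the inactive critical point $\eta_{\rm enc}\ge\sigma_c$ is feasible, giving global minimizer $\eta_{\rm enc}$. In each case strict convexity on each piece together with continuity at $\sigma_c$ forces the global minimizer to be the unique feasible interior stationary point, which yields the two-case formula.

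The genuinely delicate point is the bookkeeping in the last step: on the piece that contains no interior minimizer one must confirm the monotonicity direction so that its infimum never beats the feasible candidate (it is attained at $\sigma_c$, which is shared with the other piece by continuity). The cancellation uncovered in the active interval is exactly what makes this comparison clean, since it reduces an otherwise messy quartic-type competition to comparing one strictly monotone branch against one strictly convex branch.
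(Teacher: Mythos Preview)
Your proof is correct and follows essentially the same route as the paper's. Both split at the indicator threshold $\sigma_c=\zeta_i\eta_{\rm enc}/(\sqrt{\beta}\eta_{\rm dec})$, exploit the same cancellation of the $\sigma^2$ terms on the active piece, and reduce to the two candidate stationary points $\sqrt{\beta}\eta_{\rm dec}\eta_{\rm enc}/\zeta_i$ and $\eta_{\rm enc}$; the paper packages the gluing step as ``$l_i'(\sigma)$ is increasing on $(0,\infty)$'' whereas you phrase the same fact as piecewise strict convexity plus continuity at $\sigma_c$, but the content is identical.
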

This proposition gives an explicit expression for $\sigma_i^*$. On the one hand, we see that there is a threshold value for $\beta$. If $\beta$ is sufficiently large, $\sigma_{i}$ will be identical to the prior value $\eta_{\rm enc}$, in agreement with our assumption in the previous section. On the other hand, the learned variance $\sigma_i^*$ is a function of $\beta$ if $\beta$ is below a threshold. We will see that this threshold is the necessary and sufficient condition for posterior collapse to happen in a learnable $\Sigma$ setting. Thus, the learned variance being identical to the prior variance is also a signature of posterior collapse. The following theorem gives the precise form of the global minimum.

\begin{theorem}~\label{thm:linear-vae}
The global minimum of $L_{\rm VAE}(U, W, \Sigma)$ is given by
\begin{align}
    U^* &= F \Lambda P,
\end{align}
$W^*$ is the solution of 
\begin{align}
    \Phi^{\frac{1}{2}} P_A^\top W = G \Theta P,
\end{align}
where $F $ and $G$ are derived by the SVD of $Z$, $P$ is an arbitrary orthogonal matrix in $\mathbb{R}^{d_1 \times d_1}$, and $\Lambda = {\rm diag}(\lambda_1, ..., \lambda_{d_1})$ and $\Theta = {\rm diag}(\theta_1, ..., \theta_{d_1})$  are diagonal matrices such that
\begin{align}
    \lambda_i &=
    \frac{1}{\eta_{\rm enc}} \sqrt{\max\left(0, \zeta_i^2 - \beta \eta_{\rm dec}^2\right) };\\
    \theta_i &\begin{cases} = \frac{\eta_{\rm enc}}{\zeta_i} \sqrt{\max\left(0, \zeta_i^2 -{\beta \eta_{\rm dec}^2}\right)} &\text{  when $\zeta_i^2 > 0$}; \\
    =0 &\text{otherwise}.
    \end{cases}
\end{align}
The optimal $\Sigma^* = {\rm diag}(\sigma_1^{*2}, ..., \sigma_{d_1}^{*2})$ such that
    \begin{align}
        \sigma_{i}^* = \left\{\begin{array}{cc}
          \frac{\sqrt{\beta}\eta_{\rm dec}}{\zeta_i} \eta_{\rm enc} & \beta \eta_{dec}^2 < \zeta_i^2, \\
        \eta_{\rm enc} & \beta \eta_{dec}^2 \geq \zeta_i^,.
        \end{array}\right.
    \end{align}
for $i \leq \min(d_0,d_2)$. For $i>\min(d_0,d_2)$, $\sigma_i^* = 0$.

\end{theorem}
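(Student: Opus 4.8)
The plan is to minimize in stages, writing $\min_{U,W,\Sigma} L_{\rm VAE} = \min_{\Sigma}\bigl[\min_{U,W} L_{\rm VAE}(U,W,\Sigma)\bigr]$ and exploiting the fact that the inner minimization has already been solved. Concretely, Proposition~\ref{proposition1} reduces the inner problem for a fixed $\Sigma$ to the regularized factorization \eqref{eq:LVAE_is_RSVD}, whose minimizer and minimal value are supplied by Proposition~\ref{proposition:min-Luv} and Corollary~\ref{corollary:min-Luv-val}. Substituting the value \eqref{eq:min-Luv} back turns the outer problem into the separable form \eqref{eq: last objective with sigma}, a sum of one-dimensional objectives $l_i(\sigma_i)$; hence the optimal $\Sigma^*$ is obtained by minimizing each $l_i$ independently, which is exactly Proposition~\ref{proposition:min-sigma-enc}. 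The theorem is then assembled by inserting the closed form of $\sigma_i^*$ back into the $\lambda_i,\theta_i$ of Proposition~\ref{proposition:min-Luv} and recovering $W^*$ from the relation $\Phi^{\frac{1}{2}}P_A^\top W = V^*$ of Proposition~\ref{proposition1}.

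The core computation is the back-substitution in the active regime $\beta\eta_{\rm dec}^2 < \zeta_i^2$, where $\sigma_i^* = \sqrt{\beta}\,\eta_{\rm dec}\eta_{\rm enc}/\zeta_i$. First I would simplify the common factor $\zeta_i - \sqrt{\beta}\sigma_i^*\eta_{\rm dec}/\eta_{\rm enc} = \zeta_i - \beta\eta_{\rm dec}^2/\zeta_i = (\zeta_i^2-\beta\eta_{\rm dec}^2)/\zeta_i$, which is positive precisely in this regime. Using $\sqrt{\beta}\eta_{\rm dec}/(\sigma_i^*\eta_{\rm enc}) = \zeta_i/\eta_{\rm enc}^2$ gives $\lambda_i^2 = (\zeta_i^2-\beta\eta_{\rm dec}^2)/\eta_{\rm enc}^2$, while $\sigma_i^*\eta_{\rm enc}/(\sqrt{\beta}\eta_{\rm dec}) = \eta_{\rm enc}^2/\zeta_i$ gives $\theta_i^2 = \eta_{\rm enc}^2(\zeta_i^2-\beta\eta_{\rm dec}^2)/\zeta_i^2$, matching the stated $\lambda_i,\theta_i$. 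In the complementary regime $\beta\eta_{\rm dec}^2 \ge \zeta_i^2$ we have $\sigma_i^* = \eta_{\rm enc}$, so $\zeta_i - \sqrt{\beta}\sigma_i^*\eta_{\rm dec}/\eta_{\rm enc} = \zeta_i - \sqrt{\beta}\eta_{\rm dec} \le 0$ and both $\lambda_i$ and $\theta_i$ vanish through the $\max(0,\cdot)$, reproducing $\lambda_i = \eta_{\rm enc}^{-1}\sqrt{\max(0,\zeta_i^2-\beta\eta_{\rm dec}^2)} = 0$. A key consistency check is that the two thresholds agree: the positivity condition $\zeta_i > \sqrt{\beta}\sigma_i\eta_{\rm dec}/\eta_{\rm enc}$ that governs whether $\lambda_i,\theta_i$ are nonzero, evaluated at $\sigma_i^*$, collapses exactly to $\zeta_i^2 > \beta\eta_{\rm dec}^2$, the same threshold separating the two branches of $\sigma_i^*$. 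This is what makes the piecewise solution self-consistent.

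Finally I would handle the index bookkeeping and the collapsed boundary modes, which I expect to be the only delicate part. For $i \le d^* = \min(d_0,d_2)$ the $\zeta_i$ are genuine singular values of $Z$ and the analysis above covers everything. When $d_1 > d^*$, the extra modes $d^* < i \le d_1$ carry $\zeta_i = 0$ by convention, so they contribute nothing to reconstruction ($\lambda_i = \theta_i = 0$) and the only surviving dependence on $\sigma_i$ is the variance regularizer $\tfrac{\beta}{2}(\sigma_i^2/\eta_{\rm enc}^2 - 1 - \log(\sigma_i^2/\eta_{\rm enc}^2))$; pinning down the optimal variance of these fully collapsed modes from this residual term, and reconciling it with the stated boundary value of $\sigma_i^*$, is the step that requires the most care. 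Once all modes are fixed, $U^* = F\Lambda P$ is read off directly, and $W^*$ is recovered as any solution of $\Phi^{\frac{1}{2}}P_A^\top W = G\Theta P = V^*$ — a manifold of solutions whenever $A$ is rank-deficient, as already observed after Proposition~\ref{proposition1}. The arbitrary orthogonal $P\in\mathbb{R}^{d_1\times d_1}$ together with the freedom in $W^*$ then account for the invariances of the global minimizer.
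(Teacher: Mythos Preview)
Your proposal is correct and follows exactly the paper's approach: the paper's proof of this theorem consists of the single sentence ``The optimal solution $U^*, W^*, \Sigma^*$ are obtained by combining Propositions~\ref{proposition1},~\ref{proposition:min-Luv}, and~\ref{proposition:min-sigma-enc},'' which is precisely your staged minimization followed by back-substitution of $\sigma_i^*$ into the formulas for $\lambda_i,\theta_i$. Your explicit verification of the back-substitution and the threshold consistency check go beyond what the paper writes out; your caution about the $i>d^*$ modes is also warranted, since minimizing the residual variance regularizer alone yields $\sigma_i^*=\eta_{\rm enc}$ rather than the stated $0$, an edge case the paper's one-line proof does not address.
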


\begin{proof}
The optimal solution $U^*, W^*, \Sigma^*$ are obtained by combining proposition~\ref{proposition1},~\ref{proposition:min-Luv}, and~\ref{proposition:min-sigma-enc}.
\end{proof}

Comparing with the solution in section~\ref{sec: nonlearnable}, one notices two things: (a) the conditions for complete or partial posterior collapse remain unchanged, which implies that a learnable latent variance is neither qualitatively nor quantitatively relevant for the posterior collapse problem even though the functional form of the eigenvalues changed; (b) the magnitude of each of the two layers no longer scales with $\sqrt{\beta} \eta_{\rm dec}$, and so using a small $\beta$ or $\eta$ will not directly cause the model to diverge in the norm, which suggests using that making $\Sigma$ learnable can have the unexpected practical advantage of stabilizing the training. 

Additionally, one also notices that $\beta \eta_{\rm dec}^2$ has the effect of keeping the learned model low-rank by removing all the eigenvalues of the learned model below it. This can be directly compared with the effect of using a latent dimension smaller than the input dimension: $d_1<d_0$. In the latter case, the smallest $d_1-d_0$ singular values are also pruned. There is a difference between the two types of low-rankness: using a large $\beta \eta_{\rm dec}^2$ both removes all the singular values below it and shrinks the remaining ones while using a small latent dimension only removes the smaller singular values without affecting the rest. This is similar to the difference between soft thresholding estimation and hard thresholding estimation in statistics \citep{wasserman2013all}. This suggests that partial posterior collapses are not necessarily undesirable because, during a partial posterior collapse, the latent variable models automatically perform a degree of sparse learning, which is theoretically understood to help denoising the signal and lead to better generalization \citep{markovsky2012low}. That being said, complete posterior collapse should always be avoided.

\subsection{Learnable $\eta_{\rm dec}$}
Our result can also be extended to the case when $\eta_{\rm dec}$ is learnable, which has been suggested by \cite{lucas2019don} as a remedy for the posterior collapse. To do this, we need to include the partition function of the decoder, proportional to $\log \eta_{\rm dec}^2$, that has been ignored in Eq.~\eqref{eq:general-vae-loss}. We present the detailed analysis in Section~\ref{app sec: learnable decoder variance}. Our analysis shows that even if $\eta_{\rm dec}$ is learnable, posterior collapse can happen for some datasets. In addition to the fact that it is also possible for collapses not to happen when $\eta_{\rm dec}$ is not learned, we conclude that $\eta_{\rm dec}$ does not have a causal relation with posterior collapse. Our analysis also suggests a way to fix posterior collapse for VAE: make $\eta_{\rm dec}$ learnable \underline{and} set $\beta< {d_2}/{d^*}$. Note that the condition $\beta< {d_2}/{d^*}$ is tight in the sense that if it does not hold, then there exists a data distribution such that complete collapse can happen. This condition also highlights that it is important to introduce the $\beta$ coefficient for VAEs because, for VAE, $d_2/d^*=1$, and this condition translates to $\beta <1$; namely, vanilla VAE cannot avoid complete collapse.

Our result also implies that $\beta$ has a highly nonlinear effect on learning depending on the architecture. For example, when the model is underparameterized ($d_1< d^*$), using a small $\beta$ does not cause any problem, whereas for an overparametrized model, a small $\beta$ causes the decoder variance to converge towards $0$.

\subsection{Implications}
Our main results have implications for both the problem of posterior collapse and the practice of latent variable models in general.

\textbf{The cause of posterior collapse}. One important implication is the identification of the cause of the posterior collapse problem and the potential ways to fix it. Our results suggest that
\begin{compactitem}
    \item a learnable (data-dependent or not) latent variance is not the cause of posterior collapse;
    \item changing the variance of the prior cannot fix or influence the posterior collapse problem;
    \item comparing with the results in \cite{lucas2019don}, $\eta_{\rm dec}$ being learnable or not is causally related to the posterior collapse problem;
    \item the values of $\eta_{\rm dec}$ and $\beta$ are crucial for the posterior collapse;
    \item choosing appropriate $\beta$ is still needed: a sufficiently small $\beta$ can avoid posterior collapse.
\end{compactitem}
Note that the effect of a small $\beta$ (large $\eta_{\rm dec}$) weakens the prior (reconstruction) term, and so the cause of the posterior collapse must be the competition between the prior term, which regularizes the complexity of the model, and the likelihood term, which encourages accurate recognition/reconstruction. Our results suggest that one can ignore the effect of the $\ell_{var}$ term in studying the mechanism of posterior collapse. Ignoring the $\ell_{var}$ Eq.~\ref{eq:general-vae-loss}, one sees that the posterior collapse is caused by the competition between the likelihood and $\ell_{mean}$, which is precisely the regularization effect on the mean of $z$.


There is an interesting alternative perspective on the nature of the posterior from the viewpoint of the loss landscape geometry. The following theorem states that the origin (where all parameters are zero) is either a saddle or the global minimum for this problem. Since we have shown that $\sigma_i$ does not affect the collapse, we simply let $\sigma_i=
\eta_{\rm enc}$ as in Section~\ref{sec: nonlearnable}.
\begin{theorem}\label{theo: saddle point condition}
    The Hessian of Eq.~\ref{eq:general-vae-loss} at $0$ is positive semidefinite if and only if it is the global minimum.
\end{theorem}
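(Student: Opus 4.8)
The plan is to reduce the claim to an explicit spectral condition and match it against the global-minimum condition already extracted in Theorem~\ref{theo: main result wihout learnable sigma}. Fixing $\sigma_i=\eta_{\rm enc}$ makes $\ell_{var}$ constant, so the only parameters are $U$ and $W$ and the origin is $U=W=0$. By Theorem~\ref{theo: main result wihout learnable sigma}, the origin is the global minimum exactly when $\lambda_i=\theta_i=0$ for every $i$, i.e.\ under complete collapse $\max_i\zeta_i\le\sqrt{\beta}\,\eta_{\rm dec}$ (equivalently $\zeta_1\le\sqrt{\beta}\,\eta_{\rm dec}$, since the $\zeta_i$ are ordered). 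Hence it suffices to show that the Hessian at the origin is positive semidefinite under this same inequality and fails to be otherwise; note one direction is already automatic, since a global minimum always has a positive semidefinite Hessian, so the content is really the converse.

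First I would compute the Hessian quadratic form. Writing $B:=\mathbb{E}_x[xy^\top]$ and expanding $\mathbb{E}_x\|UW^\top x-y\|^2 = {\rm Tr}(UW^\top A W U^\top) - 2\,{\rm Tr}(UW^\top B) + \mathbb{E}_x\|y\|^2$, the first summand is quartic in the parameters and contributes nothing to the second derivative at the origin, while the additive constant drops out. Thus the Hessian evaluated on a direction $(\delta U,\delta W)$ equals, up to the positive factor $\eta_{\rm dec}^{-2}$,
\[
H(\delta U,\delta W) = \eta_{\rm enc}^2\|\delta U\|_F^2 + \beta\frac{\eta_{\rm dec}^2}{\eta_{\rm enc}^2}{\rm Tr}(\delta W^\top A\,\delta W) - 2\,{\rm Tr}(\delta U\,\delta W^\top B).
\]

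Next I would diagonalize $H$. Since the columns of $B$ lie in the range of $A$, any component of $\delta W$ in $\ker A$ contributes only the nonnegative term $\eta_{\rm enc}^2\|\delta U\|_F^2$; restricting to the range via $\delta V:=\Phi^{1/2}P_A^\top\delta W$ gives ${\rm Tr}(\delta W^\top A\,\delta W)=\|\delta V\|_F^2$ and, using $Z^\top=\Phi^{-1/2}P_A^\top B$, turns the cross term into ${\rm Tr}(\delta U\,\delta V^\top Z^\top)$. Rotating by the SVD factors of $Z=F\Sigma_Z G^\top$, i.e.\ $\delta U=F\,\delta U'$ and $\delta V=G\,\delta V'$, preserves the Frobenius norms and makes the cross term $\sum_{i\le d^*}\sum_k \zeta_i (\delta U')_{ik}(\delta V')_{ik}$. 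The form therefore decouples across columns $k$ and across coordinates $i$ into scalar $2\times2$ pieces with matrix $\left(\begin{smallmatrix}\eta_{\rm enc}^2 & -\zeta_i\\ -\zeta_i & \beta\eta_{\rm dec}^2/\eta_{\rm enc}^2\end{smallmatrix}\right)$ for $i\le d^*$, plus leftover pure squares for the coordinates beyond $d^*$ that are manifestly nonnegative.

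Finally I would conclude by the determinant test: each $2\times2$ block has positive diagonal entries and is positive semidefinite iff its determinant $\beta\eta_{\rm dec}^2-\zeta_i^2\ge0$. Because $\zeta_1$ is the largest singular value, $H$ is positive semidefinite iff $\zeta_1^2\le\beta\eta_{\rm dec}^2$, which is precisely the complete-collapse condition identifying the origin as the global minimum. The main obstacle is the bookkeeping forced by the rank-deficiency of $A$ and the rectangular SVD of $Z$: one must verify that the $\ker A$ directions and the coordinates beyond $d^*$ contribute only nonnegative squares, and that the change of variables $\delta W\mapsto\delta V$ loses no information in the cross term, which is exactly the statement that the columns of $B$ lie in the range of $A$. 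Once the block-diagonal structure is in place, the remainder is the one-line determinant inequality above.
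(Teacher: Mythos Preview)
Your proposal is correct and follows the same overall strategy as the paper: compute the Hessian quadratic form at the origin, show that its definiteness is governed by the largest singular value $\zeta_1$ of $Z$, and match the resulting inequality $\zeta_1^2\le\beta\eta_{\rm dec}^2$ against the complete-collapse condition from Theorem~\ref{theo: main result wihout learnable sigma}.

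The execution differs slightly and is worth noting. The paper works from the start in the $(U,V)$ coordinates of Proposition~\ref{proposition1}, writes the quadratic form as $\sigma^2\|\Delta U\|_F^2-2\,{\rm Tr}(Z\,\Delta V\,\Delta U^\top)+\beta\eta_{\rm dec}^2\eta_{\rm enc}^{-2}\|\Delta V\|_F^2$, and then uses a variational argument: on the unit sphere it splits the mass as $\alpha^2=\|\Delta U\|_F^2$, observes (via von Neumann's trace inequality implicitly) that for fixed $\alpha$ the cross term is extremized along the top singular vectors of $Z$, and finally minimizes a one-dimensional quadratic in $\alpha$ to obtain the sign condition. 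You instead stay in the original $(U,W)$ coordinates, handle the rank deficiency of $A$ explicitly by checking that $\ker A$ directions are null for the form (using that the columns of $B=\mathbb{E}[xy^\top]$ lie in ${\rm range}(A)$), and then rotate by the SVD factors of $Z$ to block-diagonalize the form into decoupled $2\times2$ pieces, reading off PSD via the determinant test. Your route is a bit more structural and makes the role of each $\zeta_i$ transparent simultaneously; the paper's route avoids the bookkeeping with $\ker A$ by working in $(U,V)$ from the outset. Both arrive at exactly the same threshold, and your remark that the ``global minimum $\Rightarrow$ PSD Hessian'' direction is automatic is a clean shortcut the paper does not make explicit.
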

The surprising aspect is that for the latent variable model, there is no intermediate case where the origin is a local minimum but not global. Therefore, the origin is, in fact, a very special point in the landscape of a latent variable model, in the sense that a key global property of the landscape (namely, the global minimum) is determined by the local geometry of the model at the origin. Noting that our model can be seen as a direct generalization of the Bayesian linear regression to a deeper architecture, it also becomes reasonable to suspect that the posterior collapse problem is a unique problem of deep learning because the standard Bayesian linear regression does not suffer posterior collapse because the origin can never be a local maximum of the posterior \citep{bishop2006pattern}. \cite{dai2020usual} also finds the origin to be a very special point in a general deep nonlinear VAE structure and that it can be a local minimum under various settings. However, the implication of our work is broader. The origin is not only a special point for the autoencoding model families but can actually be a special point for a very broad of model classes (namely, the model class of general latent variable models). The problem of posterior collapse is thus not limited to autoencoders but can also be relevant to common regression and classification tasks.

\begin{figure}[t!]
    \centering
    \vspace{-1em}
    \includegraphics[width=.4\linewidth]{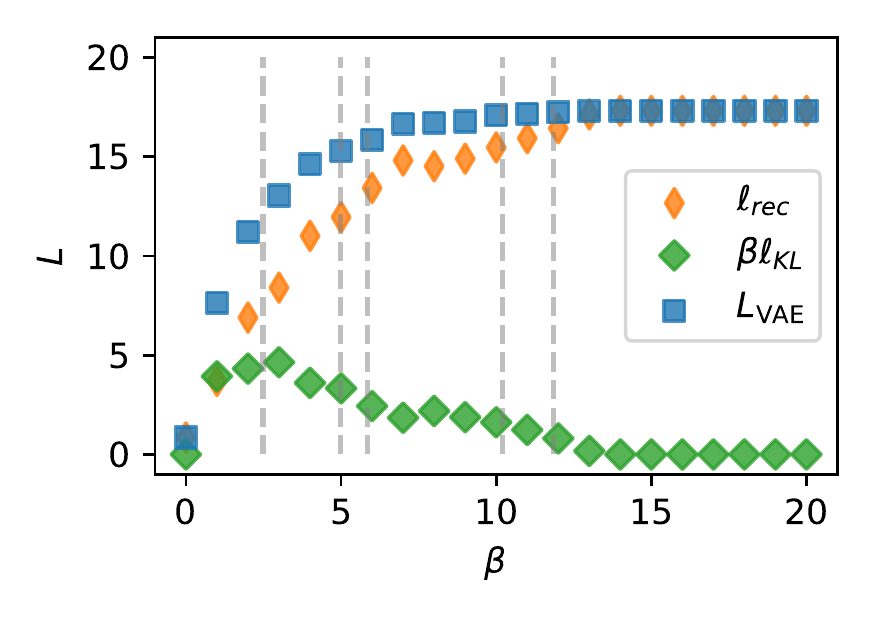}
    \includegraphics[width=.4\linewidth]{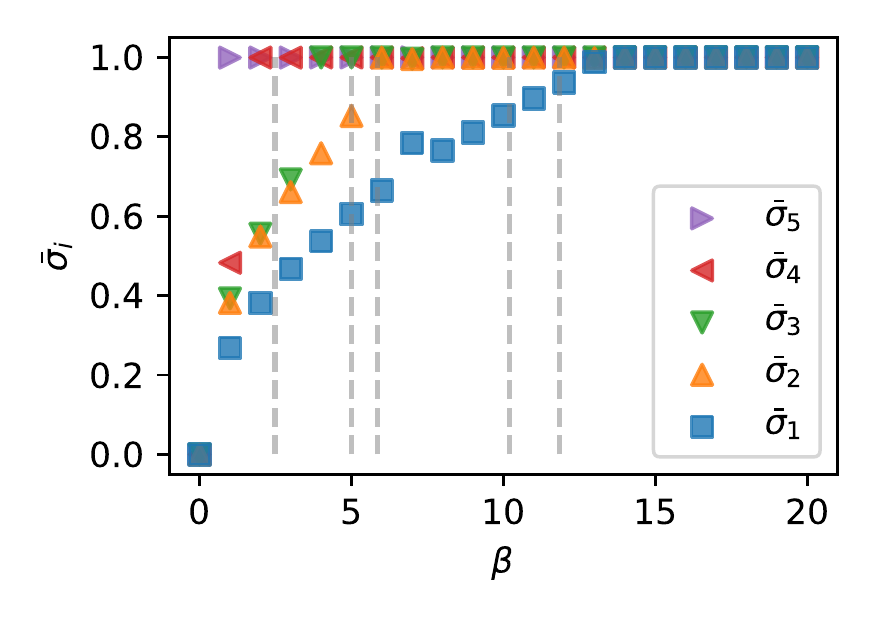}
        \vspace{-1em}
    \caption{Training loss $L$ and $\bar \sigma_i$ versus $\beta$ on synthetic regression dataset. $\bar\sigma_i$ is measured by averaging over the training set. The vertical dashed lines show where the theory predicts a partial collapse. Complete posterior collapse happens at roughly $\beta = 14$.}
    \label{fig:non-linear-relu-regression}
    \centering
    \includegraphics[width=.4\linewidth]{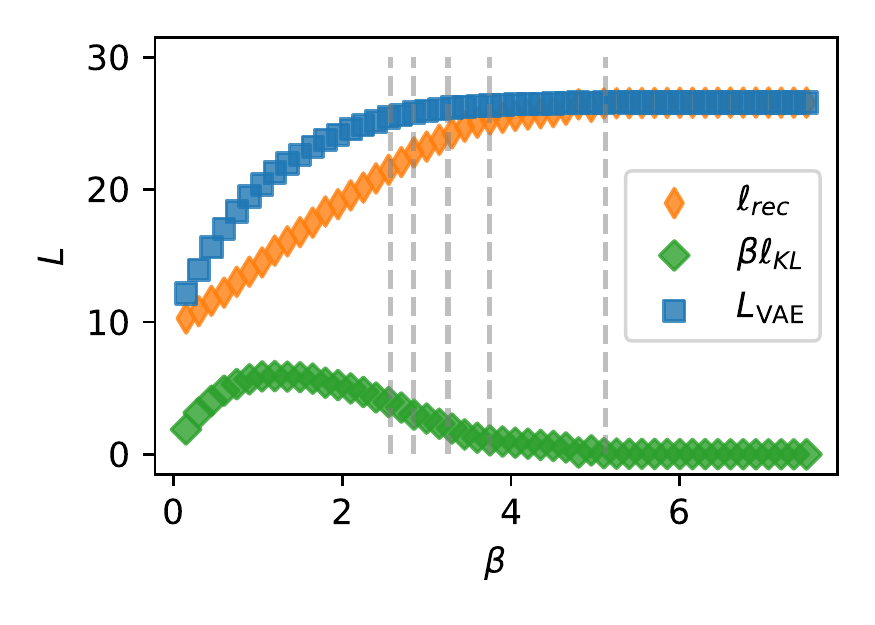}
    \includegraphics[width=.4\linewidth]{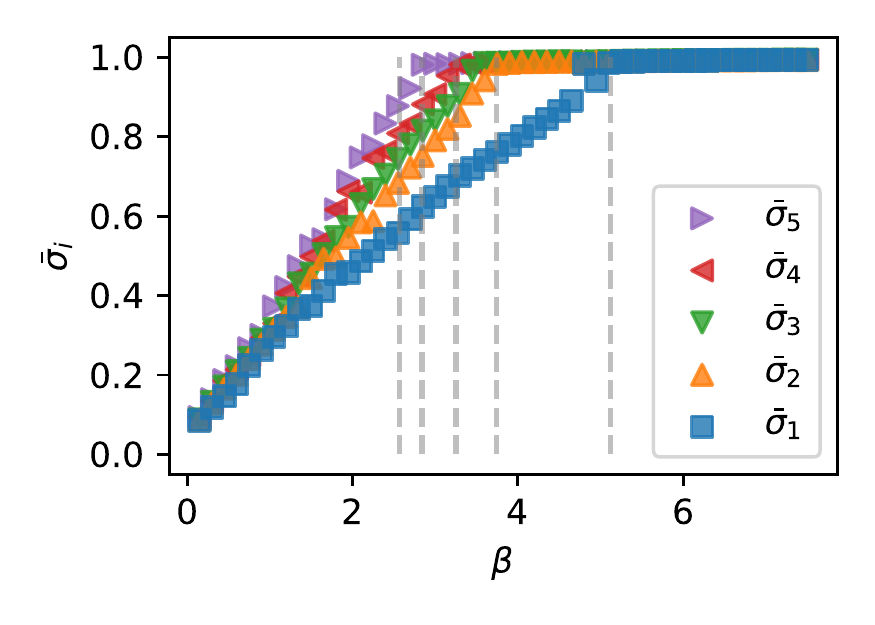}
        \vspace{-1em}

    \caption{Training loss and $\bar\sigma_i$ versus $\beta$ for MNIST dataset. The vertical dashed lines show where the theory predicts a partial collapse. The posterior collapse happens for the MNIST dataset at around $\beta=5$.}
    \vspace{-1em}
    \label{fig:non-linear-MNIST}
\end{figure}

\textbf{Connection to other types of collapses}. Our result suggests that there are some interesting connections between the posterior collapse phenomenon and the neural collapse phenomenon in supervised learning \citep{papyan2020prevalence} and dimensional collapse phenomenon in self-supervised learning (SSL) \citep{jing2021understanding}. \cite{ziyin2022exact} and \cite{ziyin2022exactb} shows that the neural collapse phenomenon for a two-layer model can be understood through the change of the stability at the origin, which is determined by the competition between the signal strength ($\E[xy]$) of the data distribution and the regularization strength of weight decay. For SSL, \cite{ziyin2022shapes} also shows that the stability of the origin is important and that it is decided by the competition between the level of data variation and the data augmentation strength. Our result suggests that the posterior collapse problem can also be understood through the stability at the origin. This might imply that there could be some universal cause of all these collapses that have been discovered independently in different subfields of deep learning, and one important future direction would be to study these phenomena from a unified perspective.

\textbf{Insights for latent variable model practices}. While we have primarily focused on discussing the phenomena of posterior collapse, our results also shed light on latent variable models (including VAE) in practice when there is no complete posterior collapse. Specifically, our results suggest that
\begin{compactitem}
    \item latent variable models perform sparse learning through soft thresholding or hard thresholding or both;
    \item thus, \textit{partial} posterior collapse may actually be desirable;
    \item making the latent variance learnable can help stabilize training and avoid divergence of model parameters;
    \item when $\eta_{\rm dec}$ is not learned, the effect of increasing $\beta$ is identical to the effect of decreasing $\eta_{\rm dec}$;
    \item when $\eta_{\rm dec}$ is learned, one needs to pay special care to choose a suitable $\beta$.

\end{compactitem}

\begin{figure}[t!]
    \centering
    \begin{subfigure}[b]{0.32\textwidth}
        \centering
        \includegraphics[trim={2cm 3cm 1.8cm 3cm},clip,width=\textwidth]{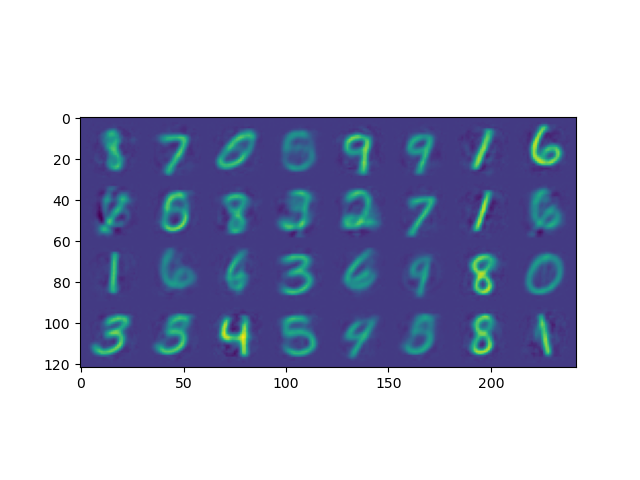}
        \caption{$\beta=1$, remaining modes = 5}
    \end{subfigure}
    \hfill
    \begin{subfigure}[b]{0.32\textwidth}
        \centering
        \includegraphics[trim={2cm 3cm 1.8cm 3cm},clip,width=\textwidth]{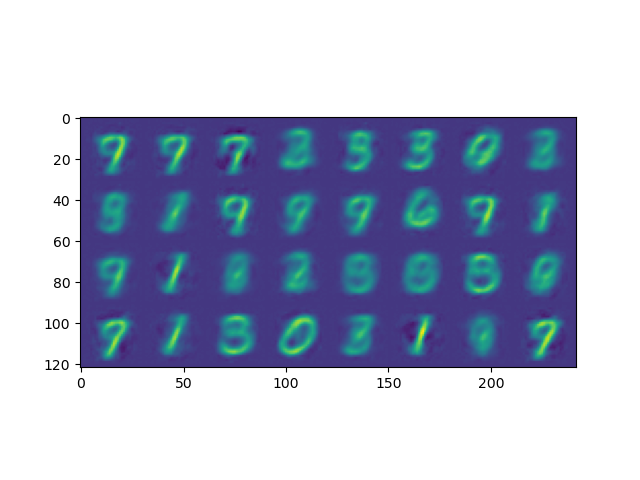}
        \caption{$\beta=2.75$, remaining modes: 4}
    \end{subfigure}
    \hfill
    \begin{subfigure}[b]{0.32\textwidth}
        \centering
        \includegraphics[trim={2cm 3cm 1.8cm 3cm},clip,width=\textwidth]{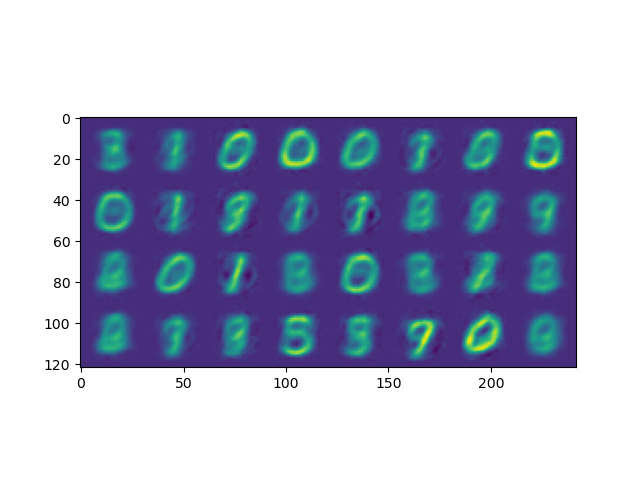}
        \caption{$\beta=3$, remaining modes: 3}
    \end{subfigure}
        \begin{subfigure}[b]{0.32\textwidth}
        \centering
        \includegraphics[trim={2cm 3cm 1.8cm 3cm},clip,width=\textwidth]{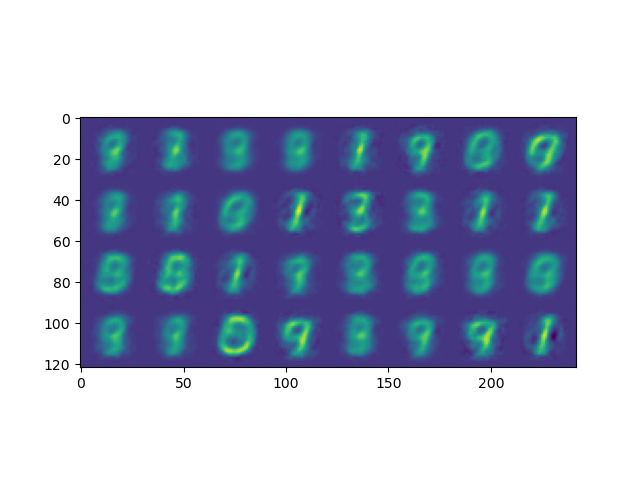}
        \caption{$\beta=3.5$, remaining modes: 2}
    \end{subfigure}
    \hfill
    \begin{subfigure}[b]{0.32\textwidth}
        \centering
        \includegraphics[trim={2cm 3cm 1.8cm 3cm},clip,width=\textwidth]{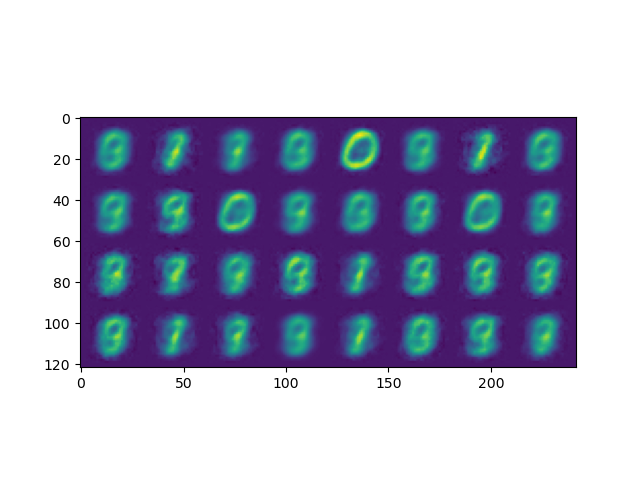}
        \caption{$\beta=4$, remaining modes: 1}
    \end{subfigure}
    \hfill
    \begin{subfigure}[b]{0.32\textwidth}
        \centering
        \includegraphics[trim={2cm 3cm 1.8cm 3cm},clip,width=\textwidth]{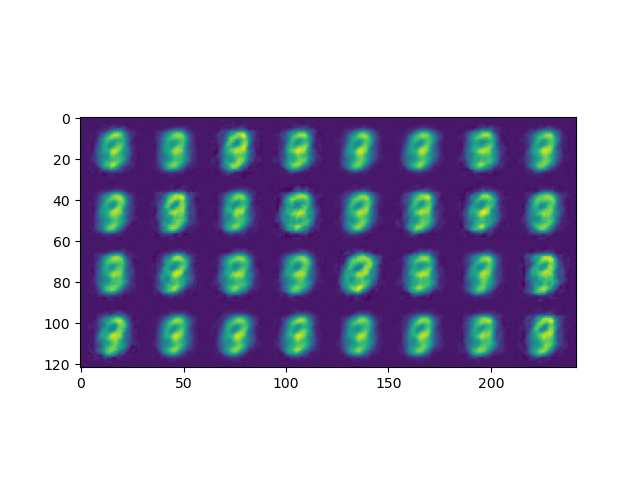}
        \caption{$\beta=6$, remaining modes: 0}
    \end{subfigure}
    \caption{MNIST generation under different $\beta$. The generated images lose diversity and variation as $\beta$ increases. The number of modes left is estimated by the theoretical prediction of thresholds of each singular value.}
    \label{fig:MNIST-gen}
    \vspace{-1em}
\end{figure}

\vspace{-2mm}
\section{Numerical Examples}\label{sec: experiment}
\vspace{-1mm}

This section empirically examines our theoretical claims for linear models and demonstrates that our key theoretical insights generalize well to nonlinear models and natural data.  

\textit{Setting.} We illustrate our results on both synthetic data and natural data. For synthetic data, we sample input data $x$ from multivariate normal distribution $\mathcal{N}(0, A)$, and target data $y = Mx$ is obtained by a linear transformation. Specifically, we choose $d_0 = d_2 = 5$. As an example of natural data, we also experiment with the standard MNIST data. 
Following common practices, we choose $\eta_{\rm dec} = \eta_{\rm enc} = 1$. For non-linear VAE models, we consider two-layer fully connected neural networks for the encoder and decoder with both ReLU and Tanh activation functions and with hidden dimension $d_h$. For synthetic dataset $d_h=8$, and $d_h = 2048$ for real-world data. In contrast to our assumption that the variances $\Sigma$ of encoded $z$ are independent from the input $x$, we parameterize the variance of each encoded $z$ by a linear transformation or a two-layer neural network, i.e., $\Sigma(x) = {\rm [Linear/MLP]}(x)$. This data-dependent modeling is closer to the common practice, and the comparison can justify the correctness of our theory. The model is optimized by Adam with a learning rate of $10^{-3}$. The results are reported after the convergence. For MNIST, the learning rate is $10^{-4}$. 

\textit{Results}. Linear models are found to agree precisely with the theoretical results, so we only present the results in the appendix. We focus on exploring the nonlinear models in the main text. We first consider a simple regression task with MLP encoder and decoders with the ReLU activation (Figure~\ref{fig:non-linear-relu-regression}). Here, we see that the theoretical prediction of loss function $L_{\rm VAE}$ agrees well with empirical observation. Moreover, the threshold of complete posterior collapse is also perfectly predicted. For completeness, we also present the case when (1) the activation is Tanh in Appendix~\ref{app:regression}. We note that the results are similar. The observation is similar to the standard MNIST dataset with a nonlinear encoder and decoder. See Figure~\ref{fig:non-linear-MNIST}.
For illustration, we also present the generated MNIST images by non-linear $\beta$-VAE trained with different choices of $\beta$ in Figure~\ref{fig:MNIST-gen}. The latent dimension is five as described before. When there are $5$ non-collapsed modes, the generated images are both sharp and contain meaningful variations. As the number of remaining non-collapsed modes reduces to zero, we see that the generated images become increasingly blurred, and the variation between the data also diminishes. When the model completely collapses, the model outputs a constant, as the theory suggests. Moreover, we note that the values of $\beta$ are chosen according to the theoretical thresholds for each mode to collapse, i.e, the top-5 $\zeta_i$ are $[5.12, 3.74, 3.25, 2.84, 2.57]$. We see that the theoretical thresholds provide good predictive power for the behavior of mode collapse qualitatively.

\vspace{-2mm}
\section{Outlook}
\vspace{-1mm}

In this work, we have tackled the problem of posterior collapse from a loss landscape point of view. Our work also contributes to the fundamental theory of deep learning. The linear VAE architecture can be seen as a deep linear model with two layers, whose loss landscape is highly nontrivial. In this perspective, our results advance those results in \cite{ziyin2022exact}, where the dimension of the output space is limited to 1d. The limitation of our work is obvious: our theory only deals with the landscape, and it is unclear how the dynamics of gradient-based methods could contribute to the collapse problem. In fact, there is strong evidence that stochastic gradient descent can bias the model towards low-rank or sparse solutions \citep{arora2019implicit, ziyin2021sgd}, and, in the context of posterior collapse, these are precisely the collapsed solutions. One important future direction is thus to study the role of dynamics in influencing posterior collapse.

\bibliographystyle{apalike}
\bibliography{ref}
\section*{Checklist}

\begin{enumerate}

\item For all authors...
\begin{enumerate}
  \item Do the main claims made in the abstract and introduction accurately reflect the paper's contributions and scope?
    \answerYes{}
  \item Did you describe the limitations of your work?
    \answerYes{}
  \item Did you discuss any potential negative societal impacts of your work?
    \answerNA{}
  \item Have you read the ethics review guidelines and ensured that your paper conforms to them?
    \answerYes{}
\end{enumerate}

\item If you are including theoretical results...
\begin{enumerate}
  \item Did you state the full set of assumptions of all theoretical results?
    \answerYes{}
        \item Did you include complete proofs of all theoretical results?
    \answerYes{See Appendix.}
\end{enumerate}

\item If you ran experiments...
\begin{enumerate}
  \item Did you include the code, data, and instructions needed to reproduce the main experimental results (either in the supplemental material or as a URL)?
    \answerNo{The experiments are only for demonstration and are straightforward to reproduce following the theory.}
  \item Did you specify all the training details (e.g., data splits, hyperparameters, how they were chosen)?
    \answerYes{}
        \item Did you report error bars (e.g., with respect to the random seed after running experiments multiple times)?
    \answerNo{The fluctuations are visually negligible.}
        \item Did you include the total amount of compute and the type of resources used (e.g., type of GPUs, internal cluster, or cloud provider)?
    \answerYes{They are done on a single 3080Ti GPU.}
\end{enumerate}

\item If you are using existing assets (e.g., code, data, models) or curating/releasing new assets...
\begin{enumerate}
  \item If your work uses existing assets, did you cite the creators?
    \answerNA{}
  \item Did you mention the license of the assets?
    \answerNA{}
  \item Did you include any new assets either in the supplemental material or as a URL?
    \answerNA{}
  \item Did you discuss whether and how consent was obtained from people whose data you're using/curating?
    \answerNA{}
  \item Did you discuss whether the data you are using/curating contains personally identifiable information or offensive content?
    \answerNA{}
\end{enumerate}

\item If you used crowdsourcing or conducted research with human subjects...
\begin{enumerate}
  \item Did you include the full text of instructions given to participants and screenshots, if applicable?
    \answerNA{}
  \item Did you describe any potential participant risks, with links to Institutional Review Board (IRB) approvals, if applicable?
    \answerNA{}
  \item Did you include the estimated hourly wage paid to participants and the total amount spent on participant compensation?
    \answerNA{}
\end{enumerate}

\end{enumerate}

\clearpage
\appendix

\section{Additional Experiments}



\subsection{Regression}\label{app:regression}
Figure~\ref{fig:linear-regression} shows the results of a linear model in the regression setting.
Figure~\ref{fig:tanh-regression} shows the performance of Tanh MLP in the regression setting.
The complete posterior collapse is well predicted by our theory.

\begin{figure}[h]
    \centering
    \includegraphics[width=.4\linewidth]{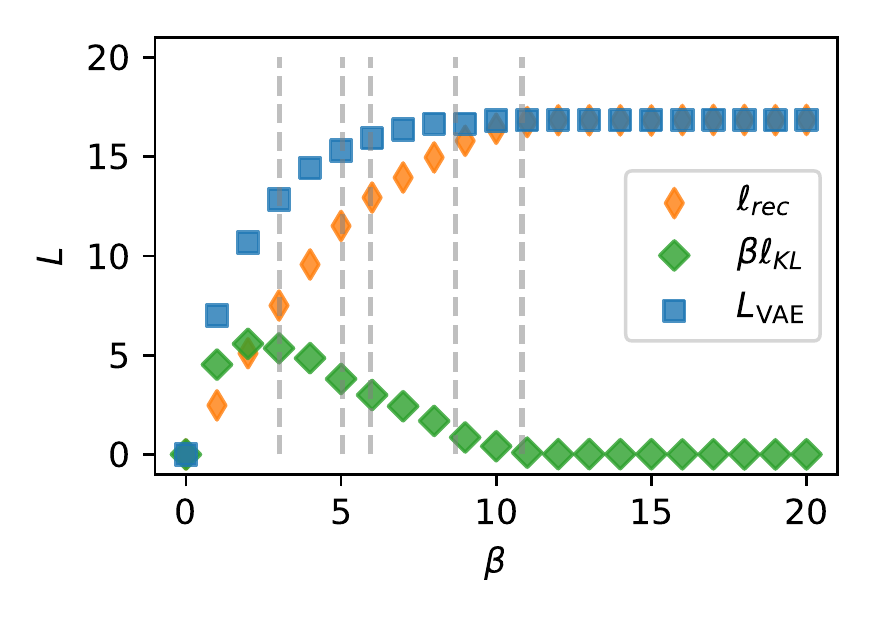}
    \includegraphics[width=.4\linewidth]{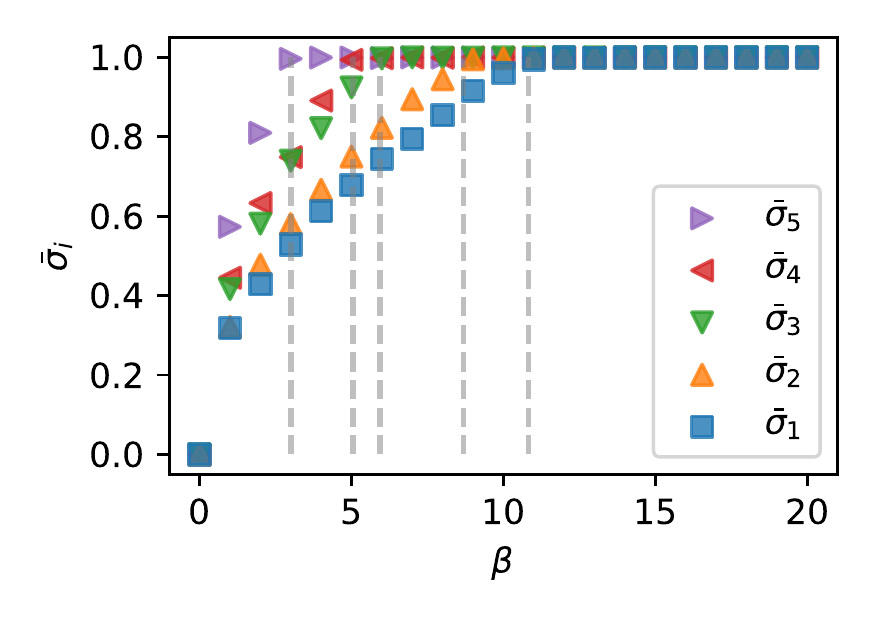}
    \caption{Training loss $L$ and $\sigma_i$ versus $\beta$ for linear regression. The theoretical prediction is plotted as vertical dashed lines.}
    \label{fig:linear-regression}
\end{figure}

\begin{figure}[h]
    \centering
    \includegraphics[width=.4\linewidth]{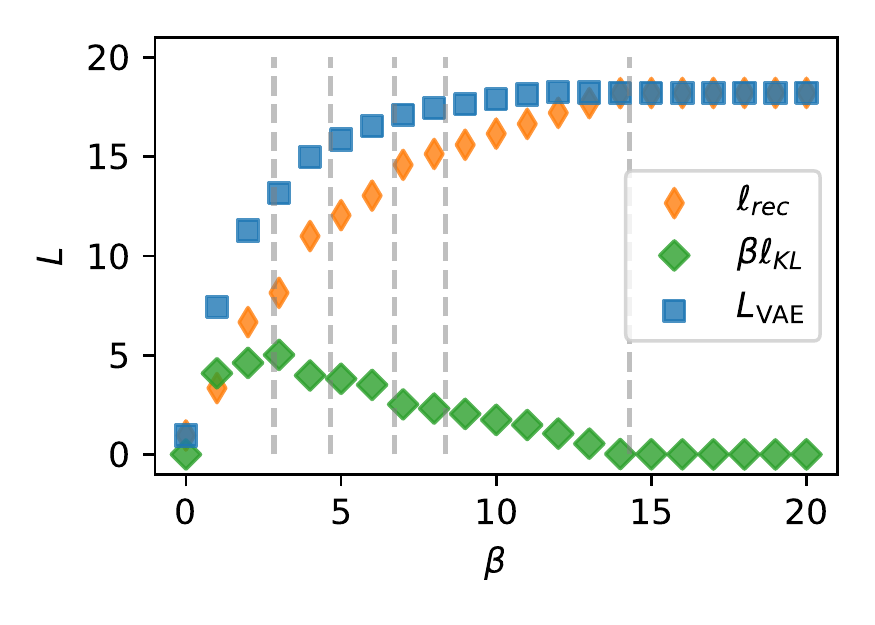}
    \includegraphics[width=.4\linewidth]{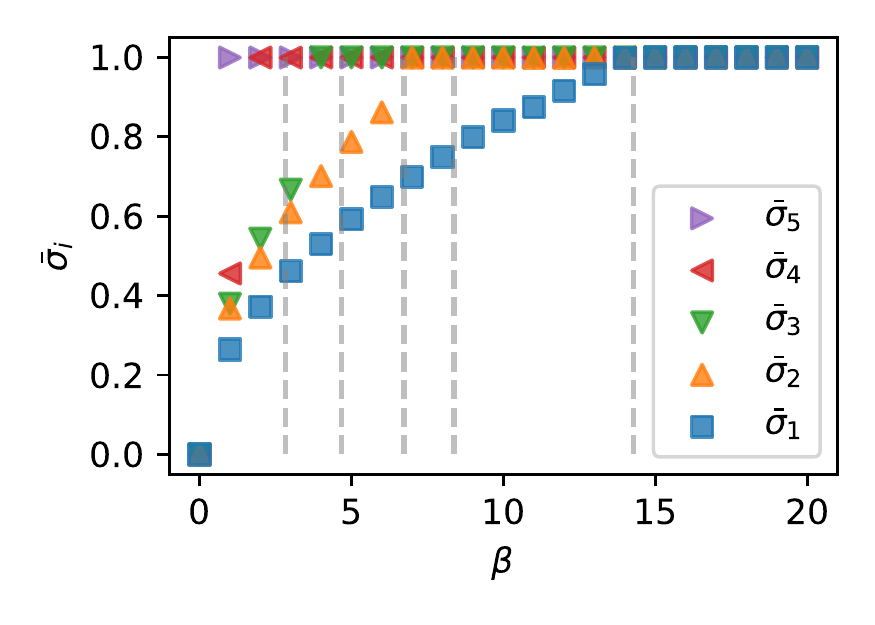}
    \caption{Training loss $L$ and $\sigma_i$ versus $\beta$ for MLP encoder and decoder with Tanh activation function. The theoretical prediction is plotted as vertical dashed lines.}
    \label{fig:tanh-regression}
\end{figure}

\clearpage

\section{Effect of Bias}~\label{app:bias-term}
Here, we study a general linear encoding and decoding model equipped with a bias term. Following the previous notation, the encoder is $z=W^\top x + b_e + \epsilon$ and the decoded distribution is $p(y|z) = \mathcal{N}(Uz + b_d, \eta_{\rm dec}^2I)$. Then, the objective of general VAE reads
\begin{align}
    L_{\rm VAE}(W, U, b_e, b_d, \Sigma) = & \frac{1}{2\eta_{\rm dec}^2} \mathbb{E}_{x,\epsilon} \left[ \|U(W^\top x + b_e + \epsilon) + b_d - y\|^2 + \beta \frac{\eta_{\rm dec}^2}{\eta_{\rm enc}^2} \|W^\top x + b_e\|^2 \right] \\
    & + \sum_{i=1}^{d_1} \frac{\beta}{2} \left(\frac{\sigma_{i}^2}{\eta^2_{\rm enc}} - 1 - \log \frac{\sigma_{i}^2}{\eta_{\rm enc}^2}\right).
\end{align}

One can show that at optima, the learned biases must take the following form.
\begin{proposition}
    The optimal biases are $b_e^* = - W^\top \mathbb{E}_{x} [x]$ and $b_d^* = \mathbb{E}_{x} [y]$.
\end{proposition}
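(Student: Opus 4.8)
The plan is to minimize the objective over the two bias vectors while holding $W$, $U$, and $\Sigma$ fixed, since the proposition asserts their optimal values given the remaining parameters. The first observation is that the $\ell_{var}$ term $\sum_i \frac{\beta}{2}(\sigma_i^2/\eta_{\rm enc}^2 - 1 - \log(\sigma_i^2/\eta_{\rm enc}^2))$ contains no bias, so I may discard it and work only with the bracketed reconstruction-plus-$\ell_{mean}$ expression. Because $b_e$ and $b_d$ are deterministic constants, I can pass the expectation $\mathbb{E}_{x,\epsilon}$ through and treat the remaining quantity as a smooth (indeed quadratic) function of $(b_e, b_d)$; I will locate its unique stationary point and then argue it is the global minimizer.

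First I would differentiate with respect to $b_d$. Only the reconstruction term depends on $b_d$, and its gradient is proportional to $\mathbb{E}_{x,\epsilon}[U(W^\top x + b_e + \epsilon) + b_d - y]$. Setting this to zero and using $\mathbb{E}[\epsilon]=0$ yields the first stationarity relation
\begin{equation}
    U(W^\top \mathbb{E}_x[x] + b_e) + b_d = \mathbb{E}_x[y].
\end{equation}
Next I differentiate with respect to $b_e$. Its gradient has two pieces: a contribution $U^\top \mathbb{E}[\,U(W^\top x + b_e + \epsilon) + b_d - y\,]$ from the reconstruction term, and $\beta(\eta_{\rm dec}^2/\eta_{\rm enc}^2)\,\mathbb{E}[W^\top x + b_e]$ from $\ell_{mean}$. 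The crucial step is to notice that the inner expectation in the first piece is exactly the quantity set to zero by the $b_d$ condition, so that piece vanishes at any joint stationary point. What remains forces $\beta(\eta_{\rm dec}^2/\eta_{\rm enc}^2)(W^\top \mathbb{E}_x[x] + b_e) = 0$, and since the prefactor is strictly positive this gives $b_e^* = -W^\top \mathbb{E}_x[x]$. Substituting back into the first relation makes the argument of $U$ vanish, leaving $b_d^* = \mathbb{E}_x[y]$ as claimed.

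Finally I would confirm that this stationary point is in fact the global minimum rather than a saddle. The objective is a convex quadratic in $(b_e, b_d)$, with Hessian (up to a positive factor) of the block form $\begin{pmatrix} U^\top U + c I & U^\top \\ U & I \end{pmatrix}$ where $c = \beta\eta_{\rm dec}^2/\eta_{\rm enc}^2 > 0$; taking the Schur complement of the lower-right identity block reduces it to $cI \succ 0$, so the Hessian is positive definite and the stationary point is the unique global minimizer.

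I expect the main obstacle to be the coupling between $b_e$ and $b_d$ through the decoder $U$: the two stationarity equations are not independent, and a naive attempt to solve the $b_e$ equation in isolation would have to invert $U^\top U$, which may be singular when $U$ is rank-deficient. Solving the $b_d$ equation first decouples the system and sidesteps this, and the strict positivity of the $\ell_{mean}$ prefactor $c$ is exactly what guarantees that $b_e$ remains uniquely determined even when $U$ has no effect on part of the latent space.
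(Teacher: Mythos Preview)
Your proposal is correct and follows essentially the same first-order-condition approach as the paper: both compute the gradients with respect to $b_d$ and $b_e$, then solve the resulting linear system by substituting the $b_d$ condition into the $b_e$ condition so that the $U^\top U$ term cancels and only the strictly positive $\ell_{mean}$ coefficient survives. Your extra Schur-complement verification that the Hessian is positive definite is a nice addition that the paper omits (it simply asserts the stationary point is optimal without checking second-order conditions).
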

\begin{proof}
The gradient of $L_{\rm VAE}$ with respect to $b_e$ and $b_d$ are zero when $b_e$ and $b_d$ are optimal. That is,
\begin{align}
    \frac{\partial L_{\rm VAE}}{\partial b_e} & = \frac{1}{\eta_{\rm dec}^2} \mathbb{E}_{x,\epsilon}\left[U^\top (U(W^\top x + b_e + \epsilon) + b_d - y) + \beta \frac{\eta_{\rm dec}^2}{\eta_{\rm enc}^2} (W^\top x + b_e) \right] \\
    & = \frac{1}{\eta_{\rm dec}^2} \left[\left(U^\top U + \beta \frac{\eta_{\rm dec}^2}{\eta_{\rm enc}^2}  I \right)(W^\top \mathbb{E}_{x} x + b_e) + U^\top(b_d -  \mathbb{E}_{x} y) \right] = 0,
\end{align}
and,
\begin{align}
    \frac{\partial L_{\rm VAE}}{\partial b_d} & = \frac{1}{\eta_{\rm dec}^2} \mathbb{E}_{x,\epsilon} (U(W^\top x + b_e + \epsilon) + b_d - y) \\
    & = \frac{1}{\eta_{\rm dec}^2} \left[U (W^\top \mathbb{E}_{x} x + b_e) + (b_d - \mathbb{E}_x y)\right] = 0.
\end{align}
Those condition holds if and only if $b_e^* = - W^\top \mathbb{E}_{x} x$ and $b_d^* = \mathbb{E}_{x} y$.
\end{proof}

In particular, this means that the effect of a learnable encoder bias is the same as a data-preprocessing scheme of making $x$ zero-mean. The effect of a learnable decoder bias is the same as a data-preprocessing scheme of making $y$ zero-mean.

\section{Case of a Data-Dependent Encoding Variance}\label{app sec: data dependent encoder variance}

For completeness, we extend the result in Section~\ref{sec: learnable sigma} and consider the case when the learnable variance of the latent variable $z$ is $x$-dependent, which is common in practice. Meanwhile, one might also consider the case when the variance in the decoder is learned: for concision, we do not consider this case because it is rather rare in practice.

In the same spirit, we consider the simplest case of a data-dependent variance, where the standard derivation of $z$ linearly depends on $x$. We will see that in this case, the system is no longer analytically solvable. The standard deviation is
\begin{align}
    \sigma(x) = {\rm diag}(|C x + f|) := {\rm diag} (\sigma_1(x), ..., \sigma_{d_1}(x)),
\end{align}
where $C\in \mathbb{R}^{d_1\times d_0}$ and $f\in\mathbb{R}^{d_1}$ are the learnable parameters. The latent variable $z$ is thus generated by
\begin{align}
    z = Wx + \sigma(x) \epsilon = Wx + {\rm diag}(Cx + f)\epsilon,
\end{align}
where $\epsilon \sim \mathcal{N}(0, I_{d_1})$. To emphasize the important terms, we further assume that $x$ is zero-mean: $\E[x]=0$.\footnote{As we have shown, this can be precisely achieved when the encoder and encoder have a learnable bias.} 

Using this definition of $\sigma$ in Eq.~\eqref{eq:raw_VAE_loss}, one obtains the following objective with Data-Dependent (encoding) Variance (DDV):
\begin{align}
    L_{\rm VAE}^{\rm DDV}(U&, W, C, f)\\
    = & \frac{1}{2\eta_{\rm dec}^2} \mathbb{E}_{x,\epsilon} \bigg[ \|(U W^\top x  - y) + U\sigma(x)\epsilon\|^2 + \beta \frac{\eta_{\rm dec}^2}{\eta_{\rm enc}^2} \|W^\top x\|^2 \bigg]\nonumber\\
    &\quad\quad +  \frac{\beta}{2} \sum_{i=1}^{d_1} \mathbb{E}_{x} \left(\frac{\sigma_{i}^2(x)}{\eta^2_{\rm enc}} - 1 - \log \frac{\sigma_{i}^2(x)}{\eta_{\rm enc}^2}\right)\\
    = & \frac{1}{2\eta_{\rm dec}^2} \mathbb{E}_{x,\epsilon} \bigg[ \|U W^\top x  - y\|^2 + {\rm Tr}(U\sigma(x)\epsilon \epsilon^\top \sigma(x) U^\top) + 2 {\rm Tr}(U (W^\top x  - y)\epsilon^\top \sigma(x) U^\top ) \nonumber \\
    &\quad\quad+ \beta \frac{\eta_{\rm dec}^2}{\eta_{\rm enc}^2} \|W^\top x\|^2 \bigg] +  \frac{\beta}{2} \sum_{i=1}^{d_1} \mathbb{E}_{x} \left(\frac{\sigma_{i}^2(x)}{\eta^2_{\rm enc}} - 1 - \log \frac{\sigma_{i}^2(x)}{\eta_{\rm enc}^2}\right).
\end{align}

The relevant expectation values can be computed easily:
\begin{align}\label{eq: equivalent loss function data dependent z}
    \mathbb{E}_{x} \sigma_i^2(x) & = f_i^2 + C_{i\cdot} A C_{i\cdot}^\top := \Sigma_i^{\rm DDV}\\
    \mathbb{E}_{x,\epsilon} {\rm Tr}(U\sigma(x)\epsilon \epsilon^\top \sigma(x) U^\top) & = {\rm Tr}(U {\rm diag}(\mathbb{E}_{x} \sigma_1^2(x), \cdots, \mathbb{E}_{x} \sigma_{d_1}^2(x) ) U^\top)\\
    \mathbb{E}_{x,\epsilon} {\rm Tr}(U (W^\top x  - y)\epsilon^\top \sigma(x) U^\top ) & = \mathbb{E}_{x} {\rm Tr}(U (W^\top x  - y)\mathbb{E}_{\epsilon}\epsilon^\top \sigma(x) U^\top ) = 0,
\end{align}
where $\mu_x$ is the mean vector of input variable $x$, and $f_i C_{i\cdot} \mu_x$ is the inner product of the $i$-th row of $C$ and $\mu_x$ multiplied by a scalar $f_i$. This corollary means that the loss function can be written in the following form:
\begin{align}
    L_{\rm VAE}^{\rm DDV}(U, W, C, f)
    =  \frac{1}{2\eta_{\rm dec}^2} \mathbb{E}_{x,\epsilon} \bigg[ \|U W^\top x  - y\|^2 &+ {\rm Tr}(U \Sigma^{\rm DDV} U^\top) 
  + \beta \frac{\eta_{\rm dec}^2}{\eta_{\rm enc}^2} \|W^\top x\|^2 \bigg] \nonumber\\ &+  \frac{\beta}{2} \sum_{i=1}^{d_1} \mathbb{E}_{x} \left(\frac{\sigma_{i}^2(x)}{\eta^2_{\rm enc}} - 1 - \log \frac{\sigma_{i}^2(x)}{\eta_{\rm enc}^2}\right).
\end{align}

What makes the problem analytical intractable is the term $\mathbb{E}_{x} \log(\sigma_i^2 (x)) $. However, we can still obtain some very insightful qualitative results from it. 

The following lemma will help us show that it is always better to have $C=0$. 

\begin{lemma}\label{lemma: equal variance condition}
    For any $C,\ f$, there exists $f'$ such that $\E_x\sigma_i^2(x; C,f) = \E_x\sigma_i^2(x; C=0, f')$.\footnote{Note that we have now explicitly written out $C$ and $f$ to emphasize that $\sigma$ is also a function of $C$ and $f$}.
\end{lemma}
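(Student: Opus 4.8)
The plan is to read off the matching condition directly from the closed form for the expected variance already established in Eq.~\eqref{eq: equivalent loss function data dependent z}, and then solve it explicitly for $f'$. Recall that $\sigma_i(x) = |C_{i\cdot} x + f_i|$, so that $\sigma_i^2(x) = (C_{i\cdot} x + f_i)^2$. Expanding the square and using the zero-mean assumption $\E[x] = 0$ to kill the cross term gives $\E_x \sigma_i^2(x; C, f) = f_i^2 + C_{i\cdot} A C_{i\cdot}^\top =: \Sigma_i^{\rm DDV}$, exactly as recorded in the excerpt. Specializing the same identity to $C = 0$ yields $\E_x \sigma_i^2(x; 0, f') = (f_i')^2$.

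Comparing the two expressions, the lemma reduces to solving, for each coordinate $i$, the scalar equation $(f_i')^2 = f_i^2 + C_{i\cdot} A C_{i\cdot}^\top$. First I would note that this is a componentwise condition, so it suffices to exhibit a valid $f_i'$ for each $i$ independently; the diagonal matrix $\Sigma^{\rm DDV}$ is then reproduced entry by entry. The one substantive point is that the right-hand side must be non-negative for a real solution to exist. This follows because $A = \E_x[x x^\top]$ is positive semidefinite by construction, so the quadratic form $C_{i\cdot} A C_{i\cdot}^\top \geq 0$, and $f_i^2 \geq 0$ trivially; hence the prescribed choice
\begin{equation}
f_i' := \sqrt{f_i^2 + C_{i\cdot} A C_{i\cdot}^\top}
\end{equation}
is real and well-defined, and it realizes the desired equality by construction.

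I do not expect a genuine obstacle here: the result is an immediate consequence of the closed form for $\E_x \sigma_i^2(x)$ together with the positive semidefiniteness of $A$. The only thing to be careful about is making the per-coordinate construction explicit, since $f'$ is chosen entry-by-entry precisely so that the \emph{entire} diagonal second-moment matrix $\Sigma^{\rm DDV}$ is left unchanged when $C$ is set to zero. This invariance of $\Sigma^{\rm DDV}$ is exactly what the downstream argument needs, because the loss depends on $(C, f)$ through the variance term only via $\Sigma^{\rm DDV}$, whereas the $\E_x \log \sigma_i^2(x)$ penalty is separately controlled by Jensen's inequality; the present lemma isolates the first of these two facts.
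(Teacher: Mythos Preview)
Your proof is correct and follows essentially the same route as the paper: both compute $\E_x\sigma_i^2(x;C,f)=f_i^2+C_{i\cdot}AC_{i\cdot}^\top$ and $\E_x\sigma_i^2(x;0,f')=(f_i')^2$, then set $f_i'=\sqrt{f_i^2+C_{i\cdot}AC_{i\cdot}^\top}$. Your version is slightly more careful in explicitly invoking the positive semidefiniteness of $A$ to guarantee the square root is real, a point the paper leaves implicit.
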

\textit{Proof}. By definition,
\begin{equation}
    \mathbb{E}_{x} \sigma_i^2(x; C, f) = f_i^2 + C_{i\cdot} A C_{i\cdot}^\top,
\end{equation}
and 
\begin{equation}
    \mathbb{E}_{x} \sigma_i^2(x; C=0, f') = (f_i')^2. 
\end{equation}
Now, setting
\begin{equation}
    f_i' = \sqrt{f_i^2 + C_{i\cdot} A C_{i\cdot}^\top}
\end{equation}
is sufficient to make the two equal. $\square$

We can now prove that it is always better to have $C=0$.

\begin{proposition}
    For any $U,\ W,\ C, f,$ there exists $f'$ such that
    \begin{equation}
         L_{\rm VAE}^{\rm DDV}(U, W, C, f) \geq  L_{\rm VAE}^{\rm DDV}(U, W, 0, f').
    \end{equation}
\end{proposition}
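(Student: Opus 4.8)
The plan is to isolate the only channel through which the data-dependence of $\sigma$ can affect the loss, and then close that channel with Jensen's inequality. First I would invoke Lemma~\ref{lemma: equal variance condition} to fix the comparison point: given $C, f$, choose $f'$ with $(f_i')^2 = f_i^2 + C_{i\cdot} A C_{i\cdot}^\top = \mathbb{E}_x \sigma_i^2(x; C, f)$ for every $i$, so that the data-dependent model $(C,f)$ and the constant-variance model $(0, f')$ share the same per-coordinate \emph{mean} variance $\Sigma_i^{\rm DDV}$.

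Next I would pass term by term through $L_{\rm VAE}^{\rm DDV}$ and record which terms change. The reconstruction term $\|UW^\top x - y\|^2$ and the mean-regularization term $\beta \frac{\eta_{\rm dec}^2}{\eta_{\rm enc}^2}\|W^\top x\|^2$ do not involve $\sigma$ at all, and the cross term already vanished in the computation of Eq.~\eqref{eq: equivalent loss function data dependent z}; hence these are identical for $(C, f)$ and $(0, f')$. The trace term ${\rm Tr}(U \Sigma^{\rm DDV} U^\top)$ sees $\sigma$ only through $\Sigma_i^{\rm DDV} = \mathbb{E}_x \sigma_i^2(x)$, which the Lemma holds fixed, so it too is unchanged. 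Inside the KL term, the piece $\frac{\beta}{2}\sum_i \mathbb{E}_x\big(\sigma_i^2(x)/\eta_{\rm enc}^2 - 1\big)$ is again a function of the mean variance alone and is unchanged. The \emph{only} surviving difference lives in the logarithmic piece $-\frac{\beta}{2}\sum_i \mathbb{E}_x \log\big(\sigma_i^2(x)/\eta_{\rm enc}^2\big)$.

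The decisive step is then Jensen's inequality applied coordinatewise. Because $\log$ is concave, $\mathbb{E}_x \log \sigma_i^2(x; C, f) \leq \log \mathbb{E}_x \sigma_i^2(x; C, f) = \log (f_i')^2 = \mathbb{E}_x \log \sigma_i^2(x; 0, f')$, where the last equality holds because $\sigma_i^2(x; 0, f') \equiv (f_i')^2$ is constant in $x$. Multiplying by $-\beta/2 < 0$ reverses the inequality, so the logarithmic contribution is \emph{larger} for the data-dependent model. Since every other term matches exactly, summing over $i$ gives $L_{\rm VAE}^{\rm DDV}(U, W, C, f) \geq L_{\rm VAE}^{\rm DDV}(U, W, 0, f')$, which is the claim.

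I expect no serious obstacle: the entire content is the bookkeeping observation that the loss depends on the distribution of $\sigma_i^2(x)$ only linearly (through expectations that fix the mean) except in the concave $\log$ term, after which concavity does all the work. The single point requiring care is confirming that the cross term $\mathbb{E}_{x,\epsilon}{\rm Tr}\big(U(W^\top x - y)\epsilon^\top \sigma(x) U^\top\big)$ truly vanishes, so that the reconstruction term carries no hidden $\sigma$-dependence; but this was already established via $\mathbb{E}_\epsilon \epsilon = 0$ in Eq.~\eqref{eq: equivalent loss function data dependent z}.
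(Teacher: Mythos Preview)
Your proposal is correct and follows essentially the same approach as the paper: invoke Lemma~\ref{lemma: equal variance condition} to match the per-coordinate mean variances $\Sigma_i^{\rm DDV}$, observe that every term in $L_{\rm VAE}^{\rm DDV}$ except the $-\tfrac{\beta}{2}\sum_i \mathbb{E}_x \log \sigma_i^2(x)$ piece depends on $\sigma$ only through these means, and then apply Jensen's inequality to the concave $\log$. The paper packages the first observation by writing the loss as $L_1(U,W,\Sigma^{\rm DDV}) - \tfrac{\beta}{2}\sum_i \mathbb{E}_x \log \sigma_i^2(x)$ rather than stepping through each term, but the content is identical.
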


\textit{Proof}. Throughout, we let $f'$ equal to the form given by Lemma~\ref{lemma: equal variance condition}.

By Eq.~\eqref{eq: equivalent loss function data dependent z}, the loss function can be written as the sum of a term $L_0$ that depends only on $U,\ W$ and $\Sigma^{\rm DDV}$ and the logarithmic term:
\begin{equation}
    L_{\rm VAE}^{\rm DDV}(U, W, C, f) = L_1(U, W, \Sigma^{\rm DDV}) - \frac{\beta}{2} \E_x\log\sigma_i^2(x).
\end{equation}
However, by Lemma~\eqref{lemma: equal variance condition}, we have
\begin{equation}
    L_{\rm VAE}^{\rm DDV}(U, W, 0, f') = L_1(U, W, \Sigma^{\rm DDV}) - \frac{\beta}{2}\log \E_x\sigma_i^2(x).
\end{equation}
Noting that $-\log \sigma_i^2(x)$ is convex, we have, for any $C$ and $f$,
\begin{equation}
    - \E_x\log \sigma_i^2(x) \geq - \log \E_x\sigma_i^2(x).
\end{equation}
This implies that
\begin{equation}
    L_{\rm VAE}^{\rm DDV}(U, W, C, f) -  L_{\rm VAE}^{\rm DDV}(U, W, 0, f') = - \E_x\log \sigma_i^2(x) + \log \E_x\sigma_i^2(x) \geq 0.
\end{equation}
This completes the proof. $\square$

When $C=0$, the encoder variance becomes data-independent, and the global minimum is thus, again, given by the main results in the main text. This result shows that a learnable data-dependent encoder variance does not have any quantitative difference at the global minimum when compared with the case of a data-independent encoder variance. This result is directly supported by our numerical results in Section~\ref{sec: experiment}, where the experiments are done for the case where the encoder variances are actually learned.

\section{Case of Learnable Decoding Variance $\eta_{\rm dec}^2$}\label{app sec: learnable decoder variance}

We first give an explicit form of the loss function at the global minimum found in Theorem~\ref{thm:linear-vae}.
Using the optimal $U^*, W^*, \Sigma^*$, the analytical formulation of the minimal $L_{\rm VAE}$ can be obtained.
\begin{corollary}~\label{corollary:min_VAE}
    The minimal value of the objective function $L_{\rm VAE}$ is
\begin{align}~\label{eq:optimal-L_VAE-without-LDV}
 \min_{U, W, \Sigma} L_{\rm VAE}(U, W, \Sigma)
 &= \frac{1}{2\eta_{\rm dec}^2} \left[\sum_{i=1}^{d^*} \zeta_i^2 - \sum_{i:\zeta_i^2 > \beta\eta_{\rm dec}^2}^{d_1} \zeta_i^2 \left(1+ \frac{\beta \eta_{\rm dec}^2}{\zeta_i^2}\left(\log\frac{\beta \eta_{\rm dec}^2}{\zeta_i^2} - 1\right) \right)  \right],
\end{align}
where $\zeta_{i}^2$ are sorted in non-increasing order. For convenience, we let $\zeta_{i}^2 = 0$ for $d^* \leq i \leq d_1$ when $d_1 > d^*$.
\end{corollary}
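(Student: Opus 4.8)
The plan is to take the per-mode objective already assembled in Eq.~\eqref{eq: last objective with sigma} and simply substitute the optimal encoder variances $\sigma_i^*$ supplied by Proposition~\ref{proposition:min-sigma-enc}. Because Corollary~\ref{corollary:min-Luv-val} has reduced $\min_{U,V}L(U,V)$ to a sum over modes, and the KL variance penalty is itself a sum over modes, the full quantity factorizes as $\frac{1}{2\eta_{\rm dec}^2}\sum_{i=1}^{d_1} l_i(\sigma_i^*) + \frac{1}{2\eta_{\rm dec}^2}\sum_{i=d_1+1}^{d^*}\zeta_i^2$, so it suffices to evaluate each $l_i$ at its minimizer and then recombine the pieces.

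First I would dispose of the subcritical modes. When $\beta\eta_{\rm dec}^2 \geq \zeta_i^2$, Proposition~\ref{proposition:min-sigma-enc} gives $\sigma_i^* = \eta_{\rm enc}$; then $\sigma_i^{*2}/\eta_{\rm enc}^2 = 1$ annihilates the KL variance term, and the threshold $\frac{\sqrt{\beta}\sigma_i^*\eta_{\rm dec}}{\eta_{\rm enc}} = \sqrt{\beta}\eta_{\rm dec}$ turns off the indicator $\mathbbm{1}_{\zeta_i > \sqrt{\beta}\eta_{\rm dec}}$, leaving $l_i(\sigma_i^*) = \zeta_i^2$. Next I would treat the supercritical modes $\beta\eta_{\rm dec}^2 < \zeta_i^2$, where $\sigma_i^* = \frac{\sqrt{\beta}\eta_{\rm dec}}{\zeta_i}\eta_{\rm enc}$. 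A consistency check confirms the indicator is on, since $\frac{\sqrt{\beta}\sigma_i^*\eta_{\rm dec}}{\eta_{\rm enc}} = \frac{\beta\eta_{\rm dec}^2}{\zeta_i} < \zeta_i$. Writing $a := \frac{\beta\eta_{\rm dec}^2}{\zeta_i}$, the likelihood-plus-mean part is $\zeta_i^2 - (\zeta_i - a)^2 = 2a\zeta_i - a^2 = 2\beta\eta_{\rm dec}^2 - \frac{\beta^2\eta_{\rm dec}^4}{\zeta_i^2}$, while the KL variance part is $\beta\eta_{\rm dec}^2\left(\frac{\beta\eta_{\rm dec}^2}{\zeta_i^2} - 1 - \log\frac{\beta\eta_{\rm dec}^2}{\zeta_i^2}\right)$. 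Adding them, the two $\frac{\beta^2\eta_{\rm dec}^4}{\zeta_i^2}$ contributions cancel — this cancellation is the crux — leaving $l_i(\sigma_i^*) = \beta\eta_{\rm dec}^2\left(1 - \log\frac{\beta\eta_{\rm dec}^2}{\zeta_i^2}\right)$, which one checks equals $\zeta_i^2 - \zeta_i^2\left(1 + \frac{\beta\eta_{\rm dec}^2}{\zeta_i^2}\left(\log\frac{\beta\eta_{\rm dec}^2}{\zeta_i^2} - 1\right)\right)$.

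Finally I would re-assemble. Summing $l_i(\sigma_i^*)$ over $i=1,\dots,d_1$ and appending the ignored-mode constant $\sum_{i=d_1+1}^{d^*}\zeta_i^2$, the $\zeta_i^2$ pieces from every mode combine into $\sum_{i=1}^{d^*}\zeta_i^2$ (with the padding convention $\zeta_i^2=0$ for $d^*\le i\le d_1$ when $d_1>d^*$), while the supercritical corrections collect into $-\sum_{i:\zeta_i^2 > \beta\eta_{\rm dec}^2}^{d_1}\zeta_i^2\left(1 + \frac{\beta\eta_{\rm dec}^2}{\zeta_i^2}\left(\log\frac{\beta\eta_{\rm dec}^2}{\zeta_i^2} - 1\right)\right)$; dividing by $2\eta_{\rm dec}^2$ yields Eq.~\eqref{eq:optimal-L_VAE-without-LDV}. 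The main obstacle is purely bookkeeping rather than conceptual: one must verify that the indicator boundary $\zeta_i > \frac{\sqrt{\beta}\sigma_i^*\eta_{\rm dec}}{\eta_{\rm enc}}$ coincides with the regime split $\beta\eta_{\rm dec}^2 < \zeta_i^2$ after substitution, and carefully track the cancellation removing the $\beta^2\eta_{\rm dec}^4/\zeta_i^2$ term, so that the closed form depends on $\zeta_i^2$ only through the dimensionless ratio $\beta\eta_{\rm dec}^2/\zeta_i^2$.
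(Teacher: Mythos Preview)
Your proposal is correct and follows exactly the route the paper intends: the corollary is stated as a direct consequence of Theorem~\ref{thm:linear-vae} (equivalently, of Propositions~\ref{proposition1}--\ref{proposition:min-sigma-enc}), and your argument is precisely the substitution of $\sigma_i^*$ from Proposition~\ref{proposition:min-sigma-enc} into the per-mode decomposition of Eq.~\eqref{eq: last objective with sigma}, followed by the case split on $\beta\eta_{\rm dec}^2 \lessgtr \zeta_i^2$. Your bookkeeping---the verification that the indicator threshold coincides with the regime split after substitution, and the cancellation of the $\beta^2\eta_{\rm dec}^4/\zeta_i^2$ terms---is accurate and in fact supplies more detail than the paper gives.
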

Corollary~\ref{corollary:min_VAE} gives the global minimum of the objective for a fixed decoding variance $\eta_{\rm dec}^2$.
The first summation considers all eigenvalues $\zeta_i^2$ while the second summation considers non-zero first $d_1$ eigenvalues. 

Here, we discuss the VAE with a Learnable Decoding Variance (LDV) $\eta_{\rm dec}^2$.
For shorthand, we denote $\eta_{\rm dec}^2 := s \in(0,\infty)$. When we want to optimize over $s$, we also need to include the partition function, $\frac{d_2}{2}\log s$, of the decoder in the loss $L_{\rm VAE}^{\rm LDV}$. We note that this partition function has been ignored in the main text because $s$ has been treated as a constant for $L_{\rm VAE}$. The loss function $L_{\rm VAE}^{\rm LDV}$ with the optimal $U^*$, $W^*$, and $\Sigma^*$ is thus given by combining Eq.~\eqref{eq:optimal-L_VAE-without-LDV} and the partition function $\frac{d_2}{2}\log s$:
\begin{align}
     G(s) & := L_{\rm VAE}^{\rm LDV}(U=U^*, W=W^*, \Sigma=\Sigma^*, \eta_{\rm dec}^2  = s) \nonumber\\
     & = \frac{1}{2 s} \sum_{i=1}^{d^*} \zeta_i^2 - \frac{1}{2}\sum^{ d_1}_{i:\zeta_i^2> \beta s}  \left[ \frac{\zeta_i^2}{s} + \beta \left(\log  \frac{\beta s}{\zeta_i^2} - 1 \right) \right] + \frac{d_2}{2} \log s .\label{eq: Gs}
\end{align}
Next, we investigate how $\beta$ affects the learnable decoding variance $s$ and identify the optimal $s^*$ under various conditions. Then, we show that, even with a learnable $\eta_{\rm dec}^2$, the specific choice of $\beta$ can lead to or avoid the posterior collapse.

Moreover, for clarity, let $\hat d^*$ be the number of non-zero $\zeta_i^2$ for $1\leq i \leq d^*$, and $\hat d_1$ be the number of non-zero $\zeta_i^2$ for $1\leq i \leq d_1$. It is easy to see that $\hat d_1 \leq \hat d^*$. The loss is
\begin{align}
    G(s) = \frac{1}{2 s} \sum_{i=1}^{\hat d^*} \zeta_i^2 - \frac{1}{2}\sum^{\hat d_1}_{i} F_i(s) + \frac{d_2}{2} \log s,
\end{align}
where 
\begin{equation}
    F_i(s) := \begin{cases} \frac{\zeta_i^2}{s} + \beta \left(\log  \frac{\beta s}{\zeta_i^2} - 1 \right) & \text{when $s < \frac{\zeta_i^2}{\beta}$,}\\
    0 & \text{otherwise}.
    \end{cases}
\end{equation}

\begin{lemma}
    $G(s)$ is differentiable.
\end{lemma}
\begin{proof} It suffices to check that $F_i(s)$ is differentiable on $(0,\infty)$ for all $i$. By definition, $F_i(s)$ is differentiable except at $\zeta_i^2 = \beta s$, and thus it suffices to check its differentiability at $\zeta_i^2/ \beta $.

First of all, $F$ is continuous:
\begin{equation}
    \lim_{s\to (\zeta_i^2/ \beta)^-}F(s) = 0 = \lim_{s\to (\zeta_i^2/ \beta)^+}F(s).
\end{equation}
Then, $F$ is differentiable:
\begin{equation}
    \lim_{s\to (\zeta_i^2/ \beta)^-}F'_i(s) =\lim_{s\to (\zeta_i^2/ \beta)^-} \frac{\beta}{s^2}(s - \zeta_i^2/ \beta) = 0 = \lim_{s\to (\zeta_i^2/ \beta)^+}F'_i(s).
\end{equation}
This finishes the proof.
\end{proof}

Therefore, we only need to check the stationary points and the right limit of $G(s)$ at $0$ and the left limit at $\infty$. We proceed by first considering the monotonicity over intervals defined by the piecewise function and then narrowing down the solution of $G'(s) = 0$ into a specific interval.

Let $s_p := \frac{1}{\beta}\zeta_{p}^2$ for $p = 1, \cdots, \hat d_1$. We define $s_{\hat d_1+1} = 0$ and $s_{0} = \infty$. Because the $\zeta_i$ are listed in non-increasing order, we have
$0 = s_{\hat d_1+1} < s_{\hat d_1} \leq \cdots \leq s_{1} < s_{0} = +\infty$. Then $(0, +\infty) = \bigcup_{p=0}^{\hat d_1} [s_{p+1}, s_p) - \{0\}$ can be decomposed into the union of a set of intervals.
For each interval $[s_{p+1}, s_p)$,
\begin{align}
    G'(s) = \frac{1}{2s^2}\left[ (d_2 - \beta p) s - \sum_{i=p+1}^{d^*} \zeta_i^2 \right],~\label{eq:piecewise-formula}
\end{align}
where we implicitly define $\sum_{i=p+1}^{\hat d^*} \zeta_i^2 := 0$ when $p \geq \hat d^*$.

The following lemma states the number of stationary point of $G(s)$ in an interval $[s_{p+1}, s_p)$.
\begin{lemma}~\label{lemma:interval-stationary}
At each interval $[s_{p+1}, s_p), 0\leq p \leq \hat d_1$, $G'(s)$ has at most one stationary point when $(d_2 - \beta \hat d_1) \neq 0$ or infinite stationary points when $(d_2 - \beta \hat d_1) = 0$ and $p = \hat d_1 = \hat d^*$.
\end{lemma}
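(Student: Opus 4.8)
The plan is to reduce the counting of stationary points on a fixed interval to counting the roots of an affine function, using the explicit derivative~\eqref{eq:piecewise-formula}. On $[s_{p+1}, s_p)$ we have $s>0$, hence $2s^2>0$, so $G'(s)=0$ if and only if the bracketed numerator
\[
 h_p(s) := (d_2 - \beta p)\, s - \sum_{i=p+1}^{d^*} \zeta_i^2
\]
vanishes. Since $\zeta_i^2 = 0$ for $i>\hat d^*$, the sum equals $\sum_{i=p+1}^{\hat d^*}\zeta_i^2$, which is strictly positive when $p<\hat d^*$ and zero when $p\geq \hat d^*$. Thus the stationary points of $G$ in $[s_{p+1}, s_p)$ are exactly the zeros of the affine map $h_p$ lying in that interval, and it suffices to analyze the zero set of $h_p$.

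First I would dispose of the non-degenerate case: whenever the slope $d_2-\beta p\neq 0$, the function $h_p$ is genuinely affine and has a single real root, so intersecting with $[s_{p+1},s_p)$ gives at most one stationary point. The only way an interval can carry more than one stationary point is therefore $d_2 - \beta p = 0$, in which case $h_p \equiv -\sum_{i=p+1}^{\hat d^*}\zeta_i^2$ is constant; this is a nonzero constant (no root, hence no stationary point) unless the sum vanishes, in which case $h_p\equiv 0$ and every point of the interval is stationary. So infinitely many stationary points occur precisely when the slope and the constant term vanish simultaneously.

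The remaining, and only slightly delicate, step is the index bookkeeping that pins down when this simultaneous vanishing happens. The constant term $\sum_{i=p+1}^{\hat d^*}\zeta_i^2$ vanishes exactly when $p\geq \hat d^*$; combined with the standing ordering $p\leq \hat d_1 \leq \hat d^*$, this forces $p=\hat d_1=\hat d^*$, on which interval the slope is $d_2-\beta\hat d_1$. Hence $h_p\equiv 0$ on some interval if and only if $p=\hat d_1=\hat d^*$ and $d_2-\beta\hat d_1 = 0$. Consequently, when $d_2-\beta\hat d_1\neq 0$ no interval can have $h_p\equiv 0$ (for any other $p$ the slope being zero still leaves a strictly negative constant term, giving zero roots), so each interval has at most one stationary point; and the infinite case arises exactly when $d_2-\beta\hat d_1 = 0$ with $p=\hat d_1=\hat d^*$, which is the claimed dichotomy. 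I expect essentially everything after the reduction to $h_p$ to be routine; the one point requiring care is this matching of the vanishing of the constant term against the bound $p\leq \hat d_1\leq \hat d^*$, together with consistent use of the empty-sum convention that makes $h_p$ identically zero rather than merely constant.
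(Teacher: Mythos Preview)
Your proposal is correct and follows essentially the same route as the paper: both reduce $G'(s)=0$ on $[s_{p+1},s_p)$ to the affine equation $(d_2-\beta p)\,s=\sum_{i=p+1}^{\hat d^*}\zeta_i^2$ via~\eqref{eq:piecewise-formula}, and then case-split on whether the slope and the constant term vanish. Your treatment is in fact slightly more careful than the paper's, which handles $p<\hat d_1$ and $p=\hat d_1=\hat d^*$ but leaves the case $p=\hat d_1<\hat d^*$ implicit; your index bookkeeping using $p\le\hat d_1\le\hat d^*$ closes that gap cleanly.
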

\begin{proof}
The existence of stationary points requires $G'(s) = 0$, which is equivalent to $(d_2 - \beta p) s = \sum_{i=p+1}^{d^*} \zeta_i^2$. When $p < \hat d_1$, $ \sum_{i=p+1}^{d^*} \zeta_i^2 > 0$ holds. Therefore, $G'(s) = 0$ has at most one solution. 

When $p = \hat d_1 = \hat d^*$, $G'(s) = 0$ holds only if $(d_2 - \beta \hat d_1) = 0$. Then, $\forall s \in (0, s_{\hat d_1})$ is the stationary point.
\end{proof}
Moreover, by Eq.~\eqref{eq:piecewise-formula}, we have the following corollary.
\begin{corollary}~\label{corollary:endpoints}
If there is a unique stationary point at $[s_{p+1}, s_p)$, $G'(s_p) G'(s_{p+1}) \leq 0 $.
\end{corollary}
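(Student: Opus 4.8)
The plan is to reduce the sign of $G'$ on the interval to the sign of a single affine function of $s$ and then read off the claim from its monotonicity. On $[s_{p+1},s_p)$ I would write the bracketed numerator of Eq.~\eqref{eq:piecewise-formula} as $h(s) := (d_2-\beta p)s - \sum_{i=p+1}^{d^*}\zeta_i^2$, so that $G'(s)=h(s)/(2s^2)$. Since $2s^2>0$ for every $s>0$, the sign of $G'(s)$ agrees with the sign of $h(s)$ throughout the interval, and it therefore suffices to establish $h(s_p)\,h(s_{p+1})\le 0$.

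First I would rule out a vanishing slope. If $d_2-\beta p=0$, then $h$ is the constant $-\sum_{i=p+1}^{d^*}\zeta_i^2$; a stationary point would force this constant to be zero, whence $h\equiv 0$ and the interval would contain infinitely many stationary points, contradicting the uniqueness hypothesis (this is exactly the degenerate branch isolated in Lemma~\ref{lemma:interval-stationary}). Hence $h$ is a strictly monotone affine function and possesses at most one zero.

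Next, let $s^\ast$ denote the unique stationary point, i.e.\ the unique zero of $h$ in $[s_{p+1},s_p)$. If $s^\ast\in(s_{p+1},s_p)$, strict monotonicity places $h(s_{p+1})$ and $h(s_p)$ on opposite sides of $h(s^\ast)=0$, so the product is strictly negative; if $s^\ast=s_{p+1}$ — the only endpoint the half-open interval admits, since $s_p$ is excluded — then $h(s_{p+1})=0$ and the product vanishes. Either way $h(s_p)\,h(s_{p+1})\le 0$. Here I would invoke the differentiability of $G$ established in the preceding lemma to evaluate $G'(s_p)$ as the continuous extension of the formula-$p$ expression, noting that it matches the formula of the adjacent subinterval at the shared endpoint because $\beta s_p=\zeta_p^2$; thus both $G'(s_{p+1})$ and $G'(s_p)$ may legitimately be computed from $h$.

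The one delicate point — and the main obstacle — is the extreme subinterval $p=\hat d_1$, where $s_{p+1}=0$ lies outside the domain and the prefactor $1/(2s^2)$ blows up. I would handle this by reading $G'(s_{p+1})$ as the one-sided limit $\lim_{s\to 0^+}G'(s)$; its sign still equals that of $h(0)=-\sum_{i=p+1}^{d^*}\zeta_i^2$, which is strictly negative precisely when a genuine stationary point exists in this subinterval (otherwise $h$ has its only root at $s=0\notin(0,s_{\hat d_1})$, contradicting the hypothesis), so the sign-change argument goes through unchanged. Everything else is routine sign bookkeeping.
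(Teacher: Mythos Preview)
Your proof is correct and is precisely the argument the paper leaves implicit: the paper states only that the corollary follows ``by Eq.~\eqref{eq:piecewise-formula},'' and your write-up supplies the obvious elaboration---that $G'$ on the subinterval shares its sign with the affine numerator $h$, so a unique zero forces a sign change between the endpoints. Your care about the continuity of $G'$ at $s_p$ (so that the index-$p$ formula may be used there) and about the degenerate endpoint $s_{\hat d_1+1}=0$ goes a bit beyond what the paper spells out, but it is consistent with how the paper subsequently uses the result.
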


The derivative at the endpoints can be computed as
\begin{align}
    G'(s_p) = \frac{\beta^2}{2\zeta_{p}^4} \left[ \left(\frac{d_2}{\beta} - (p-1) \right)\zeta_{p}^2 - \sum_{i=p}^{\hat d^*} \zeta_i^2 \right] = \frac{\beta^2}{2\zeta_{p}^4} \left[ \left(\frac{d_2}{\beta} - p \right)\zeta_{p}^2 - \sum_{i=p+1}^{\hat d^*} \zeta_i^2 \right].
\end{align}

Furthermore, once $G'(s_p)$ is non-negative at some endpoint $s_p$, $G'(s) > 0$ holds over $(s_p, \infty)$.
\begin{lemma}~\label{lemma:v-shape-points}
Let $p, t$ be such that $p \leq d_2 / \beta$ and $t > s_p$. Then, $t^2 G'(t) > s_p^2 G'(s_p)$.
\end{lemma}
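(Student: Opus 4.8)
The plan is to pass to the auxiliary function $\Psi(s) := 2s^2 G'(s)$, so that the target inequality $t^2 G'(t) > s_p^2 G'(s_p)$ becomes exactly $\Psi(t) > \Psi(s_p)$. The point of this reparametrization is that, by Eq.~\eqref{eq:piecewise-formula}, on each interval $[s_{q+1}, s_q)$ the function $\Psi$ is \emph{affine}, namely $\Psi(s) = (d_2 - \beta q)s - \sum_{i=q+1}^{\hat d^*}\zeta_i^2$, with constant slope $d_2 - \beta q$. Since the preceding lemma establishes that $G$ is continuously differentiable, $\Psi$ is continuous across the breakpoints $s_q$; hence $\Psi$ is a continuous, piecewise-linear function on $(0,\infty)$, and the whole problem reduces to controlling the signs of its slopes.

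First I would identify which intervals cover the region to the right of $s_p$. Because the $s_q$ are non-increasing in $q$, the half-line $[s_p, \infty)$ is exactly the union of the intervals indexed by $q = 0, 1, \dots, p-1$, and on the interior of each of these $\Psi'(s) = d_2 - \beta q$. The key estimate is then to show all these slopes are strictly positive: invoking the hypothesis $p \le d_2/\beta$, i.e. $\beta p \le d_2$, I would bound, for every $q \le p-1$, $\beta q \le \beta(p-1) = \beta p - \beta \le d_2 - \beta$, so that $\Psi'(s) = d_2 - \beta q \ge \beta > 0$ throughout. A continuous piecewise-linear function whose slope is strictly positive on each constituent subinterval is strictly increasing, so $\Psi$ is strictly increasing on $[s_p, \infty)$; evaluating at $t > s_p$ gives $\Psi(t) > \Psi(s_p)$, which is the claim.

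I expect the only real care to be bookkeeping at the breakpoints rather than any substantive analytic obstacle. Concretely, I must use continuity of $\Psi$ at the $s_q$ (the place where the differentiability lemma enters) to glue the strict monotonicity on each open subinterval into strict monotonicity across the entire half-line, and I should dispose of the degenerate case where coinciding $\zeta_i$ collapse some $[s_{q+1}, s_q)$ to the empty set by simply dropping those empty intervals. A useful sanity check along the way is that the endpoint expression for $G'(s_p)$ recorded just above equals $\tfrac12 \Psi(s_p)$ under the substitution $\zeta_p^2 = \beta s_p$, confirming that the left- and right-hand formulas for the boundary value agree and that no spurious jump is introduced.
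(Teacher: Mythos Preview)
Your proposal is correct and follows essentially the same approach as the paper: both arguments work with $\Psi(s)=2s^2 G'(s)$, invoke the piecewise formula~\eqref{eq:piecewise-formula}, and use the hypothesis $p\le d_2/\beta$ to get $d_2-\beta q>0$ for all $q\le p-1$. The only cosmetic difference is packaging---the paper picks the interval $(s_{q+1},s_q]$ containing $t$ and compares $\Psi(t)$ with $\Psi(s_p)$ in a single algebraic step, whereas you phrase the same computation as strict monotonicity of the continuous piecewise-linear $\Psi$ on $[s_p,\infty)$; your version is arguably tidier and sidesteps some index bookkeeping.
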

\begin{proof}
Let $t \in (s_{q+1}, s_q]$ such that $q+1 \leq p$. We thus have $t > s_{q+1} \geq \cdots \geq s_{p}$.
Because $\frac{d_2}{\beta} - (p-1) > 0$, we have
\begin{align}
    G'(t) & = \frac{1}{2t^2}\left[ (d_2 - \beta q) t - \sum_{i=q+1}^{\hat d^*} \zeta_i^2 \right] = \frac{1}{2t^2}\left[ (d_2 - \beta (p-1)) t - \sum_{i=p}^{q} \beta t - \sum_{i=p}^{\hat d^*} \zeta_i^2 \right]\\
    & > \frac{1}{2t^2}\left[ (d_2 - \beta (p-1)) t - \sum_{i=p}^{\hat d^*} \zeta_i^2 \right] = \frac{s_p^2}{t^2} G'(s_p).
\end{align}
\end{proof}

\begin{proposition}~\label{prop:global-stationary}
$G(s)$ has at most one stationary point over $(0, \infty)$ when $(d_2 - \beta \hat d_1) \neq 0$.
\end{proposition}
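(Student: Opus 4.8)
The plan is to collapse everything onto the sign behavior of the single scalar function $h(s) := 2s^2 G'(s)$. Since $s^2>0$ on $(0,\infty)$, the stationary points of $G$ are exactly the zeros of $h$, so it suffices to show $h$ has at most one zero. By the preceding differentiability lemma, $G'$ and hence $h$ is continuous on $(0,\infty)$, and reading off Eq.~\eqref{eq:piecewise-formula}, on each interval $[s_{p+1},s_p)$ the function is affine,
\begin{equation}
    h(s) = (d_2-\beta p)\,s - \sum_{i=p+1}^{\hat d^*}\zeta_i^2,
\end{equation}
with slope $d_2-\beta p$ and a nonnegative intercept term. So the whole problem becomes: an explicit continuous piecewise-linear function, how many times can it cross zero.

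First I would establish that $h$ is \emph{convex}. The breakpoints satisfy $s_1\geq\cdots\geq s_{\hat d_1}$, indexed so that \emph{increasing} $s$ \emph{decreases} $p$; hence crossing a breakpoint as $s$ grows raises the slope by $\beta>0$ (or a positive multiple of $\beta$ under ties $\zeta_p^2=\zeta_{p+1}^2$, where the intermediate interval is empty). A continuous piecewise-linear function whose slopes are nondecreasing in $s$ is convex, so $h$ is convex on $(0,\infty)$. Next I would pin down the two ends using the same formula: on the rightmost piece $p=0$ we get $h(s)=d_2 s-\sum_{i=1}^{\hat d^*}\zeta_i^2\to+\infty$ as $s\to\infty$ (because $d_2>0$), while on the leftmost piece $p=\hat d_1$ we get $h(s)\to -\sum_{i=\hat d_1+1}^{\hat d^*}\zeta_i^2=:L\le 0$ as $s\to 0^+$, with $L<0$ precisely when $\hat d_1<\hat d^*$ and $L=0$ precisely when $\hat d_1=\hat d^*$.

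Finally, a convex $h$ with $\lim_{s\to\infty}h=+\infty$ has sublevel set $\{h\le 0\}$ equal to an interval bounded above, and since $L\le 0$ this interval has the form $(0,s^*]$. When $L<0$, convexity together with $h(0^+)<0$ forces $h<0$ on $(0,s^*)$ and $h>0$ on $(s^*,\infty)$, giving exactly one zero. When $L=0$ the leftmost piece is $h(s)=(d_2-\beta\hat d_1)s$, and the sign of $d_2-\beta\hat d_1$ decides the outcome: a positive slope keeps $h>0$ throughout (no zero), while a negative slope makes $h$ dip below zero and return once (one zero). The only remaining possibility, $d_2-\beta\hat d_1=0$ with $L=0$, makes $h$ vanish identically on the whole leftmost segment and yields a continuum of stationary points; this is exactly the degenerate case excluded by the hypothesis $(d_2-\beta\hat d_1)\neq 0$. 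The same bookkeeping can alternatively be carried out interval-by-interval by feeding Lemma~\ref{lemma:v-shape-points} and Corollary~\ref{corollary:endpoints} into the sign analysis instead of invoking convexity abstractly. The step needing the most care is the boundary case $h(0^+)=0$: here ``at most one zero'' is not automatic and hinges precisely on the sign of $d_2-\beta\hat d_1$, which is where the hypothesis is used and where the excluded infinite-stationary-point case lives; a secondary nuisance is tracking ties $\zeta_p^2=\zeta_{p+1}^2$ and the distinction between $\hat d_1,\hat d^*$ and $d_1,d^*$ so that the intercepts are genuinely nonnegative and vanish only when claimed.
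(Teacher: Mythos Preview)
Your argument is correct. Both you and the paper work with the scalar $h(s)=2s^2G'(s)$, but organize the analysis differently. The paper argues by contradiction: assuming two stationary points in distinct intervals, it uses Corollary~\ref{corollary:endpoints} to locate endpoints where $G'\ge 0$ and $G'\le 0$ occur in the wrong order, then invokes Lemma~\ref{lemma:v-shape-points} (which says $t^2G'(t)>s_p^2G'(s_p)$ for $t>s_p$ once $p\le d_2/\beta$) to reach a contradiction. You instead observe directly that $h$ is continuous, piecewise linear, and \emph{convex} (the slopes $d_2-\beta p$ increase as $s$ crosses breakpoints), and then read off the zero count from the boundary values $h(0^+)=L\le 0$ and $h(\infty)=+\infty$. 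This is cleaner: convexity of $h$ subsumes the monotonicity content of Lemma~\ref{lemma:v-shape-points}, the contradiction machinery is bypassed, and your case split at $L=0$ isolates precisely where the hypothesis $(d_2-\beta\hat d_1)\neq 0$ is actually used---in fact showing it is only needed when $\hat d_1=\hat d^*$, a slight strengthening over the stated proposition. One cosmetic slip: the ``nonnegative intercept term'' $\sum_{i>p}\zeta_i^2$ enters $h$ with a minus sign, so the $y$-intercept of each piece is nonpositive; this is clear from your subsequent use and does not affect the argument.
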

\begin{proof}
We prove this by contradiction.
Let $s_a<s_b$ are two stationary points. Lemma~\ref{lemma:interval-stationary} implies these two stationary points are located in two intervals $[s_{p_a+1}, s_{p_a})$ and $[s_{p_b+1}, s_{p_b})$ with $p_a > p_b$. By Corollary~\ref{corollary:endpoints} there exists $s_l\in \{s_{p_a+1}, s_{p_a}\}$ such that $G'(s_l) \geq 0$, and $s_r\in \{s_{p_b+1}, s_{p_b}\}$ such that $G'(s_r) \leq 0$. Noticing that $s_{p_a+1} <s_{p_a} \leq s_{p_b+1} < s_{p_b}$, we conclude that $s_l\leq s_r$. If $s_l < s_r$, it contradicts to Lemma~\ref{lemma:v-shape-points}. If $s_l = s_r = s_{p_a}$, then there is no stationary points in the interval $[s_{p_a+1}, s_{p_a})$, which contradicts to the assumption.
\end{proof}

Now, we check whether $0$ and $\infty$ are minima.
For $s_+ = s_1 + \sum_{i=1}^{\hat d^*} \zeta_i^2 \in [s_1, +\infty)$, we have
\begin{align}
    G'(s_+) = \frac{1}{2s_+^2}\left[ d_2 s_+ - \sum_{i=1}^{\hat d^*} \zeta_i^2 \right] = \frac{1}{2s_+^2}\left[ d_2 s_1 +  (d_2 - 1)\sum_{i=1}^{\hat d^*} \zeta_i^2 \right] > 0,~\label{eq:s+}
\end{align}
which implies that the $\infty$ is not a minimum.

The behavior of $G'(s)$ in $(0, s_{\hat d_1})$ is more complicated.
\begin{align}
    G'(s) = \frac{1}{2 s^2} \left[ \left(\frac{d_2}{\beta} - \hat d_1 \right)s - \sum_{i=\hat d_1 +1}^{\hat d^*} \zeta_i^2 \right],
\end{align}
the sign of which is different for the following three different cases:
\begin{enumerate}
    \item $\hat d_1 = \hat d^*$ and $d_2/\beta - \hat d_1 > 0$;
    \item $\hat d_1 = \hat d^*$ and $d_2/\beta - \hat d_1 = 0$;
    \item $\hat d_1 < \hat d^*$ or $d_2/\beta - \hat d_1 < 0$.
\end{enumerate}

\underline{Case 1: $\hat d_1 = \hat d^*$ and $d_2/\beta - \hat d_1 > 0$.}
When $\hat d_1 = \hat d^*$ and $d_2/\beta - \hat d > 0$, $ \sum_{i=\hat d+1}^{\hat d^*} \zeta_i^2 = 0$ and thus $G'(s) > 0$ in $(0, s_{\hat d_1}]$. By Lemma~\ref{lemma:v-shape-points}, $G'(s) > 0$ for $s > s_{\hat d_1}$. Therefore, $G'(s) > 0$ for $s\in (0, +\infty)$. Then, there is no global minima for $s \in (0,+\infty)$. The loss function $L_{\rm VAE}^{\rm LDV}$ is ill-posed. Even though $s^*$ is converged to $0$, the model is deterministic.

\underline{Case 2: $\hat d_1 = \hat d^*$ and $d_2/\beta - \hat d_1 = 0$.}
When $\hat d_1 = \hat d^*$ and $d_2/\beta - \hat d = 0$, we have $ \sum_{i=\hat d+1}^{d^*} \zeta_i^2 = 0$ and thus $G'(s_{\hat d}) = 0$ for all $s \in (0, s_{\hat d_1})$. $G'(s_{\hat d}) = 0$ also holds by the continuity of $G'(s)$. For any $s > s_{\hat d_1}$, $G'(s) > 0$ by Lemma~\ref{lemma:v-shape-points}. Then the global minima for of $G(s)$ is the entire set of $(0, s_{\hat d_1}]$. In such a case, no posterior collapse happens.

\underline{Case 3: $\hat d_1 < \hat d^*$ or $d_2/\beta - \hat d_1 < 0$.}\footnote{The case where $\hat d_1 < \hat d^*$ and $\beta=1$ is the case discussed in~\citet{lucas2019don} and a variant of the case considered in~\citep{nakajima2015condition}}
The following proposition shows that the global minimum of $G'(s)$ is unique.
\begin{proposition}~\label{prop:global-minimum}
When $\hat d_1 < \hat d^*$ or $d_2/\beta - \hat d_1 < 0$,
$G(s)$ has a unique global minimum, which is the unique stationary point.
\end{proposition}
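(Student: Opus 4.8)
The plan is to pin down the minimizer through two ingredients: (a) $G$ attains a global minimum at some interior point of $(0,\infty)$, and (b) $G$ has at most one stationary point. Since $G$ is differentiable on $(0,\infty)$, any interior global minimizer must be a stationary point, so combining (a) and (b) forces the unique stationary point to be the unique global minimum, and rules out the minimum being attained at the boundary. Ingredient (b) is already largely supplied by Proposition~\ref{prop:global-stationary}, so the real work lies in establishing coercivity for (a) and in patching the one sub-case of (b) that escapes that proposition's hypothesis.

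For (a) I would argue by coercivity, namely that $G(s)\to+\infty$ both as $s\to0^+$ and as $s\to+\infty$, after which continuity yields an interior global minimizer $s^*$. Both limits follow by integrating the piecewise formula~\eqref{eq:piecewise-formula} on the two extreme intervals. On the innermost interval $(0,s_{\hat d_1})$,
\begin{equation}
    G(s) = \frac{1}{2s}\sum_{i=\hat d_1+1}^{\hat d^*}\zeta_i^2 + \frac{d_2-\beta\hat d_1}{2}\log s + \mathrm{const},
\end{equation}
and the two defining conditions of Case~3 are exactly what make this blow up: if $\hat d_1<\hat d^*$ the first term diverges to $+\infty$, whereas if $\hat d_1=\hat d^*$ (so the sum is empty) the hypothesis $d_2/\beta-\hat d_1<0$ sends the logarithmic term to $+\infty$. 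On the outermost interval $[s_1,\infty)$ the same integration gives $G(s)=\tfrac{d_2}{2}\log s+\tfrac{1}{2s}\sum_{i=1}^{\hat d^*}\zeta_i^2+\mathrm{const}\to+\infty$ since $d_2>0$, consistent with~\eqref{eq:s+}.

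For (b), when $d_2-\beta\hat d_1\neq0$ uniqueness is immediate from Proposition~\ref{prop:global-stationary}. The only remaining possibility inside Case~3 is $d_2-\beta\hat d_1=0$, which forces $\hat d_1<\hat d^*$ (the alternative $\hat d_1=\hat d^*$ is Case~2, excluded). A clean uniform way to handle all of Case~3 at once, which I would actually prefer, is to study $H(s):=2s^2G'(s)$ directly. By~\eqref{eq:piecewise-formula} it is continuous and piecewise-linear with slope $d_2-\beta p$ that is non-decreasing as $s$ increases (since $p$ decreases and $\beta>0$), hence $H$ is convex; by the computations above it is negative near $0$ and positive near $\infty$. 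Its negativity set $\{H<0\}$ is a sublevel set of a convex function, hence an interval, and since it contains a right-neighborhood of $0$ and is bounded above, it equals $(0,b)$ with $H(b)=0$. Any further zero $c>b$ would, by convexity together with $H\geq0$ on $[b,\infty)$, force $H\equiv0$ on $[b,c]$; but a flat zero piece requires $d_2-\beta p=0$ and $\sum_{i=p+1}^{\hat d^*}\zeta_i^2=0$ simultaneously, i.e. $p=\hat d_1=\hat d^*$ and $d_2/\beta=\hat d_1$, which is precisely Case~2 and thus impossible here. Hence $b$ is the unique zero, and since $G'$ has the sign of $H$, the point $s^*=b$ is the unique stationary point, with $G$ decreasing before it and increasing after.

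The main obstacle is exactly the boundary sub-case $d_2=\beta\hat d_1$ with $\hat d_1<\hat d^*$: it slips outside the literal hypothesis $d_2-\beta\hat d_1\neq0$ of Proposition~\ref{prop:global-stationary}, so one must either re-run the monotonicity argument behind Corollary~\ref{corollary:endpoints} and Lemma~\ref{lemma:v-shape-points} on the intervals with $p<\hat d_1$ (where the slope $d_2-\beta p$ is strictly positive and $G'<0$ throughout $(0,s_{\hat d_1})$), or, more safely, rely on the convexity-of-$H$ argument above, which sidesteps the case split. Once uniqueness of the stationary point is secured and coercivity guarantees an interior minimizer, the two combine to identify that minimizer as the unique global minimum, completing the proof.
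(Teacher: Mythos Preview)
Your proposal is correct, and it takes a genuinely different route from the paper's proof.

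For existence, the paper does not argue via coercivity of $G$; instead it fixes explicit points $s_-$ and $s_+$, checks $G'<0$ on $(0,s_-)$ and $G'>0$ on $(s_+,\infty)$, and then takes a minimum on the compact interval $[s_-,s_+]$. Your coercivity argument (integrating the piecewise formula for $G'$ on the extreme intervals and letting $s\to 0^+$ and $s\to\infty$) reaches the same conclusion more directly and avoids the somewhat awkward definition of $s_-$.

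For uniqueness, the paper argues by contradiction: assuming two global minima in distinct intervals, it extracts endpoint sign information via Corollary~\ref{corollary:endpoints} and derives a contradiction with the monotonicity Lemma~\ref{lemma:v-shape-points}; it then appeals to Proposition~\ref{prop:global-stationary} to say the global minimum is the unique stationary point. Your approach is cleaner: observing that $H(s)=2s^2G'(s)$ is continuous piecewise linear with non-decreasing slopes, hence convex, and that it changes sign exactly once, you obtain a single zero $b$ of $G'$ with $G'<0$ on $(0,b)$ and $G'>0$ on $(b,\infty)$, giving the unique stationary point and the unique global minimum in one stroke. This also patches the boundary sub-case $d_2=\beta\hat d_1$ with $\hat d_1<\hat d^*$, which you correctly note falls outside the literal hypothesis of Proposition~\ref{prop:global-stationary}; the paper's last line invoking that proposition does not explicitly cover this sub-case, so your convexity route is in fact more complete here.
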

\begin{proof}
We first prove the existence.
Let $s_- = \min\left\{s_{\hat d_1}, \sum_{i=\hat d_1 +1}^{d^*} \zeta_i^2 / \left(\frac{d_2}{\beta} - \hat d_1\right)\right\}$. Then, 
\begin{align}
    G'(s) = \frac{1}{2 s^2}\left[\left(\frac{d_2}{\beta} - \hat d_1\right)s  -\sum_{i=\hat d_1 +1}^{d^*} \zeta_i^2\right] < 0
\end{align}
holds in $(0, s_-)$. Recall that $G'(s_+) > 0$ in Eq.~\eqref{eq:s+}. Then, 
\begin{align}
    G'(s) = \frac{1}{2 s^2}\left[\frac{d_2}{\beta} s  -\sum_{i=\hat d_1 +1}^{d^*} \zeta_i^2\right] > \frac{1}{2 s^2}\left[\frac{d_2}{\beta} s_+  -\sum_{i=\hat d_1 +1}^{d^*} \zeta_i^2\right] > 0
\end{align}
holds in $(s_+, \infty)$.
Meanwhile, the continuous function $G$ has minima in the closed interval $[s_-, s_+]$. Then, there exists global minima of $G(s)$ in $(0, \infty)$.

We prove the uniqueness by contradiction. Suppose there are two different global minima such that $s_a^* < s_b^*$.
On the one hand, if there are two $p_a < p_b$ such that $s_a^* \in [s_{p_a+1}, s_{p_a})$ and $s_b^* \in [s_{p_b+1}, s_{p_b})$, we have $G'(s_{p_a+1}) > 0$. At the same time, $s_b^*$ is the global minimum implies $G'(s_{p_b+1})\leq 0$, which contradicts to the Lemma~\ref{lemma:v-shape-points}.
On the other hand, if there is a unique $p_a$ such that $s_a^*, s_b^* \in [s_{p_a+1}, s_{p_a})$, $G'(s_a^*) = G'(s_b^*) = 0$, that is $(d_2 / \beta - \hat p_a) s_a^* = (d_2 / \beta - \hat p_a) s_b^*$. This implies $d_2 / \beta - \hat p_a = 0$. Therefore, $G'(s_b^*) = - \frac{1}{2 s_b^{*2}} \sum_{i=\hat p_a +1}^{d^*} \zeta_i^2 = 0$, which contradicts the proposition assumption.

By Proposition~\ref{prop:global-stationary}, the global minimum of differentiable function $G$ is also the stationary point.
\end{proof}

Now, we are ready to find the optimal $s^*$.

\begin{theorem}
When $\hat d_1 < \hat d^*$ or $d_2/\beta - \hat d_1 < 0$,
\begin{itemize}
    \item The optimal decoding variance is $\eta_{\rm dec}^{*2} = \frac{\sum_{i=\hat d_1+1}^{\hat d^*} \zeta_i^2}{d_2 - \beta \hat d_1} \in (0, s_{\hat d_1}) $ if and only if
\begin{align}
    \beta < \frac{d_2 \zeta_{\hat d_1}^2}{d_1\zeta_{\hat d_1}^2 + \sum_{i=\hat d_1+1}^{\hat d^*} \zeta_i^2}.
\end{align}
    \item The optimal decoding variance is $\eta_{\rm dec}^{*2} = \frac{\sum_{i=p+1}^{\hat d^*} \zeta_i^2}{d_2 - \beta p} \in (s_{p+1}, s_p) $, for $1\leq p<\hat d_1$ if and only if
\begin{align}
    \frac{d_2 \zeta_{p+1}^2}{\sum_{i=p+2}^{\hat d^*} \zeta_i^2 + (p+1) \zeta_{p+1}^2} \leq \beta <\frac{d_2 \zeta_{p}^2}{\sum_{i=p+1}^{\hat d^*} \zeta_i^2 + p \zeta_{p}^2}.
\end{align}
    \item  The optimal decoding variance is $\eta_{\rm dec}^{*2}=\frac{1}{d_2} \sum_{i=1}^{\hat d^*}\zeta_i^2 \in (s_1, \infty)$ if and only if
\begin{align}
    \beta \geq \frac{d_2 \zeta_1^2}{\sum_{i=1}^{\hat d^*} \zeta_i^2}.
\end{align}
\end{itemize}
\end{theorem}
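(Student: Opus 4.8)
The plan is to exploit what has already been proven in Proposition~\ref{prop:global-minimum}: under either hypothesis of the theorem ($\hat d_1 < \hat d^*$ or $d_2/\beta - \hat d_1 < 0$) the loss $G$ possesses a \emph{unique} global minimum $s^*$, and this minimizer is the unique stationary point. Since the intervals $[s_{p+1},s_p)$ for $p=0,1,\dots,\hat d_1$ partition $(0,\infty)$, the whole problem reduces to deciding, as a function of $\beta$, which interval contains $s^*$ and reading off the stationary point there. On $[s_{p+1},s_p)$ the derivative is given by Eq.~\eqref{eq:piecewise-formula}, so solving $G'(s)=0$ produces the candidate $s^{(p)} := \bigl(\sum_{i=p+1}^{\hat d^*}\zeta_i^2\bigr)/(d_2-\beta p)$. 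I would first record that throughout the relevant $\beta$-range one has $d_2-\beta p>0$, so $s^{(p)}$ is positive and finite; this is automatic because the upper $\beta$-threshold is bounded by $d_2/p$.

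The core step is to convert the statement ``$s^*$ lies in interval $p$'' into a two-sided inequality on $\beta$ via the signs of $G'$ at the two endpoints. Because $s^*$ is the unique minimizer and, by Lemma~\ref{lemma:v-shape-points}, $G'$ changes sign from negative to positive at most once, one has the clean equivalence $s^*\in[s_{p+1},s_p)$ iff $G'(s_{p+1})\le 0$ and $G'(s_p)>0$. Substituting the closed-form endpoint derivative $G'(s_q)=\frac{\beta^2}{2\zeta_q^4}\bigl[(\tfrac{d_2}{\beta}-q)\zeta_q^2-\sum_{i=q+1}^{\hat d^*}\zeta_i^2\bigr]$ and writing $\beta_q := d_2\zeta_q^2/\bigl(q\zeta_q^2+\sum_{i=q+1}^{\hat d^*}\zeta_i^2\bigr)$, a short computation gives $G'(s_q)\ge 0 \iff \beta\le \beta_q$. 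Hence $s^*\in[s_{p+1},s_p)$ precisely when $\beta_{p+1}\le\beta<\beta_p$, which is exactly the window in the second bullet, and on that window $s^*=s^{(p)}$. I would also verify that at the left boundary $\beta=\beta_{p+1}$ the candidate $s^{(p)}$ coincides with $s_{p+1}$ by continuity of $G'$, so the windows are correctly half-open and the ``if and only if'' follows directly from the partition.

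The two extreme bullets are then handled by the degenerate endpoints. For $p=\hat d_1$ the interval is $(0,s_{\hat d_1})$; the left ``endpoint'' imposes no constraint because $G'(s)<0$ as $s\to 0^+$ (established in the proof of Proposition~\ref{prop:global-minimum}), so only $G'(s_{\hat d_1})>0$, i.e. $\beta<\beta_{\hat d_1}$, is needed. Here the nontriviality of $s^{(\hat d_1)}$ forces $\hat d_1<\hat d^*$, and I would point out the identity $\hat d_1=d_1$ in this regime (since $\hat d_1<d_1$ would make $\zeta_{\hat d_1+1}=0$ and hence $\hat d^*=\hat d_1$, a contradiction); this reconciles the $d_1$ appearing in the stated threshold with the $\hat d_1$ produced by the endpoint computation. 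For $p=0$ the interval is $(s_1,\infty)$; the right end imposes no constraint because $G'(s_+)>0$ by Eq.~\eqref{eq:s+}, leaving only $G'(s_1)\le 0$, i.e. $\beta\ge\beta_1=d_2\zeta_1^2/\sum_{i=1}^{\hat d^*}\zeta_i^2$, with minimizer $s^{(0)}=\tfrac1{d_2}\sum_{i=1}^{\hat d^*}\zeta_i^2$.

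I expect the main obstacle to be bookkeeping rather than a conceptual hurdle: one must confirm that the half-open windows $[\beta_{p+1},\beta_p)$ are mutually consistent and jointly exhaust the admissible range of $\beta$, and that ties $\zeta_p=\zeta_{p+1}$ (giving $s_p=s_{p+1}$) correspond only to empty, hence vacuous, $\beta$-windows. Exhaustiveness is essentially free: uniqueness of $s^*$ guarantees that for each admissible $\beta$ exactly one interval contains the minimizer, so the windows can neither overlap nor leave a gap, and the remaining work is the routine algebra translating each endpoint-sign condition into the stated bound on $\beta$.
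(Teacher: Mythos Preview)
Your proposal is correct and follows essentially the same route as the paper's proof: both reduce the problem, via Proposition~\ref{prop:global-minimum} and the piecewise formula Eq.~\eqref{eq:piecewise-formula}, to checking the signs of $G'$ at the interval endpoints $s_p$, translating $G'(s_{p+1})\le 0<G'(s_p)$ into the stated $\beta$-window, and then reading off $s^*$ as the unique root on that interval. Your treatment is in fact more careful than the paper's sketch; in particular, your observation that $\hat d_1<\hat d^*$ forces $\hat d_1=d_1$ (reconciling the $d_1$ in the first bullet's threshold with the $\hat d_1$ that the endpoint computation actually produces) and your discussion of the boundary case $\beta=\beta_{p+1}$ and of ties $\zeta_p=\zeta_{p+1}$ are details the paper omits.
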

\begin{proof}
To ensure $s^* \in (0, s_{\hat d_1})$, then the condition for $\beta$ can be derived by letting $G'(s_{\hat d_1}) > 0$, that is
\begin{align}
    (\frac{d_2}{\beta} - \hat d_1) \zeta_{\hat d_1}^2 - \sum_{i=\hat d_1+1}^{\hat d^*} \zeta_i^2 > 0.
\end{align}
The optimal $s$ is solved by $G'(s) = 0$, that is
\begin{align}
    (d_2 - \beta \hat d_1) s - \sum_{i=\hat d_1+1}^{\hat d^*} \zeta_i^2 = 0.
\end{align}

To ensure $s^* \in [s_{p+1}, s_{p})$, then the condition for $\beta$ can be derived by letting $G'(s_{p}) > 0 \geq G'(s_{p+1})$, that is
\begin{align}
    (\frac{d_2}{\beta} - p) \zeta_{p}^2 - \sum_{i=p+1}^{d^*} \zeta_i^2 > 0 \geq (\frac{d_2}{\beta} - (p+1)) \zeta_{p+1}^2 - \sum_{i=p+2}^{d^*} \zeta_i^2.
\end{align}
Then the optimal $s^*$ is solved by $G'(s) = 0$, that is
\begin{align}
     (d_2 - \beta p) s - \sum_{i=p+1}^{\hat d^*} \zeta_i^2 = 0.
\end{align}

To ensure $s^* \in [s_1, +\infty)$, that is, $G'(s_{1})\leq 0$. The condition for $\beta$ can be derived by solving
\begin{align}
\frac{d_2}{\beta}\zeta_{1}^2- \sum_{i=1}^{d^*} \zeta_i^2 \leq 0.
\end{align}
Optimal $s^*$ can be found by solving $G'(s) = 0$, that is
\begin{align}
    d_2 s^* - \sum_{i=1}^{\hat d^*} \zeta_i^2 = 0.
\end{align}
\end{proof}

\begin{remark}
By Theorem~\ref{thm:linear-vae}, the posterior collapse for an eigenvalue $\zeta_i^2$ happens when $\beta \eta_{\rm dec}^2 \geq \zeta_i^2$, which is equivalent to $s \geq s_p$ for $1\leq p\leq \hat d_1$. Therefore, different types of posterior collapse are related to the following conditions of $s^*$
\begin{compactitem}
    \item No collapse: $s\in (0, s_{\hat d_1})$;
    \item Partial collapse: $s\in [s_{p+1}, s_p)$ for $1\leq p\leq \hat d_1$;
    \item Complete collapse: $s\in [s_{1}, \infty)$.
\end{compactitem}
Notably, our result shows that the linear VAE with learnable decoding variance does not suffice to lead to no collapse. For an arbitrary choice of $\zeta_i^2$, the condition for no posterior collapse is $\beta \in (0, d_2/\hat d^*)$.
When $d_2=d_0=d^*$, $d_1=\hat d_1$, and $\beta=1$, the third case reduces to the result of \cite{lucas2019don}. However, posterior collapse also happens in this case. For example, when $\zeta_1^2 = ... = \zeta_{\hat d^*}^2$, the condition for the complete collapse of the model in~\citet{lucas2019don} is $[1, \infty)$, which covers the current choice of $\beta = 1$.
\end{remark}

\begin{table}[t]
\centering
\caption{The effect of $\beta$ for posterior collapse with learnable decoding variance}~\label{tb:beta-conditions}
\begin{tabular}{llp{4cm}l}
\toprule
Dimension & $\beta$ Range & Posterior Collapse & $\eta_{\rm dec}^{*2}$ \\
\midrule
$\hat d_1 = \hat d^*$  &$(0, d_2 / \hat d_1)$  & NA & NA \\
$\hat d_1 = \hat d^*$  &$\{ d_2 / \hat d_1\} $ & No collapse or $\zeta_{\hat d_1}^2$ only & $(0, s_{\hat d_1}]$\\
$\hat d_1 < \hat d^*$  & $\left(0, \frac{d_2 \zeta_{\hat d_1}^2}{d_1\zeta_{\hat d_1}^2 + \sum_{i=\hat d_1+1}^{\hat d^*} \zeta_i^2}\right)$ & No collapse & $\frac{\sum_{i=\hat d_1+1}^{\hat d^*} \zeta_i^2}{d_2 - \beta \hat d_1}$ \\
$\hat d_1 \leq \hat d^*$  & $\left[\frac{d_2 \zeta_{p+1}^2}{\sum_{i=p+2}^{\hat d_1} \zeta_i^2 + (p+1) \zeta_{p+1}^2}, \frac{d_2 \zeta_{p}^2}{\sum_{i=p+1}^{\hat d_1} \zeta_i^2 + p \zeta_{p}^2}\right)$ & Partial collapse except the first $p$ modes, $ 1\leq p< \hat d_1$ & $\frac{\sum_{i=p+1}^{\hat d^*} \zeta_i^2}{d_2 - \beta p}$ \\
$\hat d_1 \leq \hat d^*$  & $\left[ \frac{d_2 \zeta_1^2}{\sum_{i=1}^{\hat d^*} \zeta_i^2}, +\infty\right)$ & Complete collapse & $\frac{1}{d_2} \sum_{i=1}^{\hat d^*}\zeta_i^2$\\
\bottomrule
\end{tabular}
\end{table}

To summarize, Table~\ref{tb:beta-conditions} concludes five situations for posterior collapse under various conditions.

\clearpage
\section{Proofs}\label{app sec: proofs}

\subsection{Proof of Proposition~\ref{proposition1}}
\begin{proof}
Minimizing $L_{\rm VAE}(U, W)$ in Eq.~\eqref{eq:general-vae-loss} is equivalent to the following minimization problem
\begin{align}
    \min_{U, W} \mathbb{E}_{x} \|U W^\top x - y\|^2 +  {\rm Tr}(U \Sigma U^\top) + \beta \frac{\eta_{\rm dec}^2}{\eta_{\rm enc}^2} {\rm Tr}(W^\top A W).
\end{align}
It is assumed that $\tilde x := \Phi^{-\frac{1}{2}} P_A^\top x$, and $x := P_A \Phi^{\frac{1}{2}} \tilde x$. By defining $V := \Phi^{\frac{1}{2}}P_A^\top W$, we obtain
\begin{align}
    & \mathbb{E}_{x} \|U W^\top x - y\|^2 +  {\rm Tr}(U \Sigma U^\top) + \beta \frac{\eta_{\rm dec}^2}{\eta_{\rm enc}^2} {\rm Tr}(W^\top A W)\\
    = &\mathbb{E}_{x} \|U W^\top P_A \Phi^{\frac{1}{2}} \tilde x - y\|^2 + {\rm Tr}(U \Sigma U^\top) + \beta \frac{\eta_{\rm dec}^2}{\eta_{\rm enc}^2} {\rm Tr}(W^\top P_A \Phi P_A^\top W)\\
    = &\mathbb{E}_{x} \|U V \tilde x - y\|^2 + {\rm Tr}(U \Sigma U^\top) + \beta \frac{\eta_{\rm dec}^2}{\eta_{\rm enc}^2} \| V \|_F^2\\
    = & {\rm Tr} (U^\top U V^\top V - 2 U^\top ZV) + \mathbb{E}_{\tilde x} [y^\top y] + {\rm Tr}(U \Sigma U^\top) + \beta \frac{\eta_{\rm dec}^2}{\eta_{\rm enc}^2} {\rm Tr} (V^\top V)\\
    = & \|U V^\top - Z\|^2_F + {\rm Tr}(U \Sigma U^\top) + \beta \frac{\eta_{\rm dec}^2}{\eta_{\rm enc}^2} \|V \|^2_F  - \|Z\|^2_F + \mathbb{E}_{\tilde x} [y^\top y]\\
    = & \|U V^\top - Z\|^2_F + {\rm Tr}(U \Sigma U^\top) + \beta \frac{\eta_{\rm dec}^2}{\eta_{\rm enc}^2} \|V \|^2_F,
\end{align}
where we have used the relation $\E[\tilde{x}\tilde{x}^{T}] = I$ and $\|Z\|^2_F = \mathbb{E}_{\tilde x} [y^\top y]$. Thus, the desired $(U, V)$ can be obtained from minimizing $L(U,V) = \|U^\top V - Z\|^2_F + {\rm Tr}(U \Sigma U^\top) + \beta \frac{\eta_{\rm dec}^2}{\eta_{\rm enc}^2} \|V \|^2_F $. This finishes the proof.
\end{proof}

\subsection{Proof of Proposition~\ref{proposition:min-Luv}}
\begin{proof}
One of the necessary conditions for the global minimum is the zero gradient of $L(U, V)$. We then find the global minimum under the zero gradient condition. Consider
\begin{align}
    \frac{1}{2}\frac{\partial L(U, V)}{\partial V} &= VU^\top U - Z^\top U + \beta \frac{\eta_{\rm dec}^2}{\eta_{\rm enc}^2} V = 0,
\end{align}
which implies
\begin{align}
    & V = Z^\top U \left[\beta \frac{\eta_{\rm dec}^2}{\eta_{\rm enc}^2} I + U^\top U\right]^{-1}.~\label{eq:V_by_U}
\end{align}
Plugging Eq.~\eqref{eq:V_by_U} into the objective in Eq.~\eqref{eq:LVAE_is_RSVD}, we have
\begin{align}
    L(U, V) = & {\rm Tr}\left[\left(U^\top U + \beta \frac{\eta_{\rm dec}^2}{\eta_{\rm enc}^2} I\right) V^\top V - 2 U^\top Z V\right] + {\rm Tr}(U \Sigma U^\top) + \|Z\|^2_F\\
    = & {\rm Tr}(U \Sigma U^\top) - {\rm Tr} (U^\top Z V) + \|Z\|^2_F\\
    = & \underbrace{{\rm Tr}(U \Sigma U^\top) - {\rm Tr} \left[ U^\top Z Z^\top U \left(\beta \frac{\eta_{\rm dec}^2}{\eta_{\rm enc}^2} I + U^\top U\right )^{-1}\right]}_{:= J} + \|Z\|^2_F.~\label{eq:J}
\end{align}

Consider the SVD of matrix $U = Q\Lambda P$ where $Q\in \mathbb{R}^{d_2\times d_2}$ and $P\in \mathbb{R}^{d_1\times d_1}$ are orthogonal matrices, $\Lambda \in \mathbb{R}^{d_2\times d_1}$ is the rectangular diagonal matrix. Meanwhile, consider
\begin{align}
\left(\beta\frac{ \eta_{\rm dec}^2}{\eta_{\rm enc}^2} I + P^\top \Lambda^\top \Lambda P\right)^{-1} = P^\top \left(\beta\frac{ \eta_{\rm dec}^2}{\eta_{\rm enc}^2} I + \Lambda^\top \Lambda \right)^{-1} P.
\end{align}
Let diagonal matrix $\Gamma = \beta\frac{ \eta_{\rm dec}^2}{\eta_{\rm enc}^2} I + \Lambda^\top \Lambda$. 
Recall the SVD of $Z = F \Sigma_Z G$,
then the Eq.~\eqref{eq:J} is rewritten as
\begin{align}
J = {\rm Tr} \left[  \Lambda^\top \Lambda \Sigma\right] - {\rm Tr} \left[ (Q \Lambda \Gamma^{-1} \Lambda^\top Q^\top) ( F \Sigma_Z\Sigma_Z^\top F^\top )\right].
\end{align}
We note that $\Lambda\Gamma \Lambda^\top $ and $\Sigma_Z \Sigma_Z^\top$ are square diagonal matrices in $\mathbb{R}^{d_2 \times d_2}$. Since $\Sigma_Z \in \mathbb{R}^{d_2\times d_0}$ and there are only $\min(d_0, d_2)$ non-zero values, i.e., $\zeta_i, i = 1, ..., \min(d_0, d_2)$. We denote $\zeta_i = 0$ for $\min(d_0, d_2) < i \leq d_1$ if $d_1 > \min(d_0, d_2)$ for convenience.
By von Neumann's Trace Inequality~\citep{von1962some}, the trace of the product of two real symmetric matrices is upper bounded by the sum of the product of their decreasing eigenvalues, specifically,
\begin{align}
    {\rm Tr} \left[ (Q \Lambda \Gamma^{-1} \Lambda^\top Q^\top) ( F \Sigma_Z\Sigma_Z^\top F^\top )\right] \leq {\rm Tr} \left[ \Lambda \Gamma^{-1} \Lambda^\top \Sigma_Z\Sigma_Z^\top \right].
\end{align}
The equality holds if and only if $Q = F$.
Then the lower bound of $J$ is achieved when optimal $Q^* = F$.
\begin{align}
    J \geq {\rm Tr} \left[  \Lambda^\top \Lambda \Sigma\right] - {\rm Tr} \left[ \Lambda \Gamma^{-1} \Lambda^\top \Sigma_Z\Sigma_Z^\top \right] = \underbrace{\sum_{i=1}^{d_1}  \sigma_i^2\lambda_i^2  - \frac{\zeta^2_i \eta_{\rm enc}^2 \lambda_i^2 }{ \beta \eta_{\rm dec}^2 + \eta_{\rm enc}^2\lambda_i^2 }}_{:=J_{Q^*}}.
\end{align}
$J_{Q^*}$ can be further minimized over all $\lambda_i$. The optimal $\lambda_i^*$ can be determined by setting the corresponding gradients to zero. 
Consider $t_i = \lambda_i^2 \geq 0$,
\begin{align}
    \frac{\partial J_{Q^*}}{\partial t_i}
    = & \frac{\partial}{\partial t_i}\left[ \sigma_i^2 t_i  - \frac{\zeta^2_i \eta_{\rm enc}^2 t_i }{ \beta \eta_{\rm dec}^2 + \eta_{\rm enc}^2 t_i } \right]\\
    = & \sigma_i^2 - \frac{\zeta^2_i \eta_{\rm enc}^2 (\beta \eta_{\rm dec}^2 + \eta_{\rm enc}^2 t_i) - \eta_{\rm enc}^2 \zeta^2_i \eta_{\rm enc}^2 t_i }{(\beta \eta_{\rm dec}^2 + \eta_{\rm enc}^2 t_i)^2}\\
    = & \sigma_i^2 - \frac{\zeta^2_i \eta_{\rm enc}^2 \beta \eta_{\rm dec}^2}{(\beta \eta_{\rm dec}^2 + \eta_{\rm enc}^2 t_i)^2}\\
    = & \frac{\sigma_i^2 (\beta \eta_{\rm dec}^2 + \eta_{\rm enc}^2 t_i)^2 - \zeta^2_i \eta_{\rm enc}^2 \beta \eta_{\rm dec}^2}{(\beta \eta_{\rm dec}^2 + \eta_{\rm enc}^2 t_i)^2} = 0.\label{eq:condition_lambda}
\end{align}
Two solutions of the Eq.~\eqref{eq:condition_lambda} are
\begin{align}
    t_i^{(1)} & = \frac{\sqrt{\beta} \eta_{\rm dec}}{\sigma_i \eta_{\rm enc}} \left(\zeta_i - \frac{\sqrt{\beta}\sigma_i \eta_{\rm dec}}{\eta_{\rm enc}}\right),\\
    t_i^{(2)} & = \frac{\sqrt{\beta} \eta_{\rm dec}}{\sigma_i \eta_{\rm enc}} \left(- \zeta_i - \frac{\sqrt{\beta}\sigma_i \eta_{\rm dec}}{\eta_{\rm enc}}\right).
\end{align}
We see that $t_i \geq 0 > t_i^{(2)}$, then the monotonicity of $J_Q^*$ with respect to $t_i$ over $(0, +\infty)$ only depends on $t_i^{(1)}$. Here are two situations:
\underline{(1) $t_i^{(1)} \leq 0$:} $\frac{\partial J_{Q^*}}{\partial t_i} \geq 0$, then $J_Q^*$ increases monotonically with $t_i$. Then the optimal $t_i^* = 0$. \underline{(1) $t_i^{(1)} > 0$:} $\frac{\partial J_{Q^*}}{\partial t_i} > 0$ when $t_i > t_i^{(1)}$ and $\frac{\partial J_{Q^*}}{\partial t_i} < 0$ when $t_i < t_i^{(1)}$. Then the optimal $t_i^* = t_i^{(1)}$.
Therefore, optimal $\lambda_i^*$ is summarized by the two situations above with $\lambda_i^{*2} = t_i^*$

\begin{align}
    \lambda_i^* = \sqrt{\max\left(0,\frac{\sqrt{\beta} \eta_{\rm dec}}{\sigma_i \eta_{\rm enc}} \left(\zeta_i - \frac{\sqrt{\beta}\sigma_i \eta_{\rm dec}}{\eta_{\rm enc}}\right)\right)}, i = 1, ..., d_1.
\end{align}
As a result, $U = Q^* \Lambda^* P $ where $P$ is an arbitrary orthogonal matrix in $\mathbb{R}^{d_1\times d_1}$.
The optimal $V^*$ can also be determined by Eq.~\eqref{eq:V_by_U}
\begin{align}
    V^* = \bar G \Theta P,
\end{align}
where $\bar G = [g_1, ..., g_{d_1}]$, $\Theta = {\rm diag}(\theta_1, ..., \theta_{d_1})$ where $\theta_i = \sqrt{\max\left(0,\frac{\sigma_i\eta_{\rm enc}}{\sqrt{\beta} \eta_{\rm dec}} \left(\zeta_i - \frac{\sqrt{\beta}\sigma_i \eta_{\rm dec}}{\eta_{\rm enc}}\right)\right)}$.
\end{proof}

\subsection{Proof of Corollary~\ref{corollary:min-Luv-val}}
\begin{proof}
The minimum value can be obtained by plugging in the optimal 
\begin{align}
    \lambda_i^* = \sqrt{\max\left(0,\frac{\sqrt{\beta} \eta_{\rm dec}}{\sigma_i \eta_{\rm enc}} \left(\zeta_i - \frac{\sqrt{\beta}\sigma_i \eta_{\rm dec}}{\eta_{\rm enc}}\right)\right)},
\end{align} 
into the lower bound of $L(u, v)$, i.e., 
\begin{align}
    L(U, V) \geq \min_{U,V} L(U, V) = \|Z\|_F^2 + J_{Q^*} = \sum_{i=1}^{d_1} \zeta_i^2 +  \sigma_i^2\lambda_i^{*2}  - \frac{\zeta^2_i \eta_{\rm enc}^2 \lambda_i^{*2} }{ \beta \eta_{\rm dec}^2 + \eta_{\rm enc}^2\lambda_i^{*2} }.
\end{align}
\end{proof}

\subsection{Proof of Proposition~\ref{proposition:min-sigma-enc}}
\begin{proof}
The optimal $\sigma_i$ can be determined by
\begin{align}
    \sigma_i^* = \argmin_{\sigma>0} l_i(\sigma) = \argmin_{\sigma>0} \zeta_i^2 - \left(\zeta_i - \frac{\sqrt{\beta}\sigma_i \eta_{\rm dec}}{\eta_{\rm enc}}\right)^2 \mathbbm{1}_{\zeta_i > \frac{\sqrt{\beta}\sigma_i \eta_{\rm dec}}{\eta_{\rm enc}}} + \beta \eta_{\rm dec}^2 \left(\frac{\sigma_i^2}{\eta^2_{\rm enc}} - 1 - \log \frac{\sigma_i^2}{\eta_{\rm enc}^2}\right).
\end{align}

The gradient of $l_i(\sigma)$ reads,
\begin{align}
    l_i'(\sigma) & = \mathbbm{1}_{\zeta_i > \frac{\sqrt{\beta}\eta_{\rm dec}}{\eta_{\rm enc}}\sigma} \left(\zeta_i - \frac{\sqrt{\beta}\eta_{\rm dec}}{\eta_{\rm enc}} \sigma\right) \frac{\sqrt{\beta}\eta_{\rm dec}}{\eta_{\rm enc}} + \frac{\beta \eta_{\rm dec}^2}{\eta^2_{\rm enc}} \left(\sigma - \frac{\eta^2_{\rm enc}}{\sigma}\right)
\end{align}
Since $l'(\sigma)$ is a increasing function, $l'(\sigma_-) < 0$ when $\sigma_- = \frac{1}{2}\min( \frac{\zeta_i \eta_{\rm enc}}{\sqrt{\beta}\eta_{\rm dec}}, \frac{\sqrt{\beta}\eta_{\rm enc}\eta_{\rm dec}}{\zeta_i} )$, and $l'(\sigma_+) > 0$ when $\sigma_- = 2\max( \frac{\zeta_i \eta_{\rm enc}}{\sqrt{\beta}\eta_{\rm dec}},  2\eta_{\rm enc} )$. Then the minimal value of $l(\cdot)$ is determined when $l'(\sigma) = 0$, that is,
\begin{align}
    \mathbbm{1}_{\sigma < \zeta_i \frac{\eta_{\rm enc}}{\sqrt{\beta}\eta_{\rm dec}}} \frac{\sqrt{\beta}\eta_{\rm dec}}{\eta_{\rm enc}} \zeta_i + \left[1-\mathbbm{1}_{\sigma < \zeta_i \frac{\eta_{\rm enc}}{\sqrt{\beta}\eta_{\rm dec}}} \right] \frac{\beta \eta_{\rm dec}^2}{\eta^2_{\rm enc}} \sigma = \frac{\beta \eta_{\rm dec}^2}{\sigma}.\label{eq:optimal-sigma-enc}
\end{align}
The LHS of Equation~\eqref{eq:optimal-sigma-enc} is a non-decreasing function while the RHS is decreasing function. Then we claim there is a unique solution $\sigma^*$ of Equation~\eqref{eq:optimal-sigma-enc}. The solution breaks down into two situations

\noindent\textbf{Case 1: $\sigma^* < \zeta_i \frac{\eta_{\rm enc}}{\sqrt{\beta}\eta_{\rm dec}}$}

In this case, we have
\begin{align}
\frac{\sqrt{\beta}\eta_{\rm dec}\zeta_i}{\eta_{\rm enc}} = \frac{\beta \eta_{\rm dec}^2}{\sigma}.
\end{align}
Then
\begin{align}
    \sigma^* = \frac{\sqrt{\beta}\eta_{\rm dec}\eta_{\rm enc}}{\zeta_i}.
\end{align}
This solution holds if and only if the following condition holds
\begin{align}
    \frac{\sqrt{\beta}\eta_{\rm dec}\eta_{\rm enc}}{\zeta_i} < \zeta_i \frac{\eta_{\rm enc}}{\sqrt{\beta}\eta_{\rm dec}} \Leftrightarrow \beta \eta_{dec}^2 < \zeta_i^2.
\end{align}

\noindent\textbf{Case 2: $\sigma^* \geq \zeta_i \frac{\eta_{\rm enc}}{\sqrt{\beta}\eta_{\rm dec}}$}

In this case, we have
\begin{align}
\frac{\beta \eta_{\rm dec}^2}{\eta^2_{\rm enc}} \sigma = \frac{\beta \eta_{\rm dec}^2}{\sigma}.
\end{align}
Then
\begin{align}
    \sigma^* = \eta_{\rm enc}.
\end{align}
This solution holds when
\begin{align}
\eta_{\rm enc} > \zeta_i \frac{\eta_{\rm enc}}{\sqrt{\beta}\eta_{\rm dec}} \Leftrightarrow \beta \eta_{dec}^2 \geq \zeta_i^2.
\end{align}
It is easy to check that the two cases above cover all solutions.
\end{proof}

\subsection{Proof of Theorem~\ref{theo: saddle point condition}}
\begin{proof}
The first-order derivatives of $L$ are
\begin{align}
    \frac{\partial L(U, V)}{\partial U} & = 2 \left(UV^\top V - ZV + U\Sigma\right)\\
    \frac{\partial L(U, V)}{\partial V} & = 2 \left(VU^\top U - Z^\top U + \beta \frac{\eta_{\rm dec}^2}{\eta_{\rm enc}^2} V \right)
\end{align}

Then the second-order derivatives of $L$ are
\begin{align}
\frac{\partial^2 L(U, V)}{\partial U_{rs} \partial U_{pq}} & = 2\delta_{pr}\left( \sum_{k=1}^{d_0} V_{ks} V_{kq} + \delta_{qs} \sigma_q^2 \right)\\
\frac{\partial^2 L(U, V)}{\partial V_{rs} \partial U_{pq}} & = 2 \left[ U_{ps} V_{rq} + \left(\sum_{j=1}^{d_1} U_{pj} V_{rj} - Z_{pr} \right) \delta_{qs} \right]\\
\frac{\partial^2 L(U, V)}{\partial U_{rs} \partial V_{pq}} & = 2 \left[ V_{ps} U_{rq} + \left(\sum_{j=1}^{d_1} U_{rj} V_{pj} - Z_{rp} \right)\delta_{qs}\right]\\
\frac{\partial^2 L(U, V)}{\partial V_{rs} \partial V_{pq}} & = 2 \delta_{pr} \left( \sum_{i=1}^{d_2} U_{is} U_{iq} + \beta \frac{\eta_{\rm dec}^2}{\eta_{\rm enc}^2} \delta_{qs} \right).
\end{align}

Letting $U = 0$ and $V = 0$

\begin{align}
\frac{\partial^2 L(U, V)}{\partial U_{rs} \partial U_{pq}} \bigg\rvert_{U=0, V=0} & = 2\delta_{pr}\delta_{qs} \sigma_q^2\\
\frac{\partial^2 L(U, V)}{\partial V_{rs} \partial U_{pq}} \bigg\rvert_{U=0, V=0} & = - 2  Z_{pr} \delta_{qs}\\
\frac{\partial^2 L(U, V)}{\partial U_{rs} \partial V_{pq}} \bigg\rvert_{U=0, V=0} & = - 2 Z_{rp} \delta_{qs}\\
\frac{\partial^2 L(U, V)}{\partial V_{rs} \partial V_{pq}} \bigg\rvert_{U=0, V=0} & = 2 \beta \frac{\eta_{\rm dec}^2}{\eta_{\rm enc}^2}  \delta_{pr} \delta_{qs}.
\end{align}

Then we consider the quadratic form at $U=0, V=0$.  Consider  $\Delta U$ and $\Delta V$ as the perturbation of $U$ and $V$. Then the quadratic form reads

\begin{align}
LQ(\Delta U, \Delta V) = &\left[\sum_{pqrs}  \frac{\partial^2 L(U, V)}{\partial U_{rs} \partial U_{pq}} \bigg\rvert_{U=0, V=0}  \Delta U_{rs} \Delta U_{pq} + \sum_{pqrs} \frac{\partial^2 L(U, V)}{\partial V_{rs} \partial U_{pq}} \bigg\rvert_{U=0, V=0} \Delta V_{rs} \Delta U_{pq} \right. \\ 
& \left. + \sum_{pqrs} \frac{\partial^2 L(U, V)}{\partial U_{rs} \partial V_{pq}} \bigg\rvert_{U=0, V=0} \Delta U_{rs} \Delta V_{pq} + \sum_{pqrs} \frac{\partial^2 L(U, V)}{\partial V_{rs} \partial V_{pq}} \bigg\rvert_{U=0, V=0} \Delta V_{rs} \Delta V_{pq} \right]\\
= & 2 \sum_{pq} \sigma_q^2 \Delta U_{pq}^2 -2 \sum_{pqr} Z_{pr}\Delta U_{pq}\Delta V_{rq} - 2\sum_{pqr} Z_{rp} \Delta U_{rq} \Delta V_{pq} + \sum_{pq} 2\beta \frac{\eta_{\rm dec}^2}{\eta_{\rm enc}^2} \Delta V_{pq}^2\\
= & 2 \left[{\rm Tr} (\Delta U\Sigma (\Delta U)^\top) - 2 {\rm Tr} (Z \Delta V (\Delta U)^\top ) +  \beta \frac{\eta_{\rm dec}^2}{\eta_{\rm enc}^2} {\rm Tr} (\Delta V (\Delta V)^\top) \right]
\end{align}

It suffices to consider the case $\|X\|_F^2 = 1$. Let $\alpha^2= \|\Delta U\|_F^2$ and $\|\Delta V\|_F^2 = 1-\alpha^2$. Then let $u = \Delta U / \alpha$ and $v = \Delta V / \sqrt{1-\alpha^2}$ be the normalized matrix. Plugging in, we obtain
\begin{align}
    LQ(\Delta U, \Delta V) &\propto \sigma^2 ||U||^2  - 2 {\rm Tr} (ZVU^\top ) +  \beta \frac{\eta_{\rm dec}^2}{\eta_{\rm enc}^2} ||V||^2\\
    &= \sigma^2 \alpha^2 - 2 \sqrt{\alpha^2(1-\alpha^2)}{\rm Tr} (u^\top Zv ) +  \beta \frac{\eta_{\rm dec}^2}{\eta_{\rm enc}^2} (1-\alpha^2), 
\end{align}
where we assume $\Sigma = \sigma I$, according to the Theorem~\ref{thm:linear-vae}. Apparently, for any fixed $\alpha$, the middle term is minimized if $u$ is the left eigenvector of $Z$ corresponding to the largest singular value of $Z$, and $v$ is the corresponding right eigenvector. This choice gives
\begin{equation}
    LQ(\Delta U, \Delta V) \propto \sigma^2 \alpha^2 - 2 \sqrt{\alpha^2(1-\alpha^2)} \zeta_{\rm max} +  \beta \frac{\eta_{\rm dec}^2}{\eta_{\rm enc}^2} (1-\alpha^2).
\end{equation}
Minimizing over $\alpha$ shows that
\begin{equation}
    \min_\alpha LQ(\Delta U, \Delta V) \propto \sigma^2 + \beta \frac{\eta_{\rm dec}^2}{\eta_{\rm enc}^2} - \sqrt{\left(\sigma^2 - \beta \frac{\eta_{\rm dec}^2}{\eta_{\rm enc}^2}\right)^2 + 4 \zeta_{\rm max}^2 }
\end{equation}
which is nonnegative if and only if $\zeta_{\rm max}^2 \geq \sigma^2 \beta \frac{\eta_{\rm dec}^2}{\eta_{\rm enc}^2}$. Namely, $\sigma^2 \beta \frac{\eta_{\rm dec}^2}{\eta_{\rm enc}^2} - \zeta_{\rm max}^2 <0$ implies that the origin is a saddle point. Meanwhile, $\sigma^2 \beta \frac{\eta_{\rm dec}^2}{\eta_{\rm enc}^2} - \zeta_{\rm max}^2 >0$ implies that the origin is a local minimum. Notice that this condition coincides with the condition that the origin is a global minimum. Therefore, the origin is the global minimum if and only if the Hessian at the origin is PSD. This finishes the proof.
\end{proof}


\end{document}